\newcommand{\smallblacksquare}{\scalebox{0.5}{$\blacksquare$}}
\newcommand{\nump}{p}
\newcommand{\numuser}{m}
\newcommand{\numdim}{d}
\newcommand{\userind}{i}
\newcommand{\EE}{\ensuremath{\mathbf{E}}}
\newcommand{\RR}{\mathbb{R}}
\newcommand{\Iv}{\mathbf{I}}
\newcommand{\gv}{\mathbf{g}}
\newcommand{\dv}{\mathbf{d}}
\newcommand{\sv}{\mathbf{s}}
\newcommand{\wv}{\mathbf{w}}
\newcommand{\zv}{\mathbf{z}}
\newcommand{\xv}{\mathbf{x}}
\newcommand{\fv}{\mathbf{f}}
\newcommand{\Dc}{\mathcal{D}}
\newcommand{\lambdav}{\bm{\lambda}}
\newcommand{\epsilonv}{\bm{\epsilon}}
\newcommand{\xiv}{\bm{\xi}}
\newcommand{\pa}[1]{\mathsf{A}_{#1}}
\newcommand{\prox}[1]{\mathsf{P}_{#1}}
\newcommand{\env}[1]{\mathsf{M}_{#1}}
\newcommand{\inner}[2]{\left\langle{{#1},{#2}}\right\rangle}
\newcommand{\where}{\mbox{where}}
\newcommand{\zero}{\mathbf{0}}
\newtheorem{lemma}{Lemma}
\newtheorem{definition}{Definition}
\newcommand{\argmin}{\mathop{\textrm{argmin}}}
\newcommand{\fedavg}{\texttt{FedAvg}\xspace}
\newcommand{\MOM}{\texttt{MoM}\xspace}
\newcommand{\fedavgn}{\texttt{FedAvg-n}\xspace}
\newcommand{\FL}{\texttt{FL}\xspace}
\newcommand{\FA}{\fedavg}
\newcommand{\fedprox}{\texttt{FedProx}\xspace}
\newcommand{\FP}{\fedprox}
\newcommand{\fedmom}{\texttt{FedMom}\xspace}
\newcommand{\fedMGDA}{\texttt{FedMGDA}\xspace}
\newcommand{\fedMGDAn}{\texttt{FedMGDA+}\xspace}
\newcommand{\fedMGDAprox}{\texttt{MGDA-Prox}\xspace}
\newcommand{\fedMGDAproxn}{\texttt{MGDA-Prox}\xspace}
\newcommand{\FM}{\fedMGDA}
\newcommand{\FMp}{\texttt{FedMGDA+}\xspace}
\newcommand{\qffl}{\texttt{q-FedAvg}\xspace}
\newcommand{\afl}{\texttt{AFL}\xspace}
\newcommand{\fedma}{\texttt{FedMA}\xspace}
\newcommand{\pfnm}{\texttt{PFNM}\xspace}
\newcommand{\B}{\boldmath}
\newenvironment{manualtheorem}[1]{%
  \manualtheoreminner
}{\endmanualtheoreminner}
\newcommand{\vs}{\textit{vs.}}
\newcommand{\st}{\textit{s.t.}}
\newcommand{\ie}{\textit{i.e.}}
\newcommand{\eg}{\textit{e.g.}}
\newcommand{\etc}{\textit{etc.}}
\title{Federated Learning Meets Multi-objective Optimization}
\author{Zeou~Hu, Kiarash~Shaloudegi, Guojun~Zhang, and~Yaoliang~Yu
\thanks{Zeou Hu and Yaoliang Yu are with the Cheriton School of Computer Science, University of Waterloo, Waterloo, ON, N2L 3G1. E-mail: \{zeou.hu,yaoliang.yu\}@uwaterloo.ca; Kiarash Shaloudegi is with Amazon Advertising. Work was done while at Huawei Noah Ark's Lab,
Montreal, QC, H3N 1X9. E-mail: kiarashs@amazon.com; Guojun Zhang is with Huawei Noah Ark's Lab, Montreal, QC, H3N 1X9. E-mail: guojun.zhang@huawei.com}
}
\begin{document}

\maketitle

\begin{abstract}
Federated learning has emerged as a promising, massively distributed way to train a joint deep model over large amounts of edge devices while keeping private user data strictly on device. In this work, motivated from ensuring fairness among users and robustness against malicious adversaries, we formulate federated learning as multi-objective optimization and propose a new algorithm \FMp that is guaranteed to converge to Pareto stationary solutions. \FMp is simple to implement, has fewer hyperparameters to tune, and refrains from sacrificing the performance of any \emph{participating} user. We establish the convergence properties of \FMp and point out its connections to existing approaches. Extensive experiments on a variety of datasets confirm that \FMp compares favorably against state-of-the-art.
\end{abstract}
\keywords{Pareto optimization, Distributed algorithms,  Federated learning, Edge computing, Machine learning, Neural networks.}

\section{Introduction}\label{sec:intro}

D{eep} learning has achieved impressive successes on a number of domain applications, thanks largely to innovations on algorithmic and architectural design, and equally importantly to the tremendous amount of computational power one can harness through GPUs, computer clusters and dedicated software and hardware. Edge devices, such as smart phones, tablets, routers, car devices, home sensors, \etc, due to their ubiquity and moderate computational power, impose new opportunities and challenges for deep learning. On the one hand, edge devices have direct access to privacy sensitive data that users may be reluctant to share (with say data centers), and they are much more powerful than their predecessors, capable of conducting  a significant amount of on-device computations. On the other hand, edge devices are largely heterogeneous in terms of capacity, power, data, availability, communication, memory, \etc, posing new challenges beyond conventional in-house training of machine learning models. Thus, a new paradigm, known as federated learning (\FL) \citep{McMahanMRHA17} that aims at harvesting the prospects of edge devices, has recently emerged. Developing new \FL algorithms and systems on edge devices has since become a hot research topic in machine learning.

From the beginning of its birth, \FL has close ties to  conventional  distributed optimization. However, \FL emerged from the pressing need to address news challenges in the mobile era that  existing distributed optimization algorithms were not designed for \emph{per se}. We mention the following characteristics of \FL that are most relevant to our work, and refer to the excellent surveys \citep{LiSTS19,YangLCT19, Kairouz2019s} and the references therein for more challenges and applications in \FL.
\begin{itemize}
\item \textbf{Non-IID:} Each user's data  can be distinctively different from every other user's, violating the standard iid assumption in statistical learning and posing significant difficulty in formulating the goal in precise mathematical terms \citep{MohriSS19}. The distribution of user data is often severely unbalanced.
\item \textbf{Limited communication:} Communication between each user and a central server is constrained by network bandwidth, device status, user participation incentive, \etc, demanding a thoughtful balance between computation (on each user device) and communication.
\item \textbf{Privacy:} Protecting user (data) privacy is of uttermost importance in \FL. It is thus not possible to share user data (even to a cloud arbitrator), which adds another layer of difficulty in addressing the previous two challenges.
\item \textbf{Fairness:} As argued forcibly in recent work (\eg, \citealt{MohriSS19,LiSBS20}), ensuring fairness among users has become another serious goal in \FL, as it largely determines users' willingness to participate and ensures some degree of robustness against malicious user manipulations.
\item \textbf{Robustness:} \FL algorithms are eventually deployed in the wild hence subject to malicious attacks. Indeed, adversarial attacks (\eg, \citealt{BagdasaryanVHES20,SunKSM19,BhagojiCMC19}) have been constructed recently to reveal vulnerabilities of \FL systems against malicious manipulations at the user side.
\end{itemize}

In this work, motivated from the last two challenges above, \ie fairness and robustness, we propose a new algorithm \FMp that complements and improves existing \FL systems. \FMp is based on multi-objective optimization and is guaranteed to converge to Pareto stationary solutions. \FMp is simple to implement, has fewer hyperparameters to tune, and most importantly \emph{refrains from sacrificing the performance of any participating user}. We demonstrate the superior performance of \FMp under a variety of metrics including accuracy, fairness, and robustness. 

We summarize our contributions as follows:
\begin{itemize}
\item In \S\ref{sec:bg}, based on the proximal average we provide a novel, unifying and revealing interpretation of existing \FL practices.
\item In \S\ref{sec:MOM}, we summarize some background on multi-objective optimization and point out its connections to existing \FL algorithms. We believe this new perspective will yield more fruitful exchanges between the two fields in the future.
\item In \S\ref{sec:tech}, we propose \FMp that complements existing \FL systems while taking robustness and fairness explicitly into its algorithmic design. We prove that \FMp  converges to a Pareto stationary solution under mild assumptions. 
\item In \S\ref{sec:exp}, we perform extensive experiments to validate the competitiveness of \FMp under a variety of desirable metrics, and to illustrate the respective pros and cons of our and alternative algorithms.
\end{itemize}
We discuss more related work in \S\ref{sec:related} and we conclude in \S\ref{sec:con} with some future directions. 

To facilitate reproducibility, we have released our code at: \url{https://github.com/watml/Fed-MGDA}.

\section{Related Work}
\label{sec:related}
In this section we give a brief review of some recent work that is directly related to ours and put our contributions in context. To start with, \citet{McMahanMRHA17} proposed the first \FL algorithm known as ``Federated Averaging'' (a.k.a., \fedavg{}), which is a  synchronous update scheme that proceeds in several rounds.  At  each round, the  central server sends the current global model to a subset  of users, each of which then uses its respective local data  to update the received model.  
Upon receiving the updated local models from users, the server performs aggregation, such as simple averaging, to update the global model. 
For more discussion on different averaging schemes, see \citet{LiHYWZ20}. \citet{LiSZSTS20} extended \fedavg{}  to better deal with non-i.i.d.~distribution of data, by adding a  ``proximal regularizer''  to the local loss functions and minimizing the Moreau envelope function for each user.  The resulting algorithm \fedprox{}, as pointed out in \S\ref{sec:bg}, is a randomized version of the proximal average algorithm in \citet{Yu13a} and reduces to \fedavg{} when  regularization diminishes.

Analysing \fedavg{} has been a challenging task due to its flexible updating scheme, partial user participation, and non-iid distribution of client data \citet{LiSZSTS20}. The first  theoretical analysis of \fedavg{} for strongly convex and smooth problems with iid and non-iid data appeared in \citet{Stich19} and \citet{LiHYWZ20}, respectively, where the effect of different sampling and averaging schemes on the convergence rate of \fedavg{} was also investigated, leading to the conclusion that such effect becomes particularly important when the dataset is unbalanced and non-iid distributed. In \citet{HuoYGCH20}, \fedavg{} was analyzed for non-convex problems, where \fedavg was formulated as a stochastic gradient-based algorithm with biased gradients, and the convergence of \fedavg with decaying step sizes to stationary points was proved. 
Moreover, \citet{HuoYGCH20} proposed \fedmom, a server-side acceleration based on Nesterov's momentum, and proved again its convergence to stationary points. Lately, \citet{ReddiCZG20} proposed and analyzed federated versions of several popular adaptive optimizers (e.g. \texttt{ADAM}). They generalize the framework of \fedavg{} by decoupling the \FL update scheme into \emph{server optimizer} and \emph{client optimizer}. Interestingly, same as us, \citet{ReddiCZG20} also observed the importance of learning rate decays on both clients and server.

Recently, an interesting work by \citet{PathakWainwright20} demonstrated theoretically that fixed points reached by \fedavg and \fedprox{} (if exist) need not be stationary points of the original optimization problem, even in convex settings and with deterministic updates. To address this issue, they proposed \texttt{FedSplit} to restore the correct fixed points.
It still remains open, though, if \texttt{FedSplit} can still converge to the correct fixed points under asynchronous and stochastic user updates, both of which are widely adopted in practice and studied here.

Ensuring fairness among users has become a serious goal in \FL since it largely determines users’ willingness to participate in the training process. \citet{MohriSS19} argued that existing \FL algorithms can lead to federated models that are biased toward different users. To solve this issue, \citet{MohriSS19} proposed agnostic federated learning (\afl{}) to improve fairness among users. \afl considers the target distribution as a weighted combination of the user distributions 
and optimizes the centralized model for the worse-case realization, 
leading to a saddle-point optimization problem which was solved by a fast stochastic optimization algorithm. On the other hand, based on fair resource allocation in wireless networks, \citet{LiSBS20} proposed q-fair federated learning (\texttt{q-FFL}) to achieve more uniform test accuracy across users. \citet{LiSBS20} further proposed \qffl{} as a communication efficient algorithm to solve \texttt{q-FFL}. 
However, both \afl and \qffl do not explicitly encourage user participation and they suffer from adversarial attacks while our algorithm \FMp is designed to be fair among participants and robust against both additive and multiplicative attacks. 

\fedavg{} 
relies on a coordinate-wise averaging of   local models to update the global model. According to \citet{WangYSPK20}, in neural network (NN) based models, such coordinate-wise averaging might lead to sub-optimal results due to the permutation invariance of NN parameters. To address this issue, \citet{Yurochkin19a} proposed probabilistic
federated neural matching (\pfnm{}), which is only applicable to fully connected feed-forward networks. The recent work \citep{WangYSPK20} proposed federated matched averaging (\fedma{})  as a layer-wise extension of \pfnm{} to accommodate CNNs and LSTMs. However, the Bayesian non-parametric mechanism in \pfnm{} and \fedma{} may be vulnerable to model poisoning attack \citep{BagdasaryanVHES20,BhagojiCMC19,WangSRVASLP20}, while some simple defences, such as norm thresholding and differential privacy, were discussed in \citet{SunKSM19}. 
 We note that these ideas are complementary to \FMp and we plan to investigate possible integrations of them in future work.

Lastly, we note that there is significant interest in standardizing the benchmarks,  protocols and evaluations in \FL, see for instance \citep{Caldas18,He20}. We have spent significant efforts in adhering to the suggested rules there, by reporting on common datasets, open sourcing our code and including all experimental details.

\section{Problem Setup}
\label{sec:bg}
We recall the federated learning (\FL) framework of \citet{McMahanMRHA17} and point out a simple interpretation that seemingly unifies different implementations. We consider \FL with $\numuser$ users (edge devices), where the $\userind$-th user is interested in minimizing a function $f_{\userind}: \RR^{\numdim} \to \RR, \userind = 1, \ldots, \numuser$, defined on a shared model parameter  $\wv\in \RR^{\numdim}$. Typically, each user function $f_{\userind}$ also depends on the respective  user's local (private) data $\Dc_{\userind}$. 
The main goal in \FL is to \emph{collectively and efficiently} optimize \emph{individual} objectives $\{f_{\userind}\}$ while meeting challenges such as those mentioned in the Introduction (\S\ref{sec:intro}): non-iid distribution of user data, limited communication, user privacy, fairness, robustness, \etc.

\citet{McMahanMRHA17} proposed \FA to optimize the arithmetic average of individual user functions:
\begin{align}
\label{eq:wFL}
\min_{\wv\in \RR^{\numdim}} ~ \pa{\fv,\lambdav}^{0}(\wv), ~~\where~~ \pa{\fv,\lambdav}^{0}(\wv) := \sum_{{\userind}=1}^{\numuser} \lambda_{\userind} f_{\userind}(\wv).
\end{align}
The weights $\lambda_{\userind}$ need to be specified \emph{beforehand}. Typical choices include the dataset size at each user, the ``importance'' of each user, or simply uniform, \ie $\lambda_{\userind} \equiv 1/\numuser$. \FA works as follows: At each round, a (random) subset of users is selected, each of which performs $k$ epochs of local (full or minibatch) gradient descent:
\begin{align}
\mbox{for all $\userind$ in parallel}, ~~ \wv^{{\userind}} \gets \wv^{{\userind}} - \eta \nabla f_{\userind}(\wv^{{\userind}}),
\end{align}
and then the weights are averaged at the server side:
\begin{align}
\wv \gets \sum_{{\userind}} \lambda_{\userind} \wv^{{\userind}},
\end{align}
which is finally broadcast to the users in the next round. The number of local epochs $k$ turns out to be a key factor. Setting $k=1$ amounts to solving \eqref{eq:wFL} by the usual gradient descent  algorithm,
while setting $k=\infty$ (and assuming convergence for each local function $f_{\userind}$) amounts to (repeatedly) averaging the respective minimizers of $f_{\userind}$'s. We now give a new interpretation of \FA that yields insights on what it \emph{optimizes} with an intermediate $k$.

Our interpretation is based on the proximal average \citep{BauschkeGLW08}. Recall that the Moreau envelope and proximal map of a convex\footnote{For nonconvex functions, similar results hold once we address multi-valuedness of the proximal map, see \cite{YuZMX15}.} function $f$ is defined respectively as:
\begin{align}
\env{f}^{\eta}(\wv) &= \min_{\xv} ~\tfrac{1}{2\eta}\|\xv - \wv\|_2^2 + f(\xv), \\ \prox{f}^{\eta}(\wv) &= \argmin_{\xv} ~\tfrac{1}{2\eta}\|\xv - \wv\|_2^2 + f(\xv).
\end{align}
Given a set of convex functions $\fv = (f_1, \ldots, f_{\numuser})$ and positive weights $\lambdav = (\lambda_1, \ldots, \lambda_{\numuser})$ that sum to 1, we define the proximal average as the \emph{unique} function $\pa{\fv, \lambdav}^{\eta}$ such that 
$
\prox{\pa{\fv,\lambdav}^{\eta}}^{\eta} = \sum_{\userind} \lambda_{\userind} \prox{f_{\userind}}^{\eta}.
$
In other words, the proximal map of the proximal average is the average of proximal maps. 
More concretely, \citet{BauschkeGLW08} gave the following explicit, albeit complicated, formula for the proximal average:
\begin{align}
\label{eq:pa}
\!\!\pa{\fv,\lambdav}^{\eta}(\wv) &=\! \min_{\wv_1, \ldots, \wv_m} \sum_{i=1}^m \lambda_i \big[f_i(\wv_i) \!+\! \tfrac{1}{2\eta} \|\wv_i\|_2^2 \big ] \!-\! \tfrac{1}{2\eta}\|\wv\|_2^2, \\
&\qquad ~~\st~~ \sum_{i=1}^m \lambda_i \wv_i = \wv.
\end{align}
From the above formula we can easily derive that 
\begin{align}
\pa{\fv,\lambdav}^{0}(\wv) &:= \lim_{\eta \to 0+} \pa{\fv, \lambdav}^{\eta}(\wv) = \sum\nolimits_{\userind} \lambda_{\userind} f_{\userind}(\wv), \nonumber \\ \pa{\fv,\lambdav}^{\infty}(\wv) &:= \lim_{\eta \to \infty} \pa{\fv, \lambdav}^{\eta}(\wv) = \min_{\sum_{\userind} \lambda_{\userind} \wv_{\userind} = \wv} ~ \sum\nolimits_{\userind} \lambda_{\userind} f_{\userind}(\wv_{\userind}).\nonumber
\end{align}
Interestingly, we can now interpret \FA in two extreme settings as minimizing the proximal average: 
\begin{itemize}
\item \FA with $k=1$ local step is exactly the same as minimizing the proximal average $\pa{\fv,\lambdav}^{0}(\wv)$ with $\eta = 0$. This is clear from the objective \eqref{eq:wFL} of \FA (as our notation already suggests). 
\item \FA with $k=\infty$ local steps is exactly the same as minimizing the proximal average $\pa{\fv,\lambdav}^{\infty}(\wv)$ with $\eta = \infty$. Indeed, 
\begin{align}
\left\{ \min_{\wv} ~\pa{\fv,\lambdav}^{\infty}(\wv) \right\}= \min_{\wv_1, \ldots, \wv_m} \sum_i \lambda_i f_i(\wv_i), 
\end{align}
where the right-hand side decouples and hence $\wv_i$ at optimality is a minimizer of $f_i$ (recall that $\lambdav \geq 0$).
\end{itemize}

Therefore, we may interpret \FA with an intermediate $k$ as minimizing $\pa{\fv, \lambdav}^{\eta}$ with an intermediate $\eta$. More interestingly, if we apply the PA-PG algorithm in \citet[Algo.~$2$]{Yu13a} to minimize $\pa{\fv, \lambdav}^{\eta}$, we obtain the simple update rule
\begin{align}
\label{eq:papg}
\wv \gets \sum\nolimits_{\userind} \lambda_{\userind} \prox{f_{\userind}}^{\eta}(\wv),
\end{align}
where the proximal maps are computed in parallel at the user's side. We note that the recent \FP algorithm \citep{LiSZSTS20} is essentially a randomized version of \eqref{eq:papg}. Crucially, we do not need to evaluate the complicated formula \eqref{eq:pa} as the update \eqref{eq:papg} only requires its proximal map, which by definition is the average of the individual proximal maps (computed by each user separately). Moreover, the difference between the proximal average $\pa{\fv,\lambdav}^{\eta}$ and the arithmetic average $\pa{\fv,\lambdav}^{0}$ can be \emph{uniformly} bounded using the Lipschitz constant of each function $f_{\userind}$ \citep{Yu13a}. Thus, for small step size $\eta$, \FA (with any finite $k$) and \FP all minimize some \emph{approximate} form of the arithmetic average in \eqref{eq:wFL}.

How to set the weights $\lambdav$ in \FA has been a major challenge. In \FL, data is distributed in a highly non-iid and unbalanced fashion, so it is not clear if some chosen arithmetic average in \eqref{eq:wFL} would really satisfy one's actual intention. A second issue with the arithmetic average in \eqref{eq:wFL} is its well-known non-robustness against malicious manipulations, which has been exploited in recent adversarial attacks \citep{BhagojiCMC19}. Instead, Agnostic \FL (\afl \citep{MohriSS19}) aims to optimize the worst-case loss:
\begin{align}
\label{eq:AFL}
\min_{\wv} ~ \max_{\lambdav\in\Lambda} ~\pa{\fv,\lambdav}^{0}(\wv),
\end{align}
where the set $\Lambda$ might cover reality better than any specific $\lambdav$ and provide some minimum guarantee for all users (hence achieving mild fairness). On the other hand, the worst-case loss in \eqref{eq:AFL} is perhaps even more non-robust against adversarial attacks. For instance, adding a positive constant to some loss $f_{\userind}$ can make it dominate the entire optimization process.
The recent work \qffl \citep{LiSBS20} proposes an $\ell_q$ norm interpolation between \FA (essentially $\ell_1$ norm) and \afl (essentially $\ell_\infty$ norm). By tuning $q$, \qffl can achieve better compromise than \FA or \afl.

\section{Multi-objective Minimization (\MOM)}
\label{sec:MOM}
Multi-objective minimization (\MOM) refers to the setting where \emph{multiple} scalar objective functions, possibly incompatible with each other, need to be minimized \emph{simultaneously}. It is also called \textit{vector optimization} \citep{Jahn09} because the objective functions can be combined into a single vector-valued function. In mathematical terms, \MOM can be written as
\begin{align}
\label{eq:MOM}
\min_{\wv\in \RR^{\numdim}} ~\fv(\wv) := \left(f_{1}(\wv), f_{2}(\wv), \ldots, f_{\numuser}(\wv)\right), 
\end{align}
where the minimum is defined wrt the \emph{partial} ordering:
\begin{align}
\label{eq:po}
\fv(\wv) \leq \fv(\zv) \iff \forall \userind = 1, \ldots, \numuser, ~f_{\userind}(\wv) \leq f_{\userind}(\zv). 
\end{align}
(We remind that algebraic operations such as $\leq$ and $+$, when applied to a vector with another vector \emph{or scalar}, are always performed component-wise.) 
Unlike single objective optimization, with multiple objectives it is possible that 
\begin{align}
\fv(\wv) \not\leq \fv(\zv) \mbox{ and } \fv(\zv) \not\leq \fv(\wv),
\end{align}
in which case we say $\wv$ and $\zv$ are not comparable. 

We call $\wv^*$ a Pareto optimal solution of \eqref{eq:MOM} if its objective value $\fv(\wv^*)$ is a minimum element (wrt the partial ordering in \eqref{eq:po}), or equivalently for any $\wv$, $\fv(\wv) \leq \fv(\wv^*)$ implies $\fv(\wv) = \fv(\wv^*)$. In other words, it is not possible to improve \emph{any} component objective in $\fv(\wv^*)$ without compromising some other objective. Similarly, we call $\wv^*$ a \emph{weakly} Pareto optimal solution if there does not exist any $\wv$ such that $\fv(\wv) < \fv(\wv^*)$, \ie, it is not possible to improve \emph{all} component objectives in $\fv(\wv^*)$. Clearly, any Pareto optimal solution is also weakly Pareto optimal but the converse may not hold.

We point out that the optimal solutions in \MOM are usually a set (in general of infinite cardinality) \citep{Mukai80}, and without additional subjective preference information, all Pareto optimal solutions are considered equally good (as they are not comparable against each other). This is \emph{fundamentally} different from the single objective case.

From now on, for simplicity we assume all objective functions are continuously differentiable but not necessarily convex (to accommodate deep models). Finding a (weakly) Pareto optimal solution in this setting is quite challenging (already so in the single objective case). Instead, we will contend with Pareto stationary solutions, namely those that satisfy an intuitive first order necessary condition:
\begin{definition}[Pareto-stationarity, \citealt{Mukai80}]
We call $\wv^{*}$ Pareto-stationary iff \emph{some} convex combination of the gradients $\{\nabla f_{\userind}(\wv^{*})\}$ vanishes, \ie there exists some $\lambdav\geq 0$ such that $\sum_i \lambda_i = 1$ and $\sum_i \lambda_i\nabla f_i(\wv^*) = \zero$.
\label{thm:PS}
\end{definition}
\begin{lemma}[\citealt{Mukai80}]
\label{thm:ParetoStat}
Any Pareto optimal solution is Pareto stationary. Conversely, if all functions are convex, then any Pareto stationary solution is weakly Pareto optimal.
\end{lemma}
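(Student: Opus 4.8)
The plan is to treat the two implications separately, both through the lens of \emph{common descent directions}. Throughout I assume, as stated, that each $f_{\userind}$ is continuously differentiable, and I write $C = \mathrm{conv}\{\nabla f_{\userind}(\wv^{*})\}$ for the convex hull of the gradients at the candidate point; by \Cref{thm:PS}, $\wv^{*}$ is Pareto stationary exactly when $\zero \in C$.

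For the forward direction (Pareto optimal $\Rightarrow$ Pareto stationary) I would argue by contraposition, reusing the minimum-norm-point construction already behind \eqref{eq:MGDA}. Let $\gv^{*} = J_{\fv}(\wv^{*})\lambdav^{*}$ with $\lambdav^{*} = \argmin_{\lambdav\in\Delta}\|J_{\fv}(\wv^{*})\lambdav\|$, i.e.\ $\gv^{*}$ is the projection of $\zero$ onto $C$. If $\wv^{*}$ is not Pareto stationary then $\zero\notin C$, so $\gv^{*}\neq\zero$. Applying the projection inequality to each gradient, $\langle \nabla f_{\userind}(\wv^{*}) - \gv^{*},\, \zero - \gv^{*}\rangle \le 0$, which rearranges to $\langle \nabla f_{\userind}(\wv^{*}),\, \gv^{*}\rangle \ge \|\gv^{*}\|^{2} > 0$ for every $\userind$. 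Hence $\dv := -\gv^{*}$ satisfies $\langle \nabla f_{\userind}(\wv^{*}),\, \dv\rangle \le -\|\gv^{*}\|^{2} < 0$ for all $\userind$, a strict common descent direction. A first-order Taylor expansion then gives, for all sufficiently small $t>0$, the simultaneous strict decreases $f_{\userind}(\wv^{*} + t\dv) < f_{\userind}(\wv^{*})$, contradicting weak Pareto optimality (which Pareto optimality implies, as noted above). Equivalently, one may invoke Gordan's theorem of the alternative directly: exactly one of ``$\exists\,\lambdav\in\Delta$ with $\sum_{\userind}\lambda_{\userind}\nabla f_{\userind}(\wv^{*}) = \zero$'' and ``$\exists\,\dv$ with $\langle \nabla f_{\userind}(\wv^{*}),\dv\rangle < 0$ for all $\userind$'' can hold.

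For the converse under convexity (Pareto stationary $\Rightarrow$ weakly Pareto optimal) I would again argue by contradiction, exploiting the first-order convexity inequality. Suppose $\sum_{\userind}\lambda_{\userind}\nabla f_{\userind}(\wv^{*}) = \zero$ for some $\lambdav\in\Delta$, but that $\wv^{*}$ is \emph{not} weakly Pareto optimal, so there is a $\wv$ with $f_{\userind}(\wv) < f_{\userind}(\wv^{*})$ for every $\userind$. Convexity yields $\langle \nabla f_{\userind}(\wv^{*}),\, \wv - \wv^{*}\rangle \le f_{\userind}(\wv) - f_{\userind}(\wv^{*}) < 0$ for all $\userind$. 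Taking the $\lambdav$-weighted combination collapses the left-hand side to $\langle \sum_{\userind}\lambda_{\userind}\nabla f_{\userind}(\wv^{*}),\, \wv - \wv^{*}\rangle = 0$, while the right-hand side is strictly negative (some $\lambda_{\userind}>0$ multiplies a strictly negative term, the rest being nonpositive). This contradiction establishes weak Pareto optimality.

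I expect the converse to be essentially immediate once the convexity bound is written down. The main obstacle is the forward direction: the delicate point is converting the purely algebraic fact $\zero\notin C$ into an \emph{actual} simultaneous decrease of all objectives near $\wv^{*}$. This needs both the separation / minimum-norm argument to manufacture the descent direction $\dv$ and the continuous differentiability of the $f_{\userind}$ to pass from the first-order inner-product inequalities to genuine strict decreases $f_{\userind}(\wv^{*} + t\dv) < f_{\userind}(\wv^{*})$; one must choose a single $t$ valid for all $\userind$ at once, which finiteness of the index set $\{1,\ldots,\numuser\}$ makes possible.
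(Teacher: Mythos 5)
The paper never actually proves this lemma: it is stated in \Cref{sec:MOM} as an imported result with a citation to \parencite{Mukai80}, and no argument appears anywhere in the text. So there is no internal proof to compare against; judged on its own, your proposal is correct and complete, and it supplies exactly the standard argument one would expect behind the citation. For the forward direction, projecting $\zero$ onto the compact convex set $C=\mathrm{conv}\{\nabla f_{\userind}(\wv^*)\}$, using the projection inequality $\langle \nabla f_{\userind}(\wv^*)-\gv^*,\ \zero-\gv^*\rangle\le 0$ to get $\langle \nabla f_{\userind}(\wv^*),\gv^*\rangle\ge\|\gv^*\|^2$, and then descending along $\dv=-\gv^*$ is the classical route (equivalently, Gordan's theorem of the alternative, as you note), and you correctly isolate the one delicate analytic point: a single step length $t>0$ must produce strict decrease for all objectives simultaneously, which finiteness of the index set guarantees. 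In fact you prove something slightly stronger than the stated claim, since your contrapositive shows that \emph{weak} Pareto optimality already implies Pareto stationarity; the lemma then follows because Pareto optimality implies weak Pareto optimality. The convex converse via the first-order inequality $f_{\userind}(\wv)\ge f_{\userind}(\wv^*)+\langle\nabla f_{\userind}(\wv^*),\wv-\wv^*\rangle$ and the $\lambdav$-weighted sum (zero on the left, strictly negative on the right) is likewise correct. A pleasant by-product of your construction is that $\gv^*$ is precisely the MGDA common direction $\dv_t$ of \eqref{eq:MGDA} evaluated at $\wv^*$, so your argument also substantiates the paper's remark after \eqref{eq:fair} that the descent inequalities are tight if and only if the iterate is Pareto stationary.
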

Needless to say, the above results reduce to the familiar ones for the single objective case ($\numuser=1$).

There exist many algorithms for finding Pareto stationary solutions. We briefly review three popular ones that are relevant for us, and refer the reader to the excellent monograph \citep{Mietinen98} for more details.

\textbf{Weighted approach.}
Let $\lambdav\in\Delta$ (the simplex) and consider the following single, weighted objective:
\begin{align}
\label{eq:WA}
\min_{\wv} ~~ \sum_{i=1}^m \lambda_i f_i(\wv).
\end{align}
This is essentially the approach taken by \FA, with any (global) minimizer of \eqref{eq:WA} being weakly Pareto optimal (in fact, Pareto optimal if all weights $\lambda_i$ are positive). 
From \Cref{thm:PS} it is clear that any stationary solution of the weighted scalar problem \eqref{eq:WA} is a Pareto stationary solution of the original \MOM \eqref{eq:MOM}. 
Note that the scalarization weights $\lambdav$, once chosen, are fixed throughout. Different $\lambdav$ leads to different Pareto stationary solutions. 

\textbf{$\epsilon$-constraint.} Let $\epsilonv \in \RR^{m-1}$, $\iota \in \{1, \ldots, m\}$ and consider the following constrained scalar problem:
\begin{align}
\label{eq:eps}
\min_{\wv}~ & f_{\iota}(\wv) \\
\mathrm{s.t.}~ & f_i(\wv) \leq \epsilon_i, ~\forall i \ne \iota.
\end{align}
Assuming the constraints are satisfiable, then any (global) minimizer of \eqref{eq:eps} is again weakly Pareto optimal. The $\epsilon$-constraint approach is closely related to the weighted approach above, through the usual Lagrangian reformulation. Both require fixing an $m-1$ dimensional parameter in advance ($\lambdav$ \vs~$\epsilonv$), though.

\textbf{Chebyshev approach.} Let $\sv\in\RR^m$ and consider the minimax problem (where recall that $\Delta$ is the simplex constraint):
\begin{align}
    \min_{\wv} \max_{\lambdav\in\Delta} ~~ \lambdav^\top (\fv(\wv) - \sv).
\end{align}
Again, any (global) minimizer is weakly Pareto optimal. Here $\sv$ is a fixed vector that ideally lower bounds $\fv$. This is essentially the approach taken by \afl \citep{MohriSS19} with $\sv = \zero$.

\section{\FL as Multi-objective Minimization}
\label{sec:tech}
Having introduced both \FL and \MOM, and observed some connections between the two, it is very natural to treat each user function $f_{i}$ in \FL as a separate objective in \MOM and aim to optimize them \emph{simultaneously} as in \eqref{eq:MOM}. This will be the main approach we follow below, which, to the best of our knowledge, has not been formally explored before (despite of the apparent connections that we saw in the previous section, perhaps retrospectively). 
In particular, we will extend the multiple gradient descent algorithm \citep{Mukai80} in \MOM to \FL, draw connections to existing \FL algorithms, and prove convergence properties of our extended algorithm \FMp. 
Very importantly, the notion of Pareto optimality and stationarity immediately enforces fairness among users, as we are discouraged from improving certain users by sacrificing others. 



To further motivate our development, let us compare to the objective in \afl \citep{MohriSS19}:
\begin{align}
\label{eq:afl}
\min_{\wv} ~ \max_{\lambdav \in \Delta} ~ \lambdav^\top \fv(\wv) \quad \equiv \quad \min_{\wv} ~ \max_{\userind=1, \ldots, \numuser} ~ f_{\userind}(\wv), 
\end{align}
where $\Delta$ denotes the simplex\footnote{To be precise, \afl restricted $\lambdav$ to a subset $\Lambda \subseteq \Delta$. We simply set $\Lambda = \Delta$ to ease the discussion.}. By optimizing the \emph{worst} loss than the \emph{average} loss in \FA, \afl provides some guarantee to all users hence achieving some form of fairness. However, note that \afl's objective \eqref{eq:afl} is not robust against adversarial attacks. In fact, if a malicious user artificially ``inflates'' its loss $f_{\userind}$ (\eg, even by adding/multiplying a constant), it can completely dominate and mislead \afl to solely focus on optimizing its performance. The same issue applies to \qffl \citep{LiSBS20}, albeit with a less dramatic effect if $q$ is small.

\afl's objective \eqref{eq:afl} is very similar to the Chebyshev approach in \MOM (see \Cref{sec:MOM}), which inspires us to propose the following iterative algorithm for solving \eqref{eq:MOM}:
\begin{align}
\label{eq:Cheby}
\tilde\wv_{t+1} = \argmin_{\wv} ~ \max_{\lambdav \in \Delta} ~ \lambdav^\top (\fv(\wv) - \fv(\tilde\wv_t)),
\end{align}
where we \emph{adaptively} ``center'' the user functions using function values from the previous iteration. When the functions $f_i$ are smooth, we apply the quadratic bound to obtain:
\begin{align}
\label{eq:FMp}
\wv_{t+1} = \argmin_{\wv} \max_{\lambdav \in \Delta} \lambdav^\top J^\top_{\fv}(\wv_t)(\wv - \wv_t) 
+ \tfrac{1}{2\upeta}\|\wv- \wv_t\|^2\!,
\end{align}
where $J_{\fv} = [\nabla f_1, \ldots, \nabla f_{\numuser}] \in \RR^{\numdim \times \numuser}$ is the Jacobian and $\upeta>0$ is the step size. Crucially, note that $\fv(\wv_t)$ does not appear in the above bound \eqref{eq:FMp} since we subtracted it off in \eqref{eq:Cheby}. Since \eqref{eq:FMp} is convex in $\wv$ and concave in $\lambdav$  we can swap min with max and obtain the dual:
\begin{align}
\max_{\lambdav\in\Delta} ~ & \min_{\wv} ~ \lambdav^\top J^\top_{\fv}(\wv_t)(\wv - \wv_t) + \tfrac{1}{2\upeta}\|\wv-\wv_t\|^2.
\end{align}
Solving $\wv$ by setting its derivative to $\zero$ we arrive at:
\begin{align}
\label{eq:MGDA}
\wv_{t+1} = \wv_t - \upeta \dv_t, ~~ \dv_t = J_{\fv}(\wv_t) \lambdav_t^*, \\
\where ~~
\lambdav_t^* = \argmin_{\lambdav\in\Delta} ~ \|J_{\fv}(\wv_t)\lambdav\|^2.
\end{align}
Note that $\dv_t$ is precisely the minimum-norm element in the convex hull of the columns (\ie, gradients) in the Jacobian $J_{\fv}$, and finding $\lambdav_t^*$ amounts to solving a simple quadratic program. 
The resulting iterative algorithm in \eqref{eq:MGDA} is known as multiple gradient descent algorithm (MGDA), which has been (re)discovered in \citet{Mukai80,FliegeSvaiter00,Desideri12} and recently applied to multitask learning  in \citet{SenerKoltun18,LinZLZK19} and to training GANs in \citet{AlbuquerqueMDCFM19}. Our concise derivation here reveals some new insights about MGDA, in particular its connection to \afl.

To adapt MGDA to the federated learning setting, we propose the following extensions. 

\textbf{Balancing user average performance and fairness.} We observe that the MGDA update in \eqref{eq:MGDA} resembles \FA, with the crucial difference that MGDA \emph{automatically} tunes the dual weighting variable $\lambdav$ in each step while \FA pre-sets $\lambdav$ based on \emph{a priori} information about the user functions (or simply uniform in lack of such information). Importantly, the direction $\dv_t$ found in MGDA is a common descent direction for \emph{all participating objectives}:
\begin{align}
\vspace{-0.2em}
\label{eq:fair}
\fv(\wv_{t+1}) &\leq \fv(\wv_t) +  J_{\fv}^\top(\wv_t) (\wv_{t+1} - \wv_{t})  
+\tfrac{1}{2\upeta}\|\wv_{t+1} - \wv_{t}\|^2 \nonumber\\
&\leq \fv(\wv_t),
\end{align}
where the first inequality follows from familiar smoothness assumption on $\fv$ while the second inequality follows simply from plugging $\wv= \wv_t$ in \eqref{eq:FMp} and noting that $\wv_{t+1}$ by definition can only decrease \eqref{eq:FMp} even more. It is clear that equality is attained iff $\dv_t = J_{\fv}(\wv_t) \lambdav_t^* = \zero$, \ie, $\wv_t$ is Pareto-stationary (see \Cref{sec:MOM}). In other words, MGDA never sacrifices any participating objective to trade for more sizable improvements over some other objective, something \FA with a fixed weighting $\lambdav$ might attempt to do. On the other hand, \FA with a fixed weighting $\lambdav$ may achieve higher \emph{average performance} under the weighting $\lambdav$. It is natural to introduce the following trade-off between average performance and fairness:
\begin{align}
\label{eq:interpolation}
\mbox{ update \eqref{eq:MGDA} with } 
\lambdav_t^* = \!\!\argmin_{\lambdav\in\Delta, \|\lambdav - \lambdav_0\|_\infty \leq \epsilon} \!\|J_{\fv}(\wv_t)\lambdav\|^2\!\!.
\end{align}
Clearly, setting $\epsilon = 0$ recovers \FA with \emph{a priori} weighting $\lambdav_0$ while setting $\epsilon=1$ recovers MGDA where the weighting variable $\lambdav$ is tuned without any restriction to achieve maximal fairness. In practice, with an intermediate $\epsilon \in (0,1)$ we may strike a desirable balance between the two (sometimes) conflicting goals. Moreover, even with the uninformative weighting $\lambdav_0 = \mathbf{1}/m$, using an intermediate $\epsilon$ allows us to upper bound the contribution of each user function to the common direction $\dv_t$ hence achieve some form of robustness against malicious manipulations.

\textbf{Robustness against malicious users through normalization.} Existing work \citep[\eg,][]{BhagojiCMC19,XieKG19} has demonstrated that the \emph{average} gradient in \FA can be easily manipulated by even a single malicious user. While more robust aggregation strategies are studied recently \citep[see \eg,][]{BlanchardEMGS17,YinCKB18,DiakonikolasKKLSS2019}, they do not necessarily maintain the convergence properties of \FMp (\eg finding a common descent direction and converging to a Pareto stationary solution). 
Instead, we propose to simply normalize the gradients from each user to unit length, based on the following considerations: 
(a) Normalizing the (sub)gradient is common for specialists in nonsmooth and stochastic optimization \citep{AnstreicherWolsey09} and sometimes eases step size tuning. 
(b) Solving the weights $\lambdav_t^*$ in \eqref{eq:MGDA} with normalized gradients still guarantees fairness, \ie, the resulting direction $\dv_t$ is descending for all participating objectives (by a completely  similar reasoning as the remark after \eqref{eq:fair}).
(c) Normalization restores robustness against multiplicative ``inflation'' from any malicious user, which, combined with MGDA's built-in robustness against additive ``inflation'' (see \cref{eq:Cheby}), offers reasonable robustness guarantees against adversarial attacks.

\textbf{Balancing communication and on-device computation.} Communication between user devices and the central server is heavily constrained in \FL, due to a variety of reasons mentioned in \S\ref{sec:bg}. On the other hand, modern edge devices are capable of performing reasonable amount of on-device computations. Thus, we allow each user device to perform multiple local updates before communicating its update $\gv = \wv^{0} - \wv^{\scalebox{0.5}{$\smallblacksquare$}}$, namely the difference between the initial $\wv^{0}$ and the final $\wv^{\scalebox{0.5}{$\smallblacksquare$}}$, to the central server. The server then calls the (extended) MGDA to perform a global update, which will be broadcast to the next round of user devices. We note that similar strategy was already adopted in many existing \FL systems \citep[\eg,][]{McMahanMRHA17,LiSBS20,LiSZSTS20}.

\textbf{Subsampling to alleviate non-iid and enhance throughput.} Due to the massive number of edge devices in \FL, it is not realistic to expect most devices to participate at each or even most rounds. Consequently, the current practice in \FL is to select a (different) subset of user devices to participate in each round \citep{McMahanMRHA17}. Moreover, randomly subsampling user devices can also help combat the non-iid distribution of user-specific data  \citep[\eg,][]{McMahanMRHA17,LiHYWZ20}. 
Here we point out an important advantage of our MGDA-based algorithm: its update is along a common descending direction (see \eqref{eq:fair}), meaning  that the objective of any \emph{participating} user can only decrease. We believe this unique property of MGDA provides strong incentive for users to participate in \FL. To our best knowledge, existing \FL algorithms do not provide similar algorithmic incentives. 
Last but not the least, subsampling also solves a degeneracy issue in MGDA: when the number of participating users exceeds the dimension $\numdim$, the Jacobian $J_{\fv}$ has full row-rank hence \eqref{eq:MGDA} achieves Pareto-stationarity in a single iteration and stops making progress. Subsampling removes this undesirable effect and allows different subsets of users to be continuously optimized.

With the above extensions, we summarize our extended algorithm \FMp in \Cref{alg:FM}, and we prove the following convergence guarantees (precise statements and proofs can be found in \Cref{sec:proof}):

\begin{manualtheorem}{1a}
\label{thm:simp}
Let each user function $f_i$ be $L$-Lipschitz smooth and $M$-Lipschitz continuous, and choose step size $\upeta_t$ so that $\sum_t \upeta_t = \infty$ and $\sum_t \sigma_t \upeta_t < \infty$, where $\sigma_t^2 := \EE \| \dv_t - \hat \dv_t \|^2$ with 
\begin{align}
\dv_t &:= J_{\fv}(\wv_t) \lambdav_t, ~ \lambdav_t = \argmin_{\lambdav\in\Delta} \|J_{\fv}(\wv_t) \lambdav\|, \\
\hat \dv_t &:= \hat J_{\fv}(\wv_t) \hat \lambdav_t, ~ \hat\lambdav_t = \argmin_{\lambdav\in\Delta} \|\hat J_{\fv}(\wv_t) \lambdav\|.
\end{align} 
Then, with $k = r = 1$ we have:
\begin{align}
\min_{t=0, \ldots, T} \EE\|J_{\fv}(\wv_t) \lambdav_t\|^2 \to 0. 
\end{align}
\end{manualtheorem}
Here $k$ is the number of local updates and $r$ is the number of minibatches in each local update. The convergence rate depends on how quickly the ``variance'' term $\sigma_t$ of the stochastic common descent direction $\hat d_t$ diminishes (if at all), which in turn depends on how aggressively we subsample users or how heterogeneous the users are. 

For deterministic gradient updates, we can prove convergence even with more local updates (\ie $k>1$):
\begin{manualtheorem}{1b}
Let each user function $f_i$ be $L$-Lipschitz smooth and $M$-Lipschitz continuous. For any number of local updates $k$, if the global step size $\upeta_t \to 0$ with $\sum_t \upeta_t = \infty$, local learning rate $\eta^{l}_t \to 0$ and $\varepsilon_t := \|\lambdav_t - \hat\lambdav_t\| \to 0$, then we have:
\begin{align}
\min_{t=0, \ldots, T} \|J_{\fv}(\wv_t) \lambdav_t\|^2 \to 0. 
\end{align}
\end{manualtheorem}
Please refer to \Cref{sec:proof} for the precise statement of the theorem and its  proof. 
We note that one natural approach to bound the deviation $\varepsilon_t$ is by applying the $\epsilon$-constrained version of \fedMGDA. 
For example, if  $\left\|\lambdav-\lambdav_{0}\right\|_{\infty} \leq \epsilon_t$, and $\epsilon_t$ is bounded, then $\varepsilon_t \leq 2 \sqrt{m} \epsilon_t$ is also bounded. Thus, $\varepsilon_t \to 0$ when $\epsilon_t \to 0$.
Moreover, 
when $k=1$, we do not need the local learning rate $\eta^{l}_t$ to decay for convergence; in addition, if $\varepsilon_t \equiv 0$ (e.g. in \fedavg), then our convergence guarantee reduces to the usual one for gradient descent, which is  expected since we know \fedavg with $k=1,r=1$ is the same as centralized gradient descent.
Lastly, we note that when $k>1$, local learning rate $\eta^{l}_t$ must vanish in order to obtain convergence. This importance of local learning rate decay is also pointed out in \citet{ReddiCZG20}.

When the functions $f_i$ are convex, we can derive a finer result:
\begin{manualtheorem}{2}
Suppose each user function $f_i$ is convex and $M$-Lipschitz  continuous. Suppose at each round \FMp includes a strongly convex user function whose weight is bounded away from 0. Then, with the choice $\upeta_t = \tfrac{2}{c(t+2)}$ and $k=r=1$, we have 
\begin{align}
\EE\|\wv_{t} - \wv_{t}^*\|^2 \leq \tfrac{4M^2}{c^2(t+3)},
\end{align}
and $\wv_t-\wv_t^* \to 0$ almost surely, 
where $\wv_t^*$ is the nearest Pareto stationary solution to $\wv_t$ and $c$ is some constant.
\end{manualtheorem}
A slightly stronger result where we also allow some user functions to be nonconvex can be found in \Cref{sec:proof}. The same results hold if the gradient normalization is bounded away from 0 (otherwise we are already close to Pareto stationarity). For $r, k > 1$, using a similar argument as in \S\ref{sec:bg}, we expect \FMp to optimize some \emph{proxy problem} (such as the proximal average), and we leave the thorough theoretical analysis for future work.


	   
	   

We remark that convergence rate for MGDA, even when restricted to the deterministic case, was only derived recently in \citet{FliegeVV19}. The stochastic case (that we consider here) is much more challenging and our theorems provide one of the first convergence guarantees for \FMp. 
We wish to emphasize that \FMp is not just an alternative algorithm for \FL practitioners; it can be used as a post-processing step to enhance existing \FL systems or combined with existing \FL algorithms (such as \FP or \qffl). 
This is particularly appealing with \emph{nonconvex} user functions as MGDA is capable of converging to all Pareto stationary points while approaches such as \FA do not necessarily enjoy this property even when we enumerate the weighting $\lambdav_0$ \citep{Mietinen98}. 
Furthermore, it is possible to find multiple or even enumerate all Pareto optimal solutions (\ie the Pareto front). For instance, we may run \FMp multiple times with different random seeds or initializations. As shown by \citet{LinZLZK19}, we could also incorporate additional linear  constraints in \eqref{eq:MGDA} to encode one's preference and encourage more diverse solutions. However, these techniques become less effective in higher dimensions (\ie when the number of users is large) and in communication limited settings. Practically, the server may dynamically adjust the linear constraints in (22) to steer the algorithm to a more desirable Pareto stationary solution.

Lastly, we mention that finding the common descent direction (\ie Line 6 of \Cref{alg:FM}) is a standard quadratic programming (QP) problem that is solved only at the server side. For moderate number of (sampled) users, it suffices to employ a generic QP solver while for large number of users we could also solve $\lambda$ efficiently using for instance the conditional gradient algorithm \citep{SenerKoltun18}, with per-step complexity proportional to the model dimension and the number of participating users. 
For our experiments below, we used a generic QP sovler and we observed that this overhead is negligible, resulting almost the same overall running time for \fedavg and \fedMGDA.

\begin{algorithm}[t]
    \caption{\FMp}\label{alg:FM}
    \DontPrintSemicolon
	\SetKwFunction{client}{\textsc{ClientUpdate}}
	\For{$t=1, 2, \ldots$}{
    	 choose a subset $I_t$ of $\lceil\nump \numuser\rceil$ clients/users\;
        \For{$\userind \in I_t$}{
            $\gv_{\userind} \gets$ \client{$\userind, \wv_t$}\;
            $\bar{\gv}_{\userind} := \gv_{\userind} / \|\gv_{\userind}\|$ \tcp*{\small{normalize}}
            }
            
        $\lambdav^* \gets \argmin_{\lambdav\in\Delta, \|\lambdav - \lambdav_0\|_\infty \leq \epsilon} ~ \|\sum_i \lambda_i \bar{\gv}_i\|^2$    
        
        \vskip.1em    
    	$\dv_t \gets \sum_i \lambda_i^* \bar{\gv}_i$ \tcp*{\small{common direction}}
    	
    	\vskip.3em
    	choose (global) step size $\upeta_t$

    	$\wv_{t+1} \gets \wv_{t} - \upeta_t \dv_t$ \;
	}
	
	\SetKwFunction{client}{\textsc{ClientUpdate}}
    \SetKwBlock{Repeat}{repeat}{}	
    \SetKwProg{Fn}{Function}{:}{}
	\Fn{\client{$\userind, \wv$}}{
    	$\wv^{0} \gets \wv$ \;
	    \Repeat($k$ epochs){
	    \tcp{\small{split local data into $r$ batches}}

	    $\Dc_{\userind} \to \Dc_{\userind,1} \cup \cdots \cup \Dc_{\userind, r}$

	    \For{$j \in \{1, \ldots, r\}$}
	   {
	       $\wv \gets \wv - \eta \nabla f_{\userind}(\wv; \Dc_{\userind, j})$}
	    }
	    \vskip-.3em
	    \KwRet $\gv := \wv^{0} - \wv$ to server
	}
\end{algorithm}

\section{Experiments}
\label{sec:exp}

\subsection{Experimental setups}
\label{sec:setups}
\begin{table*}[th]
\centering
\caption{Dataset summary \label{table:dataset_stat}}
\begin{tabular}{l|ccccc} \toprule
          Dataset         &  Train Clients &  Train samples   & Test clients & Test samples & Batch size                    \\\midrule
          CIFAR-10 \citep{Krizhevsky09}        &   $100$        & $50000$          & $100$        & $10000$      &    $\{10, \infty\}$            \\
          F-MNIST \citep{XiaoRV17}     &  $100$    & $60000$ & $100$ & $10000$ &  $\{10, \infty\}$  \\
        FEMNIST \citep{Caldas18}       &    $3406$      & $709385$         &  $3406$      & $80011$      & $\{20, \infty\}$            \\           
          Shakespeare \citep{LiSZSTS20}   &   $31$   & $92959$ & $31$ & $23255$ &  $\{10\}$  \\
          Adult \citep{Dua2019}       &  $2$    & $32561$ & $2$& $16281$ &  $\{10\}$  \\
 \bottomrule
\end{tabular}
\vspace{5pt}
\end{table*}

\begin{table*}[th]
\centering
\caption{CIFAR-10 model \label{table:cifar-10_model}}
\begin{tabular}{lcccc} \toprule
          Layer &  Output Shape &  $\#$ of Trainable Parameters & Activation & Hyper-parameters  \\\midrule
           Input & $(3, 32, 32)$ & $0$ &  &  \\
           Conv2d & $(64, 28, 28)$ & $4864$ & ReLU & kernel size =$5$; strides=$(1, 1)$ \\
           MaxPool2d & $(64, 14, 14)$ & $0$ &  & pool size=$(2, 2)$ \\
           LocalResponseNorm & $(64, 14, 14)$ & $0$ &  & size=$2$ \\
           Conv2d & $(64, 10, 10)$ & $102464$ & ReLU & kernel size =$5$; strides=$(1, 1)$ \\
           LocalResponseNorm & $(64, 10, 10)$ & $0$ &  & size=$2$ \\
           MaxPool2d & $(64, 5, 5)$ & $0$ &  & pool size=$(2, 2)$ \\
           Flatten & $1600$ & $0$ & & \\
           Dense &  $384$ & $614784$ & ReLU & \\
           Dense &  $192$ & $73920$ & ReLU & \\
           Dense &  $10$ & $1930$ & softmax & \\ \midrule
          Total & & $797962$  & & \\ \bottomrule
\end{tabular}
\end{table*}
\begin{table*}[th]
\centering
\caption{Fashion MNIST model \label{table:fmnist_model}}
\begin{tabular}{lcccc} \toprule
          Layer &  Output Shape &  $\#$ of Trainable Parameters & Activation & Hyper-parameters  \\\midrule
           Input & $(1, 28, 28)$ & $0$ &  &  \\
           Conv2d & $(10, 24, 24)$ & $260$ & ReLU & kernel size =$5$; strides=$(1, 1)$ \\
           MaxPool2d & $(10, 12, 12)$ & $0$ &  & pool size=$(2, 2)$ \\
           Conv2d & $(20, 8, 8)$ & $5020$ & ReLU & kernel size =$5$; strides=$(1, 1)$ \\
           MaxPool2d & $(20, 4, 4)$ & $0$ &  & pool size=$(2, 2)$ \\
           Dropout2d & $(20, 4, 4)$ & $0$ &  & $p=0.5$ \\
           Flatten & $320$ & $0$ & & \\
            Dense &  $50$ & $16050$ & ReLU & \\
            Dropout & $50$ & $0$ &  & $p=0.5$ \\
            Dense &  $10$ & $510$ & softmax & \\ \midrule
          Total & & $21840$  & & \\ \bottomrule
\end{tabular}
\end{table*}
\begin{table*}[th]
\centering
\caption{Federated EMNIST model \citep{ReddiCZG20} \label{table:emnist_model}}
\begin{tabular}{lcccc} \toprule
          Layer &  Output Shape &  $\#$ of Trainable Parameters & Activation & Hyper-parameters  \\\midrule
           Input & $(1, 28, 28)$ & $0$ &  &  \\
           Conv2d & $(32, 26, 26)$ & $320$&  & kernel size =$3$; strides=$(1, 1)$ \\
           Conv2d & $(64, 24, 24)$ & $18496$ & ReLU & kernel size =$3$; strides=$(1, 1)$ \\
           MaxPool2d & $(64, 12, 12)$ & $0$&  & pool size=$(2, 2)$ \\ 
           Dropout & $(64, 12, 12)$ & $0$& & $p=0.25$ \\
           Flatten & $9216$ & $0$ & & \\
           Dense &  $128$ & $1179776$ & & \\
           Dropout & $128$ & $0$ & & $p=0.5$ \\ 
           Dense & $62$ & $7998$ & softmax & \\ \midrule
          Total & &  $1206590$ & & \\ \bottomrule
\end{tabular}
\end{table*}

\begin{table*}[ht]
\footnotesize
\centering
\caption{Hyperparameters used in our experiments.  \label{table:hyper}}
\begin{tabular}{l|l} \toprule
    Name             &  Parameters   \\ \midrule
\afl{}               &   $\gamma_{\lambda} \in \{0.01,0.1,0.2,0.5\}, \gamma_{w} \in \{0.01,0.1\}$\\ \midrule
\qffl                & $q\in \{0.001, 0.01, 0.1, 0.5, 1, 2, 5, 10\}$, $L\in \{0.1, 1, 10\}$ \\\midrule
\fedMGDAn{}          & $\upeta \in \{0.5, 1, 1.5, 2\}$, and $\text{Decay} \in \{0, \frac{1}{40}, \frac{1}{30}, \frac{1}{20}, \frac{1}{10}, \frac{1}{3}, \frac{1}{2} \}$ \\\midrule
\fedavgn{}           & $\upeta \in \{0.5, 1, 1.5, 2\}$, and $ \text{Decay}\in \{0, \frac{1}{40}, \frac{1}{30}, \frac{1}{20}, \frac{1}{10}, \frac{1}{3}, \frac{1}{2}\}$ \\\midrule
\fedprox{}           & $\mu \ \in \{0.001, 0.01, 0.1, 0.5, 1, 10\}$ \\\midrule
\fedMGDAproxn{}      & $\mu=0.1$, $\upeta \in \{0.5, 1, 1.5, 2\}$, and  $ \text{Decay} \in \{0, \frac{1}{40}, \frac{1}{30}, \frac{1}{20}, \frac{1}{10}, \frac{1}{5}, \frac{1}{3}, \frac{1}{2}\}$ \\ \bottomrule
\end{tabular}
\vspace{5pt}
\end{table*}

In this subsection we provide experimental details including dataset descriptions, sampling schemes, model configurations and hyper-parameter settings. A quick summary of the datasets that we use can be found in \Cref{table:dataset_stat}. 
We have two parameters in \FMp to control the total number of local updates in each communication round: $k$, the number of local epochs, and $r = n / b$, the number of updates in each local epoch. Here $n$ is the number of samples at each user (assumed the same for simplicity) while $b$ is the minibatch size for each local update. As observed by, \eg, \citet{McMahanMRHA17} (Table 2), having a larger $k$ is similar as having a smaller $b$ (or equivalently a larger $r$), in terms of total number of local updates. Moreover, $k=1$ with a suitable $b$ usually leads to satisfying performance while very large $k$ can result in plateau or divergence. Thus, in our experiments we fix $k=1$ while vary $b$ to reduce the total number of hyperparameters. This corresponds to a single pass of the local data at each user in every communication round.

\subsubsection{CIFAR-10 \citep{Krizhevsky09} and Fashion MNIST \citep{XiaoRV17} datasets}
\label{sec:image_exp_setup}
In order to create a non-i.i.d. dataset, we follow a similar sampling procedure as in \citet{McMahanMRHA17}: first we sort all data points according to their classes. Then, they are split into $500$ shards, and each user is randomly assigned $5$ shards of data. By considering $100$ users, this procedure guarantees that no user receives data from more than $5$ classes and the data distribution of each user is different from each other. The local datasets are balanced--all users have the same amount of training samples.  The local data is split into train, validation, and test sets with percentage of $80$\%, $10$\%, and $10$\%, respectively. In this way, each user has $400$ data points for training,  $50$ for test,  and $50$ for  validation. We use a CNN model which resembles the one in \citet{McMahanMRHA17}, with two convolutional layers followed by three fully connected layers. The details are included in \Cref{table:cifar-10_model} for CIFAR-10 and in \Cref{table:fmnist_model} for Fashin MNIST. To update the local models at each user using its local data, we apply stochastic gradient descent (SGD) with local batch size $b=10$, local epoch $k=1$, and local learning rate $\eta=0.01$, or $b=400$, $k=1$, and $\eta=0.1$. To model the fact that not all users may participate in each communication round,  we employ a parameter $p$ to control the fraction of participating users: $p=0.1$ is the default setting which means that only $10$\% of users participate in each communication round.

\subsubsection{Federated EMNIST dataset \citep{Caldas18}}
\label{sec:emnist_exp_setup}
For this experimental setup, we use the same dataset, model, and hyper-parameters as \citet{ReddiCZG20}. We use the federated EMNIST dataset of \citet{Caldas18}. The dataset consists of images of digits, and English characters---both lower and upper cases, with 62  classes in total. The images are partitioned by their authors in a way that naturally makes the dataset heterogeneous and unbalanced. We use the  model described in \Cref{table:emnist_model} and  the following  hyper-parameters: local learning rate $\eta=0.1$ and selecting $10$ clients per communication round as recommended. The only difference between our setup and the one in \citep{ReddiCZG20} is that we use local epoch $k=1$ for all algorithms.

\subsubsection{Shakespeare dataset \citep{LiSZSTS20}}
\label{sec:shakespeare_exp_setup}
For experiments on the Shakespeare dataset, we use the same model, data pre-processing and sampling procedure as in \qffl{} paper \citep{LiSBS20}. The dataset is built from \emph{The Complete Works of William Shakespeare}, where each role in the play represents one user. Following \citet{LiSZSTS20}, we subsample $31$ users to train a neural language model for next character prediction. Each character is embedded in an $8$-dimensional space and the sequence length is $80$ characters. The model we use is a two-layer LSTM (with hidden size $256$) followed by one dense layer \citep{McMahanMRHA17,LiSZSTS20}. Joint hyper-parameters that are shared by all algorithms include: total communication rounds $T=200$, local batch size $b=10$, local epoch $k=1$, and local optimizer being SGD, unless otherwise stated.

\subsubsection{Adult dataset \citep{Dua2019}}
\label{sec:adult_exp_setup}
Following the setting in \afl{} \citep{MohriSS19}, we split the Adult dataset into two non-overlapping domains based on the \textit{education} attribute---\texttt{phd} domain and \texttt{non-phd} domain. The resulting \FL setting consists of two users each of which has data from one of the two domains. Further, data is pre-processed as in \citet{LiSBS20} to have $99$ binary features. We use a logistic regression model for all \FL algorithms mentioned in the main paper. Local data is split into train, validation, and test sets with percentage of $80$\%, $10$\%, and $10$\%, respectively. In each round, both users participate and the server aggregates their losses and gradients (or weights). Joint hyper-parameters that are shared by all algorithms include: total communication rounds $T=500$, local batch size $b=10$, local epoch $k=1$, local learning rate $\eta=0.01$, and local optimizer being SGD without momentum, unless otherwise stated. Algorithm-specific hyper-parameters will be mentioned in the appropriate places below. One important note is that the \texttt{phd} domain has only $413$ samples while the \texttt{non-phd} domain has $32,148$ samples, so the split is very unbalanced while training \emph{only} on the \texttt{phd} domain yields inferior performance on all domains due to the insufficient sample size.

\subsubsection{Hyper-parameters}
\label{sec:hyper}

We evaluate the performance of different algorithms with a wide range of hyper-parameters, summarized in \Cref{table:hyper}. 
In particular, following \citet{AnstreicherWolsey09} we tried sublinear  $O(1/t)$ and exponential decay $O(\beta^t)$ learning rates $\upeta$ on the server, and a fixed local learning rate $\eta$ for client updates. 
Eventually we settled on decaying $\upeta_t$ by a factor of $\beta$ every $100$ steps: $\upeta_t =\beta^{[\frac{t}{100}]}$, where $\beta = \texttt{decay}^{100/T}$ and $T$ is the total number of communication rounds (with e.g. \texttt{decay} = $1/10$). 
We note that \citet{ReddiCZG20} also found exponential decay to be most effective in their experiments. 
We use grid search to choose suitable local learning rates for all algorithms.

\begin{figure*}[t]
\centering
\includegraphics[width=1.0\textwidth, valign=t]{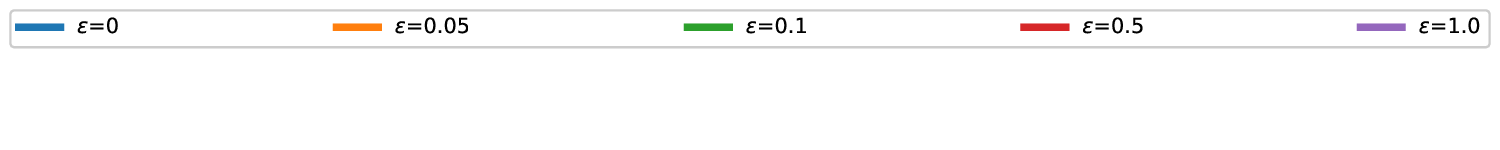}
\vspace{-25pt}

\includegraphics[width=0.24\textwidth, valign=t]{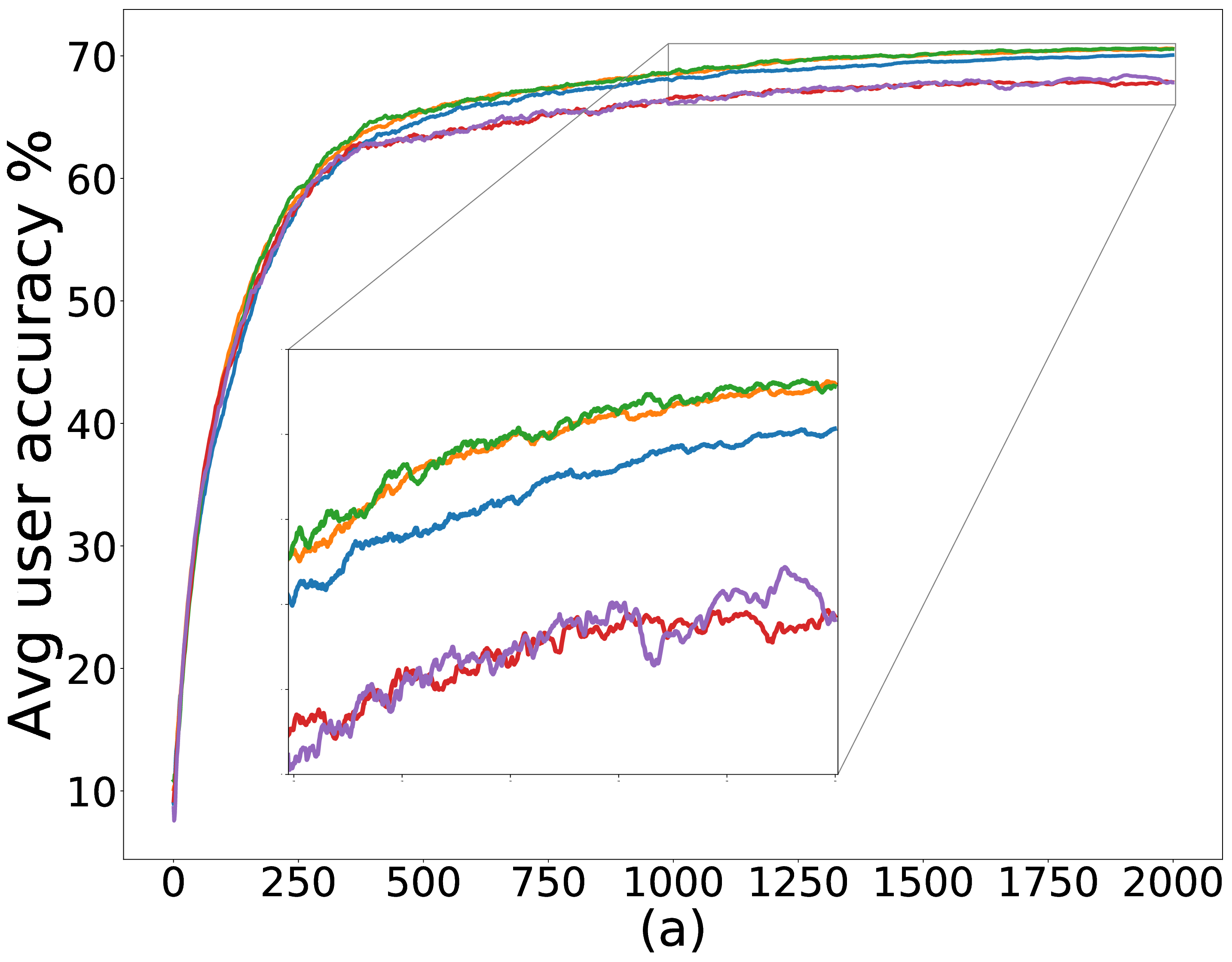}
\includegraphics[width=0.24\textwidth, valign=t]{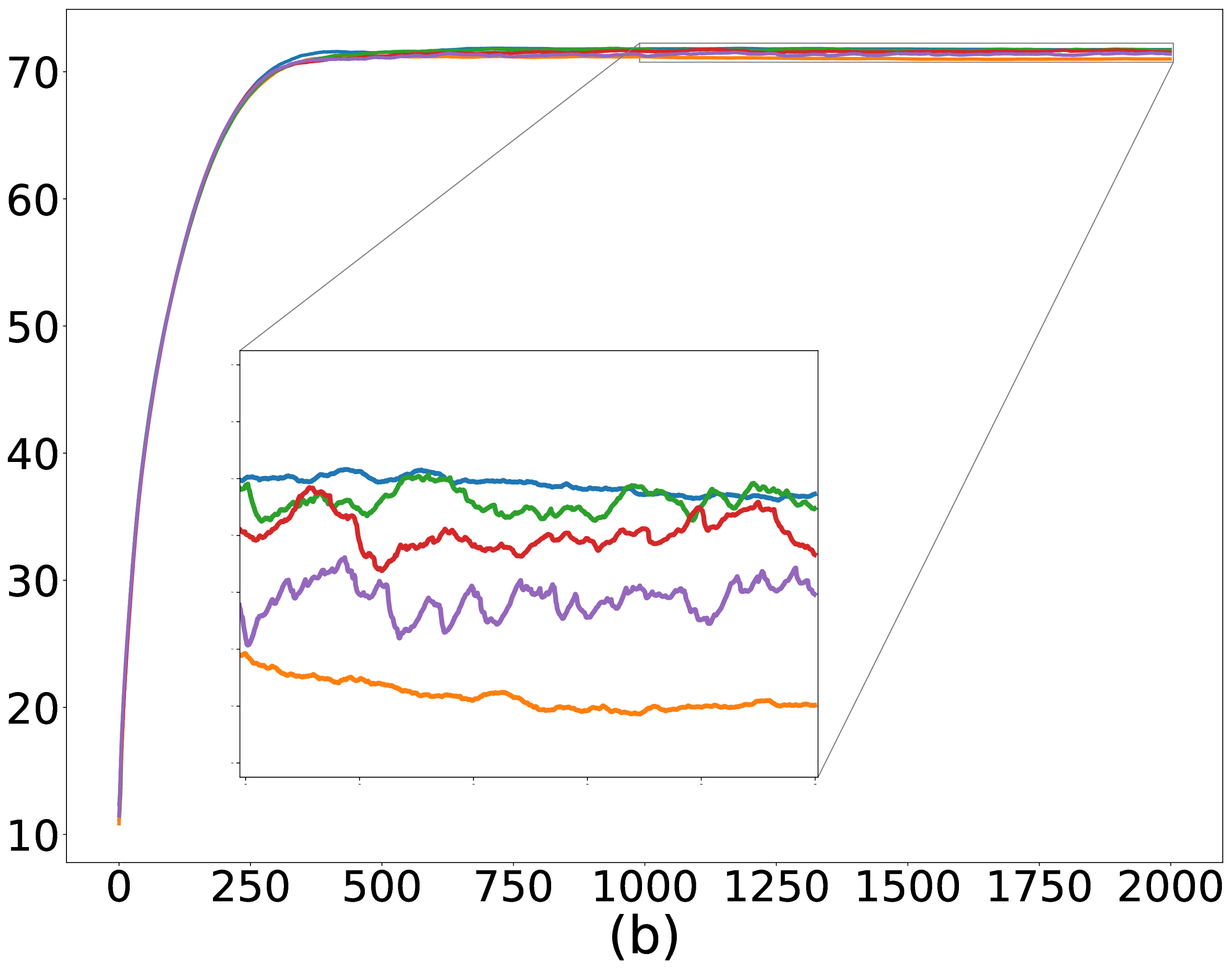}
\includegraphics[width=0.24\textwidth, valign=t]{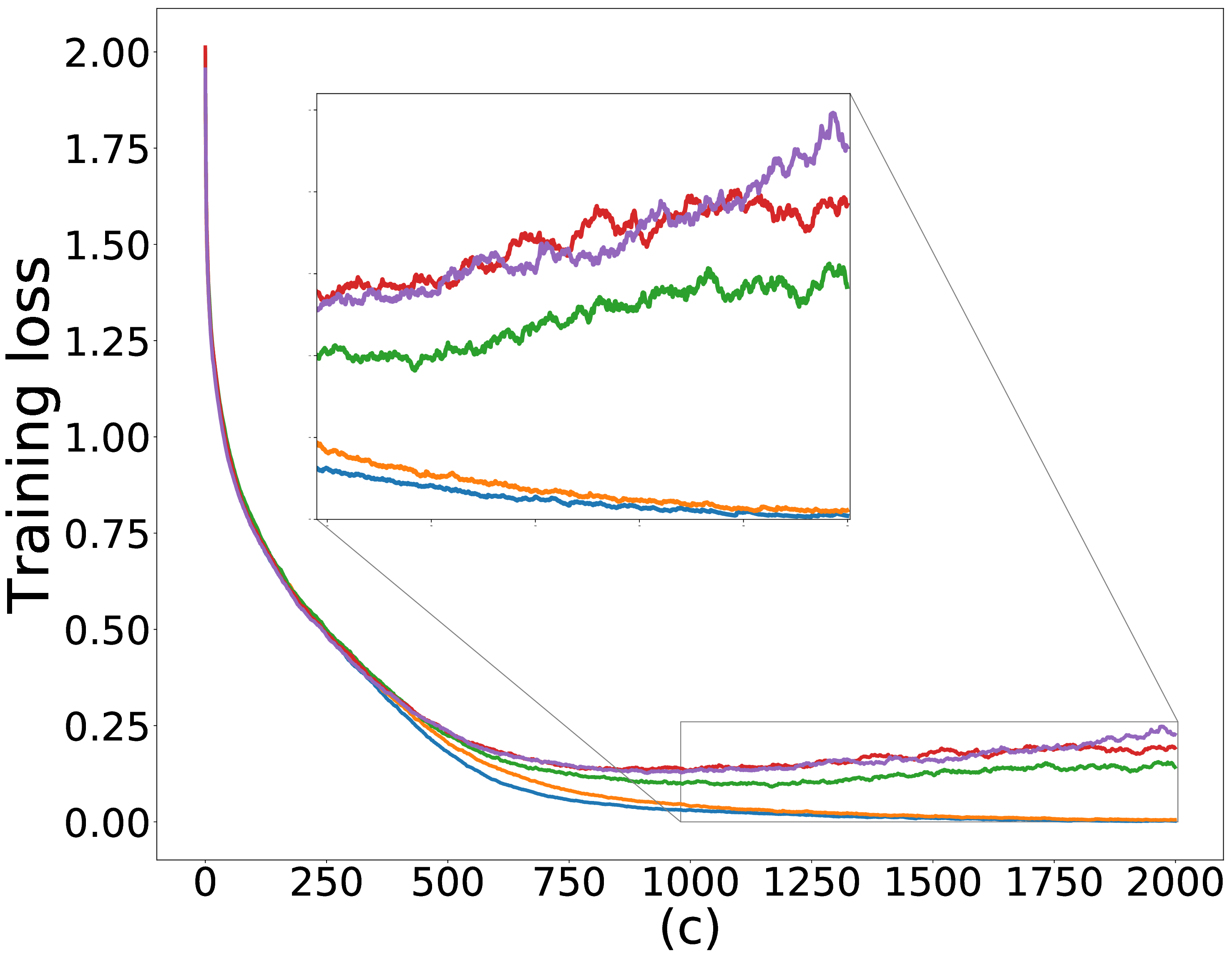}
\includegraphics[width=0.24\textwidth, valign=t]{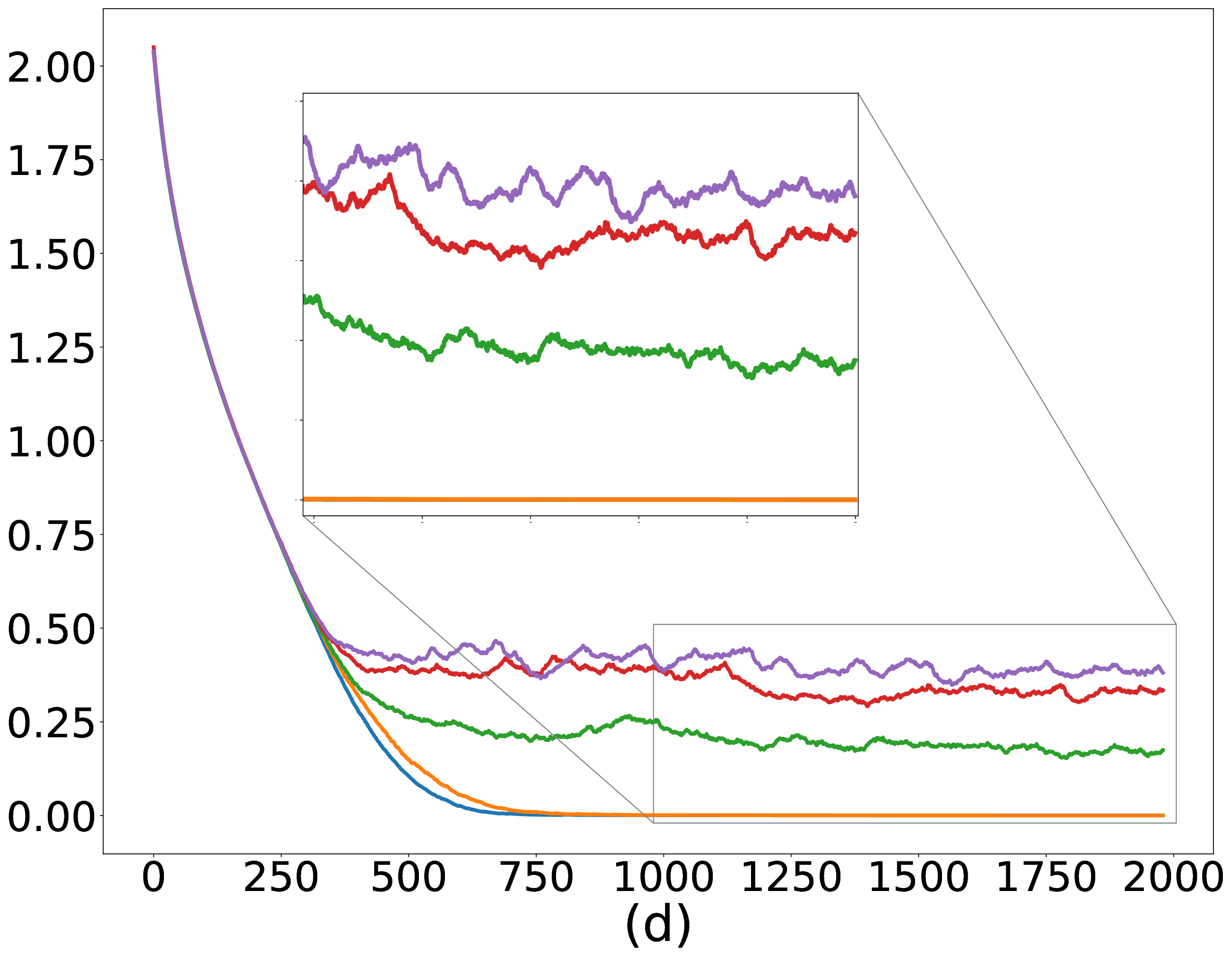}
\caption{Interpolation between \FA and \fedMGDA{} on CIFAR-10. $x$-axis is the number of communication rounds. From left to right: (\textbf{a}) and (\textbf{b}) Average user accuracy in non-iid/iid setting resp.  
(\textbf{c}) and (\textbf{d}) Uniformly averaged training loss in non-iid/iid setting resp. Results are averaged over $5$ runs with different random seeds.
}
\label{fig:interpolation_FA}
\end{figure*}
\begin{figure*}[t]
\centering
\includegraphics[width=0.425\textwidth]{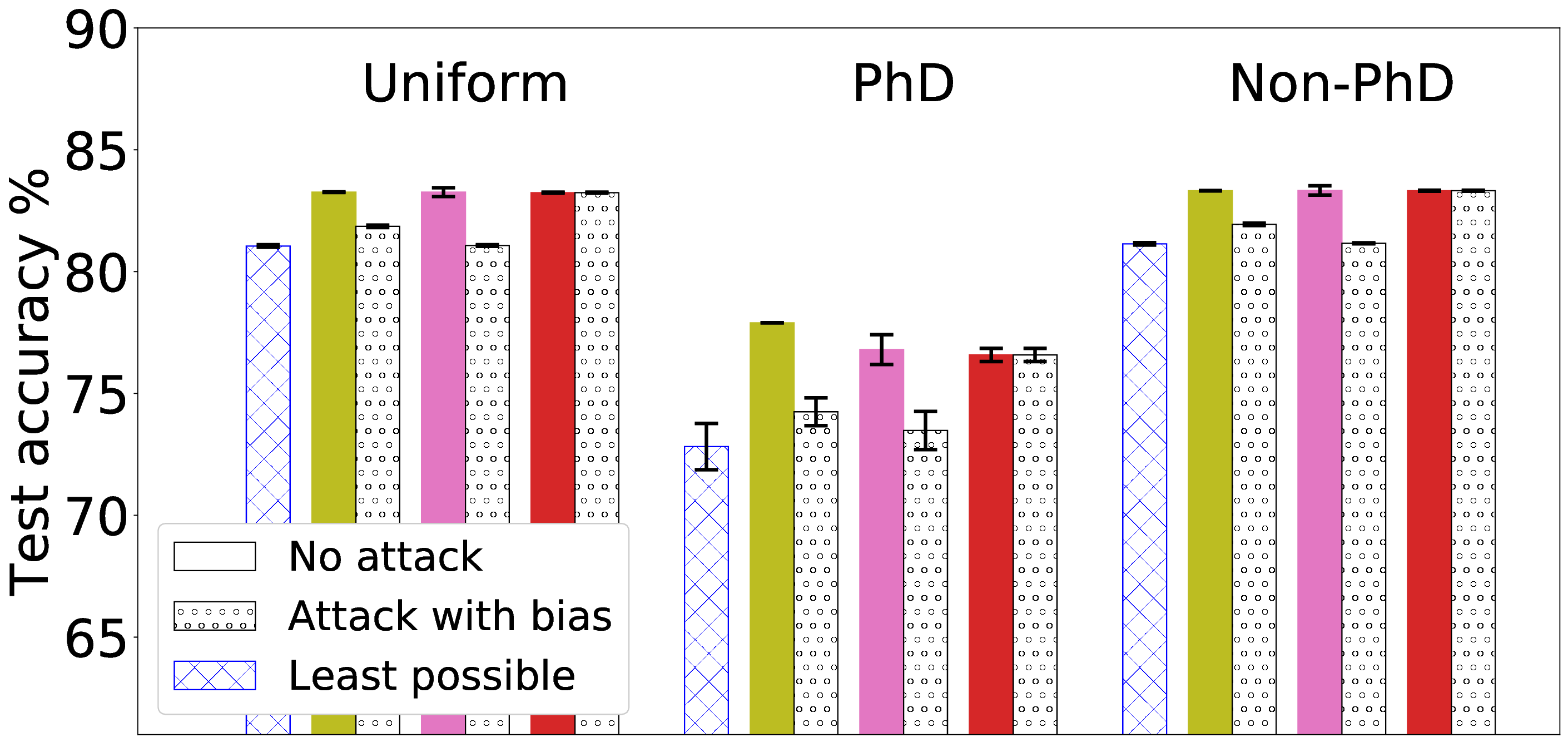}
\includegraphics[width=0.125\textwidth]{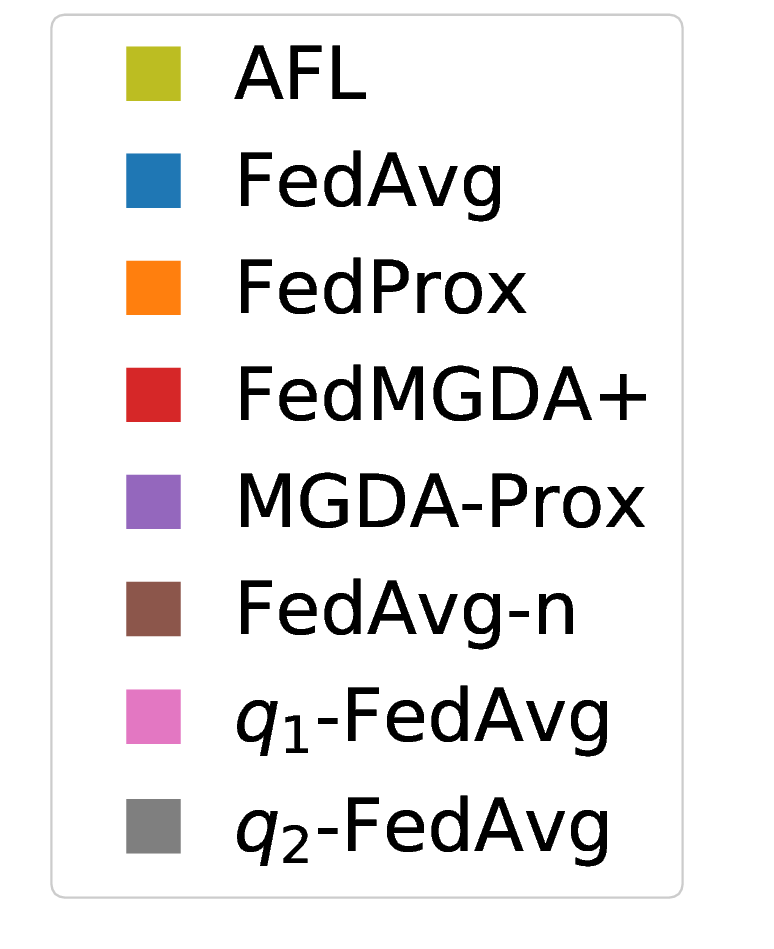}
\includegraphics[width=0.425\textwidth]{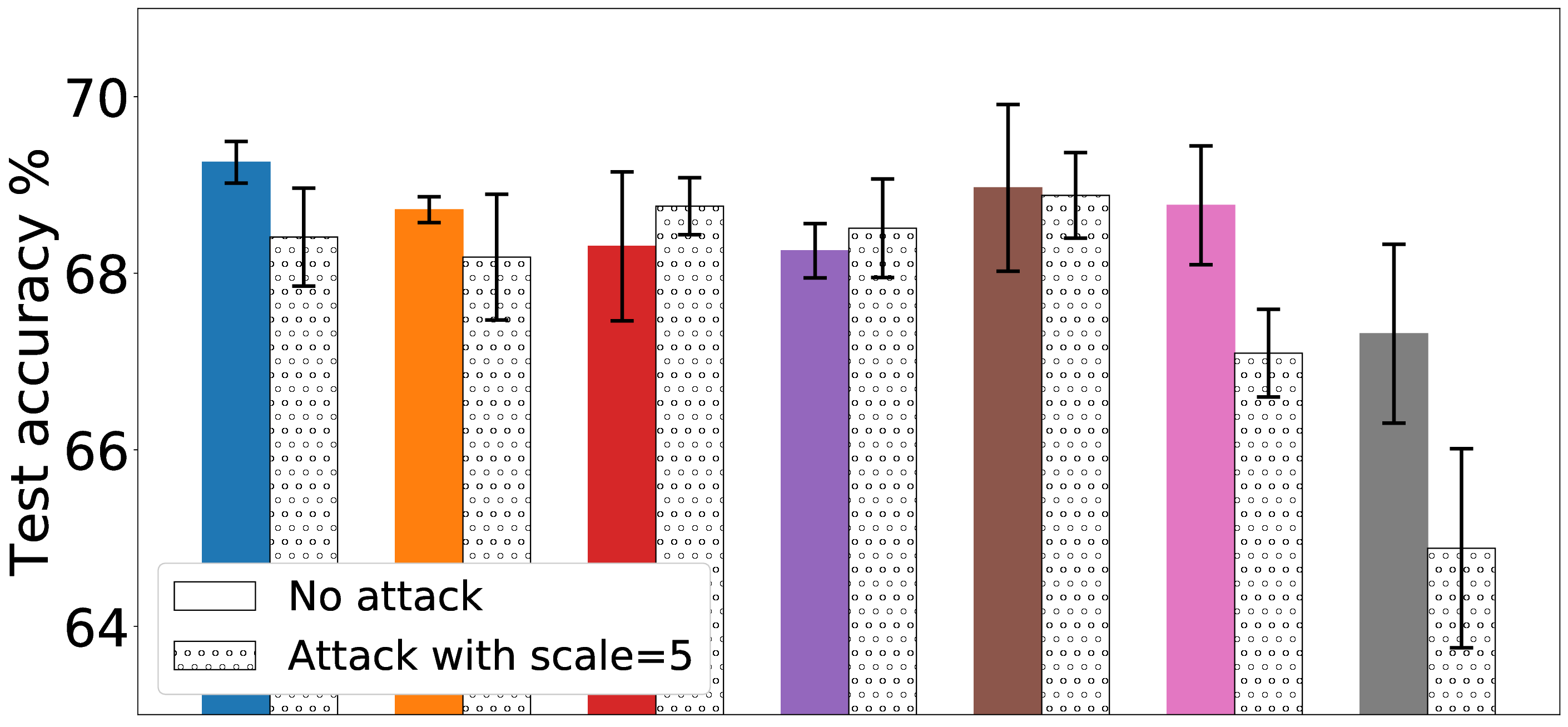}
\caption{\small 
 (\textbf{Left}) Test accuracy of SOTA algorithms on Adult dataset with adversarial biases added to the loss of PhD domain; and compared to the baseline of training only on PhD domain. The scales of biases for \afl{} and \qffl{} are different because \afl uses averaged loss while \qffl{} uses (non-averaged) total loss.
(\textbf{Right}) Test accuracy of different algorithms on CIFAR-10 in the presence of a malicious user who scales its loss function with a constant factor. 
All algorithms are run for $500$ rounds on Adult and $1500$ rounds on CIFAR-10. The reported results are averaged across $5$ runs with different random seeds. For detailed hyperparameter setting, see \Cref{sec:bias_attack}.
}
\label{figure:attack}
\end{figure*}

We evaluate our algorithm \FMp on several public datasets:  CIFAR-10 \citep{Krizhevsky09}, F-MNIST \citep{XiaoRV17}, Federated EMNIST \citep{Caldas18}, Shakespeare \citep{LiSZSTS20} and Adult \citep{Dua2019}, and compare to existing \FL systems including \fedavg{} \citep{McMahanMRHA17}, \fedprox{} \citep{LiSZSTS20}, \qffl{} \citep{LiSBS20}, and \afl{}\footnote{Experiments of \afl in the original work \citep{MohriSS19} and later work that compare with it (e.g. \citep{LiSBS20}) was reported on datasets with very few clients ($2$ or $3$), possibly due to applicability reasons. We followed this convention in our work.} \citep{MohriSS19}. In addition, from the discussions in \S\ref{sec:tech}, one can envision several potential extensions of existing algorithms to improve their performance. So, we also compare to the following extensions: \fedavgn which is \fedavg{} with  gradient normalization, and \fedMGDAproxn{} which is \fedMGDAn{} with a proximal regularizer added to each user's loss function.\footnote{One can also apply the gradient normalization idea to \qffl{}; however, we observed from our experiments that the resulting algorithm is unstable particularly for large $q$ values.} We distinguish between \fedMGDAn{} and \fedMGDA{} which is a vanilla extension of MGDA to \FL. 

We point out that \FL algorithms are to be deployed on smart devices with moderate computational capabilities. Thus, the models we chose to experiment on are medium-sized (see \Cref{table:cifar-10_model,table:fmnist_model,table:emnist_model} for details), with similar complexity to the ones in \fedavg{}, \qffl{} and \afl. Due to space limits we only report some representative results in the main paper, and defer the full set of experiments to \Cref{sec:exp_appendix}.


\subsection{Experimental results}
In this subsection we report experimental results about our proposed algorithm \FMp and compare it with state-of-the-art (SOTA) alternatives under a variety of performance metrics, including accuracy, robustness and fairness. We remind that the accuracy metric is exactly what \FA aims to optimize during training, and hence it has some advantage in this metric over other alternative algorithms such as \FMp, \afl, and \qffl, which all aim to bring some fairness among users, perhaps at some occasional, and hopefully small, loss of accuracy.

\subsubsection{Recovering \FA}
As mentioned in \S\ref{sec:tech}, we can control the balance between the user average performance and fairness by tuning  the $\epsilon$-constraint in \cref{eq:interpolation}. 
Setting $\epsilon=0$ recovers \FA while setting $\epsilon=1$ recovers \fedMGDA. To verify this empirically,  we run \eqref{eq:interpolation} with different $\epsilon$, and report results on CIFAR-10 in  \Cref{fig:interpolation_FA} for both  iid and non-iid distributions of data (for results on F-MNIST, see \Cref{sec:fminist}).  These results confirm that changing $\epsilon$ from $0$ to $1$ yields an interpolation between \FA and \fedMGDA, as expected. Since \FA essentially optimizes the (uniformly) averaged training loss, it naturally performs the best under this metric (\Cref{fig:interpolation_FA} (c) and (d)). Nevertheless, it is interesting to note that some intermediate $\epsilon$ values actually lead to better user accuracy than \FA in the \emph{non-iid} setting (\Cref{fig:interpolation_FA} (a)).

\begin{figure*}[t]
\includegraphics[width=0.44\textwidth]{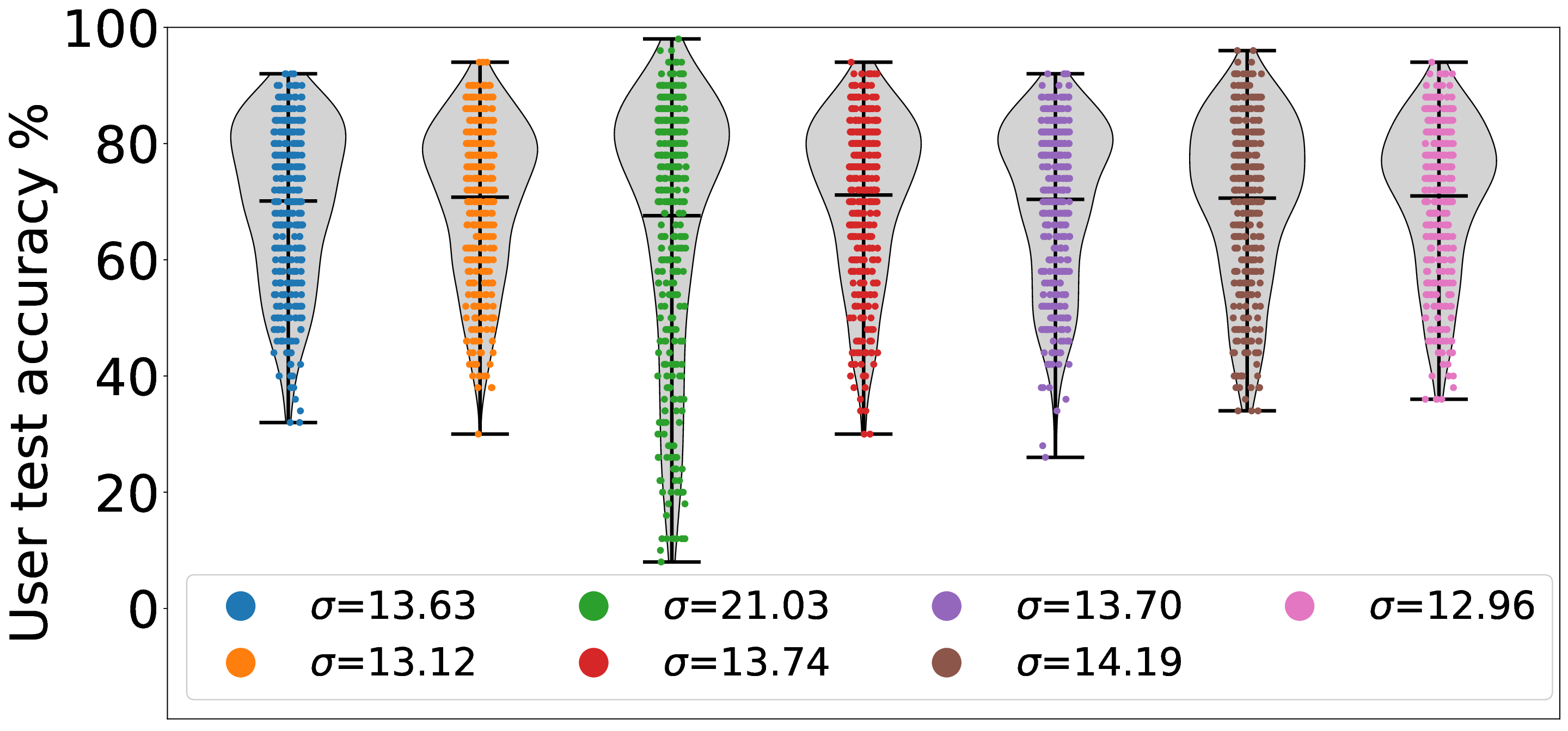} 
\includegraphics[width=0.15\textwidth]{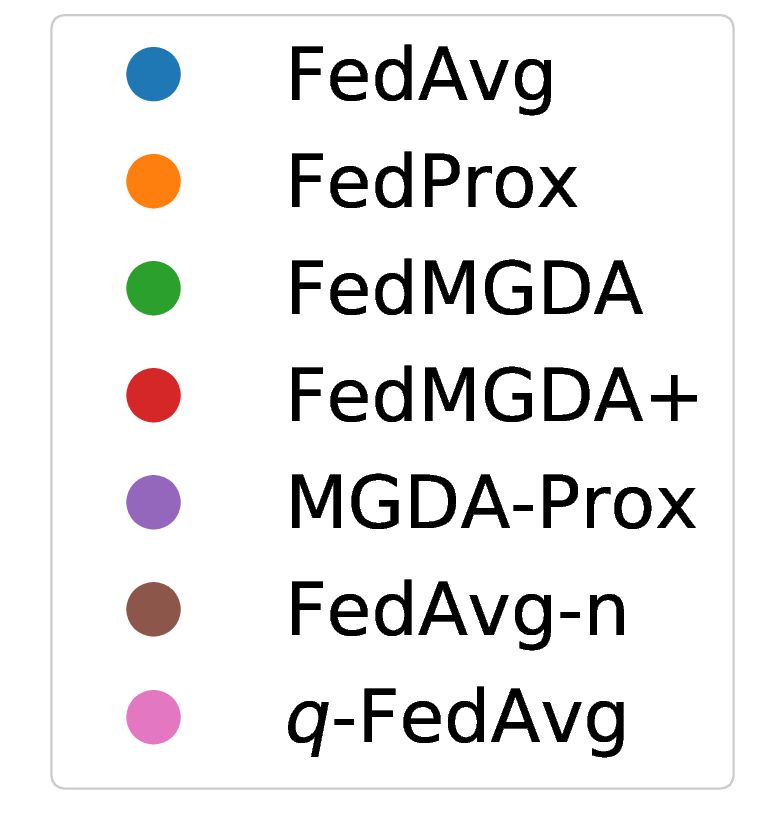} 
\includegraphics[width=0.44\textwidth]{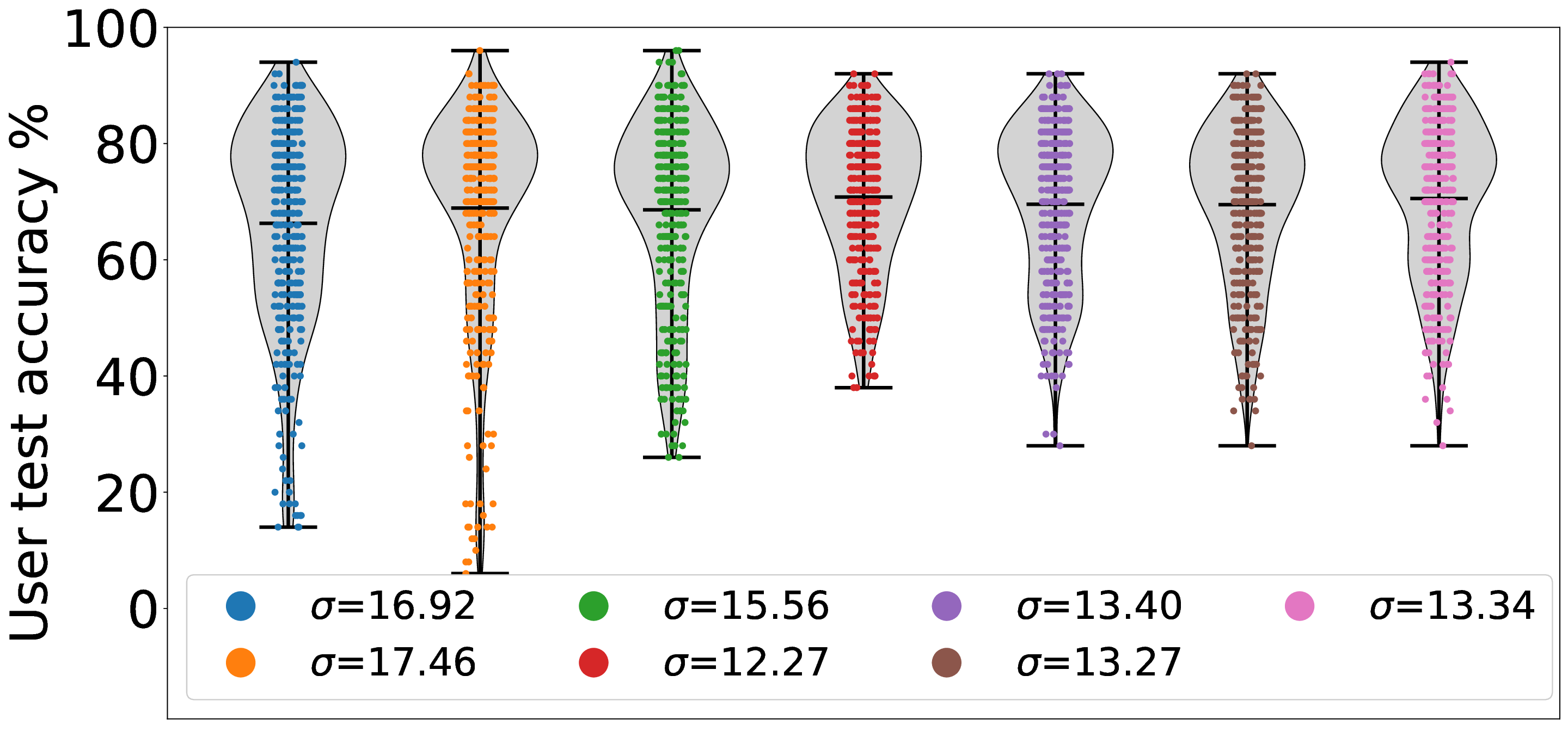}
\caption{\small Distribution of the user test accuracy on CIFAR-10: (\textbf{Left}) the algorithms are run for $2000$ communication rounds and $b=10$. The hyperparameters are: $\mu=0.01$ for \fedprox;  $\upeta=1.5$ and $\text{decay}=1/10$ for \fedMGDAn{} and \fedavg{}; $\upeta=1.0$ and $\text{decay}=1/10$  for \fedMGDAproxn{}; $q=0.5$ and $L=1.0$ for \qffl{}. (\textbf{Right}) the algorithms are run for  $3000$ communication rounds and $b=400$. The hyperparameters are:  $\mu=0.5$ for \fedprox; $\upeta=1.0$ and $\text{decay}=1/40$ for \fedMGDAn{}, \fedMGDAproxn{}, and \fedavg{};  $q=0.1$ and $L=0.1$ for \qffl{}. The reported statistics are averaged across $4$ runs with different random seeds. \label{figure:cifar-user}} 
\end{figure*}
\begin{figure*}[t]
\centering
\includegraphics[width=0.44\textwidth, valign=t]{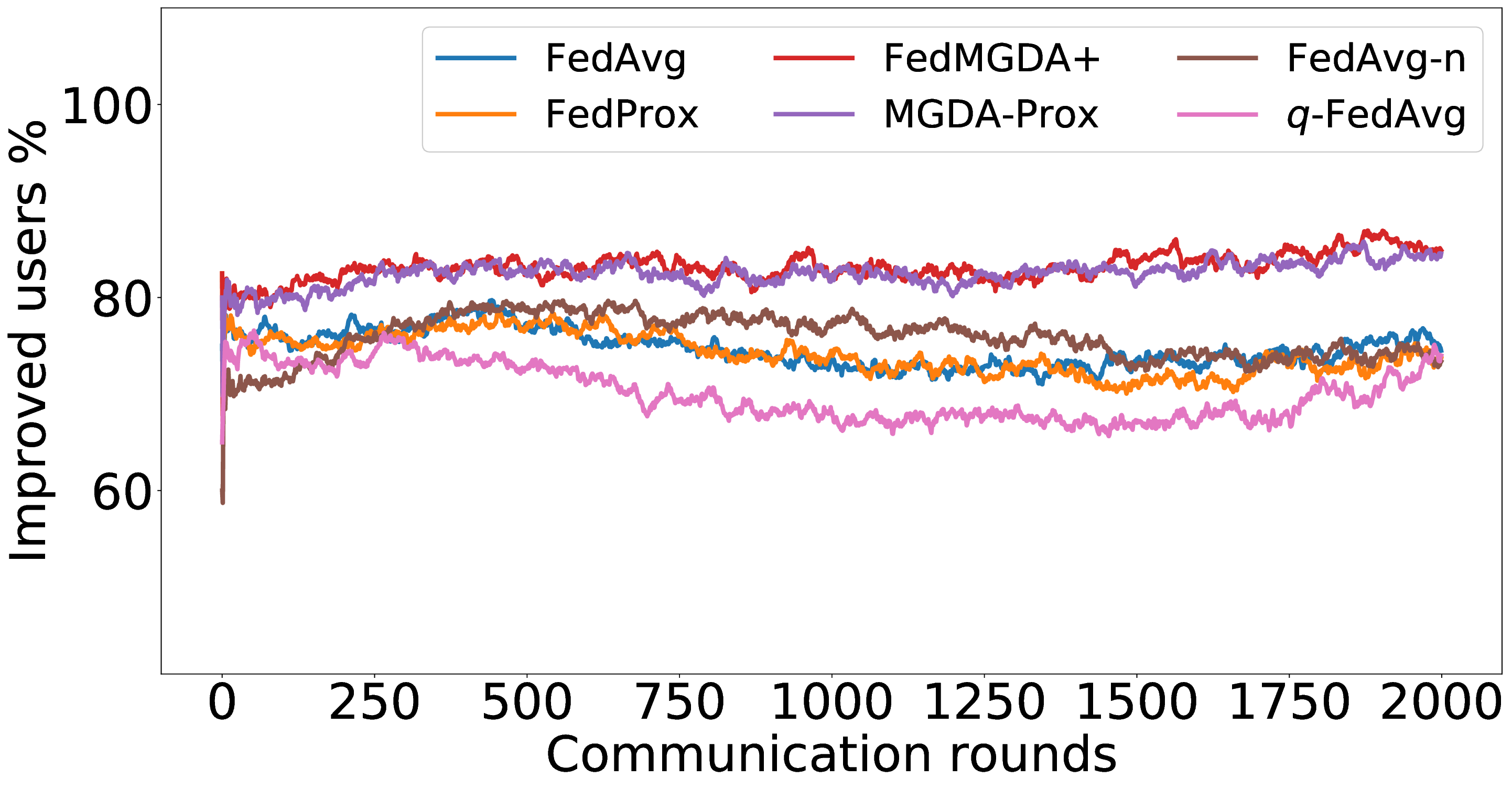} 
\hspace{2em}
\includegraphics[width=0.44\textwidth, valign=t]{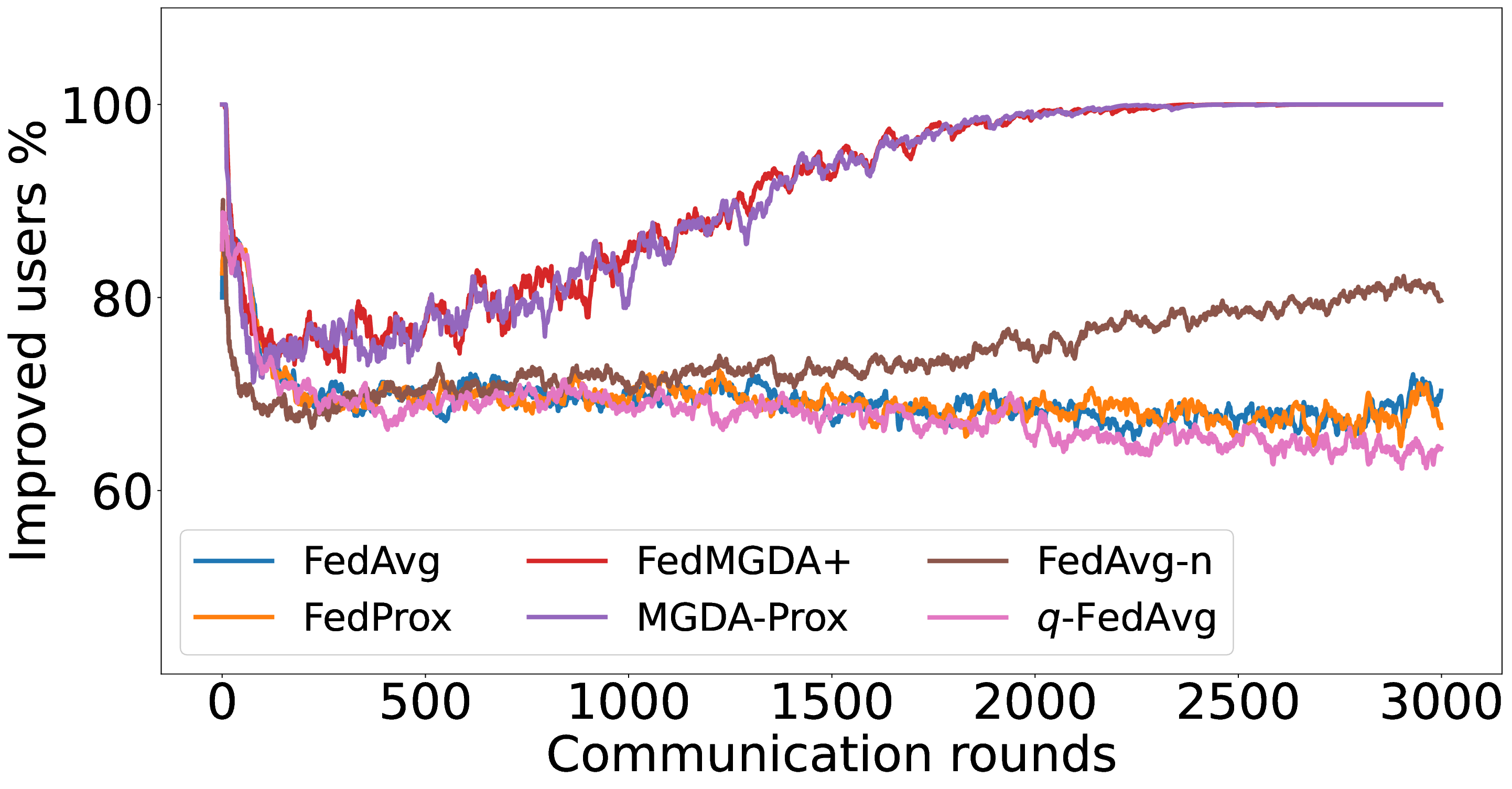}
\caption{\small The percentage of improved users in terms of training loss vs communication rounds on the CIFAR-10 dataset.  Two representative cases are shown: (\textbf{Left}) the local batch size $b=10$, and (\textbf{Right}) the local batch size $b=400$. The results are averaged across $4$ runs with different random seeds.} 
\label{fig:improved_user_cifar}
\end{figure*}

\subsubsection{Robustness}
We discussed earlier in  \S\ref{sec:tech} that the gradient normalization and MGDA’s built-in robustness allow \fedMGDAn{} to combat against certain adversarial attacks in practical \FL deployment. We now empirically evaluate  the robustness of \fedMGDAn{} against these attacks. We run various \FL algorithms in the presence of a single malicious user who aims to manipulate the system by inflating its loss. We consider an adversarial setting where the attacker participates  in each communication round and inflates its loss function by (i) adding a bias to it, or (ii) multiplying  it by a scaling factor, termed the \textit{bias} and \textit{scaling} attack, respectively. In the first experiment, we simulate a bias attack on the Adult dataset by adding a constant bias to the underrepresented user, i.e. the \texttt{PhD} domain, since it's more natural to expect an attacker to be consisted of a small number of users. In this setup, the worst performance we can get is bounded by training the model using \texttt{PhD} data \emph{only}. Results under the bias attack are presented in \Cref{figure:attack} (Left); also see \Cref{sec:bias_attack} for more results.  We observe that \afl and \qffl perform slightly better than \fedMGDAn{} without the attack; however,  their performances deteriorate  to a level close to the worst case scenario under the attack. In contrast, \fedMGDAn{} is not affected by the attack with any bias, which empirically supports our claim in \S\ref{sec:tech}. Note that we did not include \FA in this comparison since from its definition it is clear that \FA, like \FMp, is not affected by the bias attack. 
\Cref{figure:attack}~(Right) shows the results of different algorithms on CIFAR-10 with and without an adversarial scaling. As mentioned earlier, \qffl{} with gradient normalization is highly unstable particularly under the scaling attack, so we did not include its result here. From \Cref{figure:attack}~(Right) it is immediate to see that (i) the scaling attack affects all algorithms that do not employ gradient normalization; (ii) \qffl{} is the most affected under this attack; (iii) surprisingly, \fedMGDAn{} and, to a lesser extent,  \fedMGDAproxn{} actually converge to slightly better Pareto solutions, compared to their own results under no scaling attack. The above results empirically verify the robustness of \fedMGDAn{} under perhaps the most common \emph{bias} and \emph{scaling} attacks.


\subsubsection{Fairness}
\label{sec:exp_fair}
Lastly,  we compare \fedMGDAn{}  with existing \FL algorithms using different notions of fairness on CIFAR-10. For the first experiment, we adopt the same  fairness metric as \citep{LiSBS20}, and measure fairness by calculating the variance of users' test error. We run each algorithm with different hyperparameters, and among the results, we pick the best ones  in terms of average accuracy to be shown in \Cref{figure:cifar-user}; full table of results can be found in \Cref{sec:fair_fisrt}. From this figure, we observe that (i) \fedMGDAn{} achieves the best average accuracy while its  standard deviation  is comparable with that of \qffl{}; (ii) \fedMGDAn{} significantly outperforms \fedMGDA{}, which clearly justifies our proposed modifications in \Cref{alg:FM} to the vanilla MGDA; and  
(iii)  \fedMGDAn{} outperforms \fedavgn{}, which uses the same normalization step as \fedMGDAn{},  in terms of average accuracy and standard deviation. These observations confirm the effectiveness of \FMp in inducing fairness. {We perform the same experiment on the Federated EMNIST dataset, and observed similar results, which can be found in \Cref{table:emnist-summary} and \Cref{sec:fair_first_emnist}.}

In the next experiment, we show that \fedMGDAn{} not only yields a fair final solution but also maintains fairness during the entire training process in the sense that, in each round, it refrains from sacrificing the performance of any \emph{participating} user for the sake of improving the overall performance. To the best of our knowledge,  ``fairness during training'' has not been investigated before, in spite of having great practical implications---it encourages user participation. To examine this fairness, we run several experiments on CIFAR-10 and measure the percentage of improved \emph{participants} in each communication round. Specifically, we measure the training loss before and after each round for all participating users, and report the percentage of those improved or stay unchanged.\footnote{The percentage of improved users at time $t$ is defined as ${\sum_{i\in I_t}\mathbb{I}\{f_i(\wv_{t+1})\leq f_i(\wv_t)\}}/{|I_t|},$ where $I_t$ is the selected users at time $t$, and $\mathbb{I}\{A\}$ is the indicator function of an event $A$.} \Cref{fig:improved_user_cifar} shows the percentage of improved participating users in each communication round in terms of training loss for two representative cases; see \Cref{sec:fair_second} for full results with different hyperparameters.

We can see that  \fedMGDAn{} consistently outperforms other algorithms in terms of percentage of improved users, which means that by using \fedMGDAn{}, fewer users' performances get worse after each participation. Furthermore, we notice from \Cref{fig:improved_user_cifar} (Left) that, with local batch size $b=10$, the percentage of improved users is less than  $100$\%, which can be explained as follows:   for small batch sizes (i.e., $b<|\Dc|$ where $\Dc$ represents a local dataset),  the received updates from users are not the  true gradients  of users' losses given the global model (i.e., $\gv_i\neq \nabla f_i(\wv)$); they are noisy estimates of the true gradients. Consequently, the common descent direction calculated by MGDA is noisy and may not always work for all participating users. To remove the effect of this noise, we set $b=|\Dc|$ which allows us to recover the true gradients from the users. The results are presented in  \Cref{fig:improved_user_cifar}~(Right), which confirms that, when step size decays (less overshooting), the percentage of improved users for \fedMGDAn{} reaches towards $100$\% during training, as is expected. 

\begin{table}[t]
\footnotesize
\centering
\caption{Test accuracy of users on federated EMNIST with full batch, $10$ users per rounds, local learning rate $\eta=0.1$, total communication rounds $1500$.  The reported statistics are averaged across $4$ runs with different random seeds.\label{table:emnist-summary}}
\begin{tabular}{l|ll} \toprule
        Algorithm           &  Average (\%) &  Std. (\%)  \\\midrule
        \fedMGDA{}          & $85.73 \pm 0.05$ & $14.79 \pm 0.12$ \\
        \fedMGDAn{}         & $87.60 \pm 0.20$ & $13.68 \pm 0.19$ \\
        \fedMGDAproxn{}     & $87.59 \pm 0.19$ & $13.75 \pm 0.18$ \\ 
        \fedavg             & $84.97 \pm 0.44$ & $15.25 \pm 0.36$ \\
        \fedavgn{}          & $87.57 \pm 0.09$ & $13.74 \pm 0.11$ \\
        \fedprox{}          & $84.97 \pm 0.45$ & $15.26 \pm 0.35$  \\ 
        \qffl{}             & $84.97 \pm 0.44$ & $15.25 \pm 0.37$\\ \bottomrule

\end{tabular}
\vspace{5pt}
\end{table}

\section{Conclusion}
\label{sec:con}
We have proposed a novel algorithm \FMp for federated learning. \FMp is based on multi-objective optimization and aims to converge to Pareto stationary solutions. \FMp is simple to implement, has fewer hyperparameters to tune, and complements existing \FL systems nicely. Most importantly, \FMp is robust against additive and multiplicative adversarial manipulations and ensures fairness among all participating users. We established preliminary convergence guarantees for \FMp, pointed out its connections to recent \FL algorithms, and conducted extensive experiments to verify its effectiveness. In the future we plan to formally quantify the tradeoff induced by multiple local updates and to establish some privacy guarantee for \FMp.

\section*{Acknowledgment}
Resources used in preparing this research were provided, in part, by the Province of Ontario, the Government of Canada through CIFAR, and companies sponsoring the Vector Institute. We gratefully acknowledge funding support from NSERC, the Canada CIFAR AI Chairs Program, and Waterloo-Huawei Joint Innovation Lab. We thank NVIDIA Corporation (the data science grant) for donating two Titan V GPUs that enabled in part the computation in this work.

\bibliographystyle{apalike}
\bibliography{refs}



%
%





%

%

\onecolumn

\newpage
\appendix
\section{Proofs}\label{sec:proof}
\begin{manualtheorem}{1a}[full version]
Suppose each user function $f_i$ is $L$-Lipschitz smooth (i.e., $\nabla^2 f_i \preceq L \Iv$) and $M$-Lipschitz continuous. Then, with step size $\upeta_t \in (0, \tfrac{1}{2L}]$ we have
\begin{align}
\label{eq:bound}
\min_{t=0, \ldots, T} \EE[\|J_{\fv}(\wv_t) \lambdav_t\|] \leq \frac{2[\fv(\wv_0) - \EE \fv(\wv_{T+1}) + \sum_{t=0}^T \upeta_t (M\sigma_t + L\upeta_t \sigma_t^2)]}{\sum_{t=0}^T \upeta_t},
\end{align}
where $\sigma_t^2 := \EE \|J_{\fv}(\wv_t) \lambdav_t - \hat J_{\fv}(\wv_t) \hat \lambdav_t\|^2$ is the variance of the stochastic common direction.
Moreover, if some user function $f_i$ is bounded from below, and it is possible to choose $\upeta_t$ so that $\sum_t \upeta_t = \infty, \sum_t \upeta_t \sigma_t < \infty$, then the left-hand side in \eqref{eq:bound} converges to 0.
\end{manualtheorem}
\begin{proof}
Let $\xiv_t := J_{\fv}(\wv_t) \lambdav_t - \hat J_{\fv}(\wv_t) \hat \lambdav_t$, where $\hat J_{\fv}(\wv_t) := [\hat\nabla f_1(\wv_t), \ldots, \hat\nabla f_m(\wv_t)]$ is the concatenation of stochastic gradients at each user, and 
\begin{align}
\lambdav_t = \argmin_{\lambdav\in\Delta} ~ \| J_{\fv}(\wv_t) \lambdav\|, \qquad \hat\lambdav_t = \argmin_{\hat\lambdav\in\Delta} ~ \| \hat J_{\fv}(\wv_t) \hat\lambdav\|,
\end{align}
where for the latter we also constrain $\hat\lambda_i = 0$ if the $i$-th user is not participating in round $t$.
Then, applying the quadratic bound and the update rule (we remind that comparison between vector and scalar should be understood as component-wise): 
\begin{align}
\fv(\wv_{t+1}) &\leq \fv(\wv_t) - \upeta_t J_{\fv}^\top(\wv_t) \hat J_{\fv}(\wv_t) \hat\lambdav_t + \frac{L\upeta_t^2}{2} \|\hat J_{\fv}(\wv_t) \hat \lambdav_t\|^2 
\\
&\leq 
\fv(\wv_t) - \upeta_t J_{\fv}^\top(\wv_t) J_{\fv}(\wv_t) \lambdav_t + L\upeta_t^2 \|J_{\fv}(\wv_t) \lambdav_t\|^2 + \upeta_t J_{\fv}^\top(\wv_t)\xiv_t + L\upeta_t^2 \| \xiv_t\|^2
\\
&\leq
\fv(\wv_t) - \upeta_t (1-L\upeta_t) \|J_{\fv}(\wv_t) \lambdav_t\|^2 + \upeta_t M \|\xiv_t\| + L\upeta_t^2 \| \xiv_t\|^2,
\end{align}
where we used the Lipschitz continuity $\|\nabla f_i(\wv)\|\leq M$ and the first-order optimality condition of $\lambdav_t$ so that 
\begin{align}
\forall \lambdav\in\Delta, ~ \inner{\lambdav}{J_{\fv}^\top(\wv_t) J_{\fv}(\wv_t)\lambdav_t } \geq \inner{\lambdav_t}{J_{\fv}^\top(\wv_t) J_{\fv}(\wv_t) \lambdav_t}.
\end{align}
Letting $\upeta_t \leq \tfrac{1}{2L}$, taking expectations and rearranging we obtain
\begin{align}
\min_{t=0, \ldots, T} \EE[\|J_{\fv}(\wv_t) \lambdav_t\|] \leq \frac{2[\fv(\wv_0) - \EE \fv(\wv_{T+1}) + \sum_{t=0}^T \upeta_t (M\sigma_t + L\upeta_t \sigma_t^2)]}{\sum_{t=0}^T \upeta_t},
\end{align}
where $\sigma_t^2 := \EE\|\xiv_t\|^2$. 
\end{proof}

\begin{manualtheorem}{1b}[full version]
Suppose each user function $f_i$ is $L$-Lipschitz smooth (i.e., $\nabla^2 f_i \preceq L \Iv$) and $M$-Lipschitz continuous. Then, for any number of local updates $k$, with global learning rate $\upeta_t \in (0, \tfrac{1}{2L}]$, deterministic gradient update and local learning rate $\eta^{l}_t$, we have
\begin{align}
\label{eq:bound2}
\min_{t=0, \ldots, T} \|J_{\fv}(\wv_t) \lambdav_t\| \leq \frac{2\Big[\fv(\wv_0) - \fv(\wv_{T+1}) + M^2 \upeta_t \sum_{t=0}^T \Big( \big(\varepsilon_t \sqrt{m} + \eta^l_t (k-1)\big) + L\upeta_t  \big(\varepsilon_t \sqrt{m} + \eta^l_t (k-1) \big)^2 \Big)\Big]}{\sum_{t=0}^T \upeta_t},
\end{align}
where 
$\varepsilon_t := \|\lambdav_t-\tilde{ \lambdav}_t\|$ is the deviation between the exact and approximate (dual) weightings.
Moreover, if some user function $f_i$ is bounded from below, then the left-hand side in \eqref{eq:bound2} converges to 0 as long as $\varepsilon_t \to 0$ , $\eta^l_t \to 0$ and $\upeta_t \to 0$ with $\sum_t \upeta_t = \infty$.
\end{manualtheorem}
\begin{proof}
Let
\begin{align}
\lambdav_t = \argmin_{\lambdav\in\Delta} ~ \| J_{\fv}(\wv_t) \lambdav\|, \qquad \tilde{\lambdav}_t = \argmin_{\lambdav\in\Delta} ~ \| \tilde{ J}_{\fv}(\wv_t) \lambdav\|,
\end{align}

and $\delta_t := J_{\fv}(\wv_t) \lambdav_t - \tilde{J}_{\fv}(\wv_t) \tilde{ \lambdav}_t$, where $\tilde{ J}_{\fv}(\wv_t) := [\tilde{\nabla} f_1(\wv_t), \ldots, \tilde{\nabla} f_m(\wv_t)]$ is the concatenation of accumulated updates $\tilde{\nabla} f_i(\wv_t)$ at each user. Formally, $\tilde{\nabla} f_i(\wv_t) :=  \wv_{t} - \wv_{t}^{k}$, which denotes the difference between the initial $\wv_t$ and the final $\wv_{t}^{k}$ after $k$ local updates, for user $i$.
(Note that we have abused the notation $\wv_t$ and $\wv_t^k$ a bit for simplicity here, as they do not distinguish user $i$. This is not a big problem since the context is clear.) 

Let $\wv_t^1 := \wv_t - \nabla f_i(\wv_t)$ and $\wv_t^{j+1} := \wv_t^{j} - \eta_t^{l} \nabla f_i(\wv_t^j), ~ j ={1,\ldots,k-1}$ be the local optimization steps.

Then,
\begin{align}
    \tilde{\nabla} f_i(\wv_t) &=  \wv_{t} - \wv_{t}^{k}\\
    &= (\wv_{t}-\wv_t^1) + (\wv_t^1-\wv_t^2) + \ldots + (\wv_t^{k-1} - \wv_t^{k})\\
    &= \nabla f_i(\wv_t) + \eta^l_t \nabla f_i(\wv_t^1) + \ldots + \eta^l_t \nabla f_i(\wv_t^{k-1}),\\
\end{align}

Thus, the difference between $\tilde{\nabla} f_i(\wv_t)$ and gradient $\nabla f_i(\wv_t)$ is bounded by:

\begin{align}
    \|\tilde{\nabla} f_i(\wv_t)-\nabla f_i(\wv_t)\| &=  \|\eta^l_t \sum_{j=1}^{k-1} \nabla f_i(\wv_t^{j})\|  \\
    &\leq \eta_t^l \sum_{j=1}^{k-1}\|\nabla f_i(\wv_t^j)\| \\
    &\leq \eta^l_t (k-1) M,
\end{align}


Thus, 
\begin{align}
\|\delta_t\| &= \|J_{\fv}(\wv_t) \lambdav_t - \tilde{ J}_{\fv}(\wv_t) \tilde{ \lambdav}_t \|
\\
&\leq \|J_{\fv}(\wv_t) \lambdav_t -  J_{\fv}(\wv_t) \tilde{ \lambdav}_t\| + \|J_{\fv}(\wv_t) \tilde{ \lambdav}_t -\tilde{J}_{\fv}(\wv_t) \tilde{ \lambdav}_t \|
\\
&\leq \varepsilon_t \sqrt{m} M + \eta^{l}_{t} (k-1) M ,
\label{eq:delta_bound}
\end{align}
the last step comes from matrix norm inequality on the first term, and triangular inequality on the second term. Note that $\|\delta_t\|$ vanishes when $\varepsilon_t \to 0$ and $\eta^l_t \to 0$.

Then, applying the quadratic upper bound, we have
\begin{align}
\fv(\wv_{t+1}) &\leq \fv(\wv_t) - \upeta_t J_{\fv}^\top(\wv_t) \tilde{J}_{\fv}(\wv_t) \tilde{\lambdav}_t + \frac{L\upeta_t^2}{2} \|\tilde{J}_{\fv}(\wv_t) \tilde{\lambdav}_t\|^2 
\\
&= 
\fv(\wv_t) - \upeta_t J_{\fv}^\top(\wv_t) J_{\fv}(\wv_t) \lambdav_t + L\upeta_t^2 \|J_{\fv}(\wv_t) \lambdav_t\|^2 + \upeta_t J_{\fv}^\top(\wv_t)\delta_t + L\upeta_t^2 \| \delta_t\|^2
\\
&\leq
\fv(\wv_t) - \upeta_t (1-L\upeta_t) \|J_{\fv}(\wv_t) \lambdav_t\|^2 + \upeta_t M \|\delta_t\| + L\upeta_t^2 \| \delta_t\|^2,
\end{align}

Letting $\upeta_t \leq \tfrac{1}{2L}$, telescoping and rearranging we obtain
\begin{align}
\min_{t=0, \ldots, T} \|J_{\fv}(\wv_t) \lambdav_t\| \leq \frac{2[\fv(\wv_0) - \fv(\wv_{T+1}) + \sum_{t=0}^T \upeta_t (M\|\delta_t\| + L\upeta_t \|\delta_t\|^2)]}{\sum_{t=0}^T \upeta_t},
\end{align}
substitute $\|\delta_t\|$ with \eqref{eq:delta_bound}, and we get \eqref{eq:bound2}.
\vspace{2ex}

Finally, if $\varepsilon_t \to 0$ and $\eta_t^{l} \to 0$, then $\delta_t \to 0$ and hence 
the right-hand side in \eqref{eq:bound2} $\to 0 $ when $T \to \infty$, in which case the left-hand side $\min_{t=0, \ldots, T} \|J_{\fv}(\wv_t) \lambdav_t\|$ converges to $0$ as well.
\end{proof}

\begin{manualtheorem}{2}[full version]
\label{thm:conv}
Suppose each user function $f_i$ is $\sigma$-strongly convex (i.e. $\nabla^2 f_i \succeq\sigma \Iv$) and $M$-Lipschitz  continuous. Suppose at each round $t$ \FM includes some function $f_{v_t}$ such that 
\begin{align}
f_{v_t}(\wv_t) - f_{v_t}(\wv_t^*) \geq \tfrac{\ell_t}{2} \|\wv_t - \wv_t^*\|^2,
\end{align}
where $\wv_t^*$ is the projection of $\wv_t$ to the Pareto stationary set $W^*$ of \eqref{eq:MOM}. Assume $\EE[\lambda_{v_t} \ell_t + \sigma_t |\wv_t] \geq c > 0$, then 
\begin{align}
\EE[\|\wv_{t+1} - \wv_{t+1}^*\|^2] \leq \pi_t(1-c\upeta_0) \EE[\|\wv_0-\wv_0^*\|^2] + \sum_{s=0}^t \frac{\pi_t}{\pi_s} \upeta_s^2 M^2,
\end{align}
where $\pi_t = \prod_{s=1}^t \upeta_s$ and $\pi_0 = 1$. In particular,
\begin{itemize}[topsep=0pt,leftmargin=*]
\item if $\sum_t \upeta_t = \infty, \sum_t \upeta_t^2 < \infty$, then $\EE[\|\wv_{t} - \wv_{t}^*\|^2] \to 0$ and $\wv_{t}$ converges to the Pareto stationarity set $W^*$ almost surely;
\item with the choice $\upeta_t = \tfrac{2}{c(t+2)}$ we have 
\begin{align}
\EE[\|\wv_{t} - \wv_{t}^*\|^2] \leq \frac{4M^2}{c^2(t+3)}.
\end{align}
\end{itemize}
\end{manualtheorem}
\begin{proof}
For each user $i$, let us define the function
\begin{align}
\hat f_i(\wv, I) := I_i f_i(\wv),
\end{align}
where the random variable $I \in \{0, 1\}^\numuser$ indicates which user participates at a particular round. Clearly, we have $\EE \hat f_i(\wv, I) = f_i(\wv) \EE I_i $. Therefore, our multi-objective minimization problem is equivalent as:
\begin{align}
\label{eq:stoc}
\min_{\wv} ~ \left\{ \EE \hat f_1(\wv, I), \ldots, \EE\hat f_m(\wv, I) \right\},
\end{align} 
since positive scaling does not change Pareto stationarity. (If one prefers, we can also normalize the stochastic functions $\hat f_i(\wv, I)$ so that the unbiasedness property $\EE \hat f_i(\wv, I) = f_i(\wv)$ holds.)

We now proceed as in \citet{MercierPD18} and provide a slightly sharper analysis.
Let us denote $\wv_t^*$ the projection of $\wv_t$ to the Pareto-stationary set $W^*$ of \eqref{eq:stoc}, \ie, 
\begin{align}
\wv_t^* = \argmin_{\wv \in W^*} ~ \|\wv_t - \wv\|.
\end{align}
Then, 
\begin{align}
\|\wv_{t+1} - \wv_{t+1}^*\|^2 & \leq \|\wv_{t+1} - \wv_{t}^*\|^2 \\
&=  \|\wv_t - \upeta_t \dv_t - \wv_t^*\|^2 \\
\label{eq:sub} &= \|\wv_t - \wv_t^*\|^2 - 2\upeta_t \inner{\wv_t - \wv_t^*}{\dv_t} + \upeta_t^2 \|\dv_t\|^2.
\end{align}
To bound the middle term, we have from our assumption:
\begin{align}
\exists v_t, ~ \hat f_{v_t}(\wv_t, I_t) - \hat f_{v_t}(\wv_t^*, I_t) &\geq \tfrac{\ell_t}{2} \|\wv_t - \wv_t^*\|^2, \\
\forall i, ~ \hat f_i(\wv_t, I_t) - \hat f_i(\wv_t^*, I_t) &\geq 0,
\end{align}
where the second inequality follows from the definition of $\wv_t^*$. Therefore, 
\begin{align}
\inner{\wv_t - \wv_t^*}{\dv_t} &= \inner{\wv_t - \wv_t^*}{\sum_{i: I_i = 1} \lambda_i \nabla f_i(\wv_t)} \\
&\geq \sum_{i : I_i = 1} \lambda_i \left( f_i(\wv_t) - f_i(\wv_t^*) \right) + \tfrac{\sigma_t}{2} \|\wv_t - \wv_t^*\|^2\\
& = \sum_i \lambda_i \left( \hat f_i(\wv_t, I_t) - \hat f_i(\wv_t^*, I_t) \right) + \tfrac{\sigma_t}{2} \|\wv_t - \wv_t^*\|^2\\
& \geq \tfrac{\lambda_{v_t}\ell_t + \sigma_t}{2} \|\wv_t - \wv_t^*\|^2.
\end{align}
Continuing from \eqref{eq:sub} and taking conditional expectation:
\begin{align}
\EE[\|\wv_{t+1} - \wv_{t+1}^*\|^2 | \wv_t] &\leq  (1 - c_t \upeta_t )\|\wv_t - \wv_t^*\|^2 + \upeta_t^2 M^2,
\end{align}
where $c_t := \EE[\lambda_{v_t}\ell_{t} + \sigma_t | \wv_t] \geq c > 0$. Taking expectation  we obtain the familiar recursion:
\begin{align}
\EE[\|\wv_{t+1} - \wv_{t+1}^*\|^2] &\leq  (1-c\upeta_t) \EE[\|\wv_t - \wv_t^*\|^2] + \upeta_t^2 M^2,
\end{align}
from which we derive
\begin{align}
\EE[\|\wv_{t+1} - \wv_{t+1}^*\|^2] &\leq  \pi_t (1 - c\upeta_0) \EE[\|\wv_0- \wv_0^*\|^2] + \sum_{s=0}^t \tfrac{\pi_t}{\pi_s} \upeta_s^2 M^2,
\end{align}
where $\pi_t = \prod_{s=1}^t (1 - c\upeta_s)$ and $\pi_0 = 1$. Since $\pi_t \to 0 \iff \sum_t \upeta_t = \infty$, we know 
\begin{align}
\EE[\|\wv_{t+1} - \wv_{t+1}^*\|^2] \to 0 
\end{align}
if $\sum_t \upeta_t = \infty$ and $\sum_t \upeta_t^2 < \infty$.

Setting $\upeta_t = \tfrac{2}{c(t+2)}$ we obtain $\pi_t = \tfrac{2}{(t+2)(t+1)}$ and by induction 
\begin{align}
\sum_{s=0}^t \frac{\pi_t}{\pi_s} \upeta_s^2 = \frac{4}{c^2(t+2)(t+1)} \sum_{s=0}^t \frac{s+1}{s+2} \leq \frac{4}{c^2(t+4)},
\end{align}
whence
\begin{align}
\EE[\|\wv_{t+1} - \wv_{t+1}^*\|^2] \leq \frac{4M^2}{c^2(t+4)}.
\end{align}

Using a standard supermartingale argument we can also prove that 
\begin{align}
\wv_t - \wv_t^* \to 0 \text{ almost surely}.
\end{align}
The proof is well-known in stochastic optimization hence omitted (or see \citet[Theorem 5]{MercierPD18} for details). 
\end{proof}

\newpage

\section{Full experimental results}
\label{sec:exp_appendix}

In this section we provide additional results that are deferred from the main paper.

\subsection{Recovering \fedavg{} full results: results on Fashion-MNIST and CIFAR-10}
\label{sec:fminist}
Complementary to the results shown in \Cref{fig:interpolation_FA}, \Cref{fig:fmnist_iid} and \Cref{fig:fmnist_noniid} summarize similar results on the F-MNIST dataset, while \Cref{fig:cifar_log_interpolate} depicts the training losses on CIFAR-10 dataset in log-scale.

\begin{figure}[ht]
\centering
\includegraphics[width=0.48\columnwidth, valign=t]{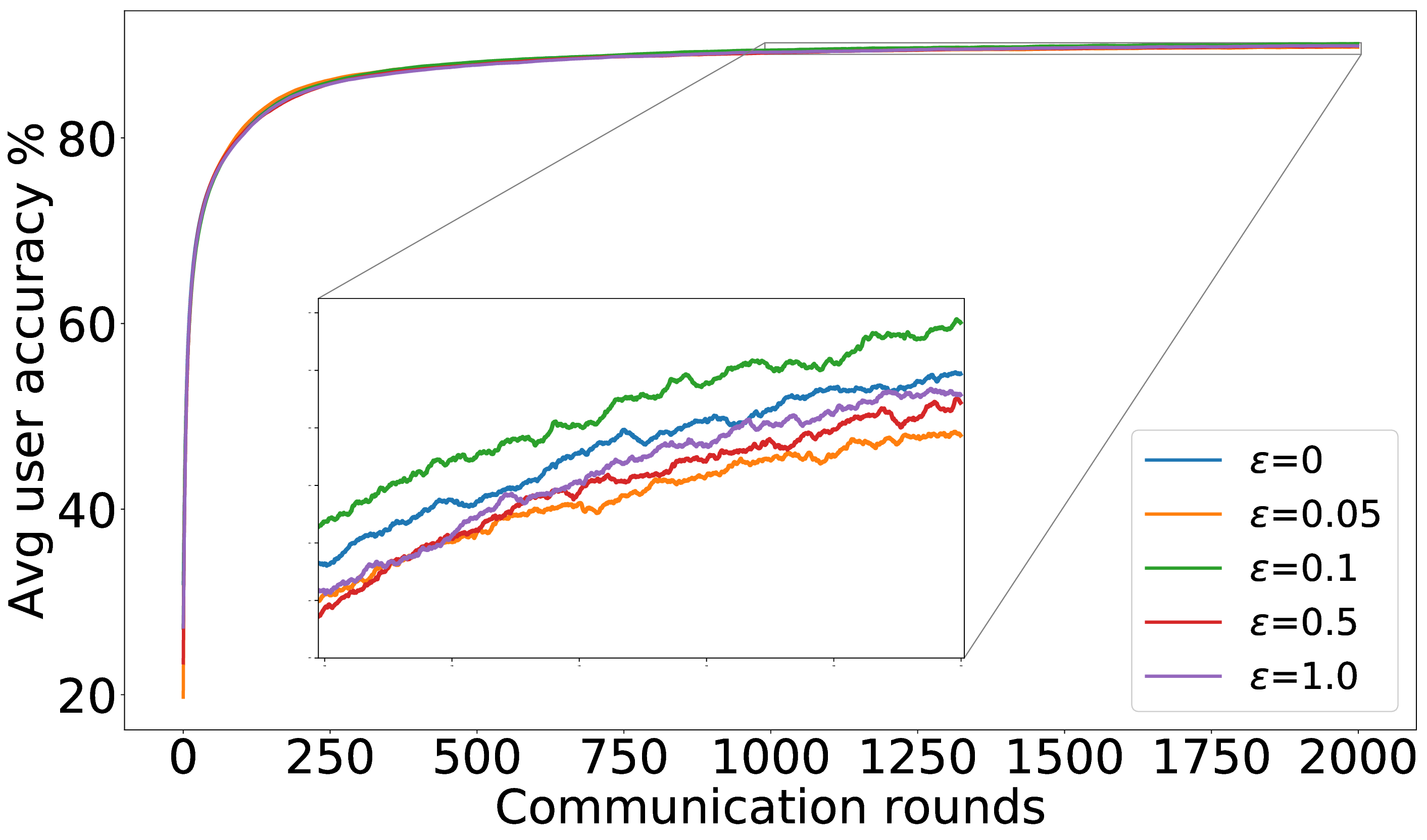}
\includegraphics[width=0.48\columnwidth, valign=t]{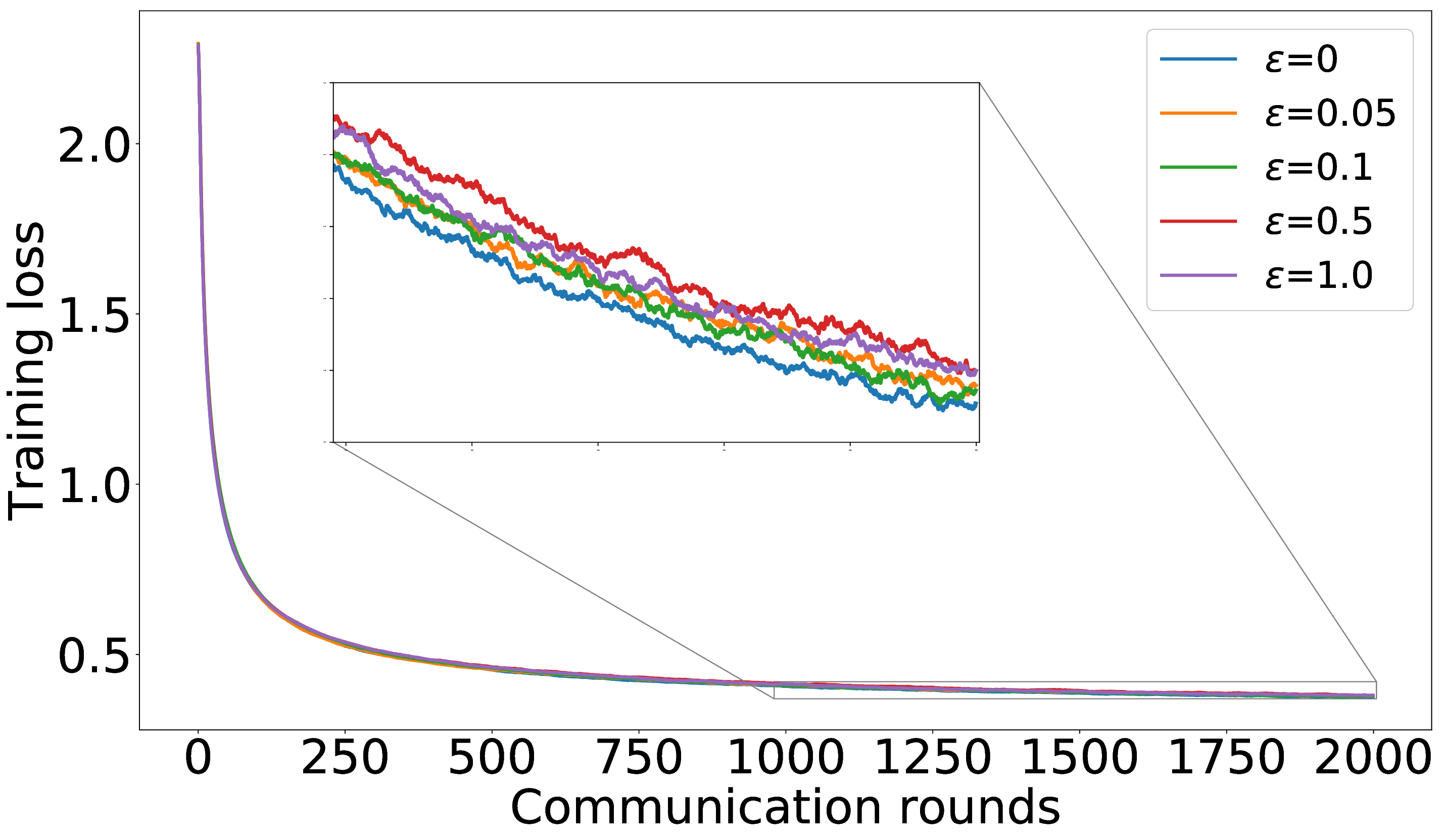}
\vspace{-1em}
\caption{Interpolation between \FA and \fedMGDA{} (F-MNIST, iid setting). (\textbf{Left}) Average user accuracy. (\textbf{Right}) Uniformly averaged training loss. Results are averaged over $5$ runs with different random seeds.}
\label{fig:fmnist_iid}
\end{figure}
\begin{figure}[ht]
\centering
\includegraphics[width=0.48\columnwidth, valign=t]{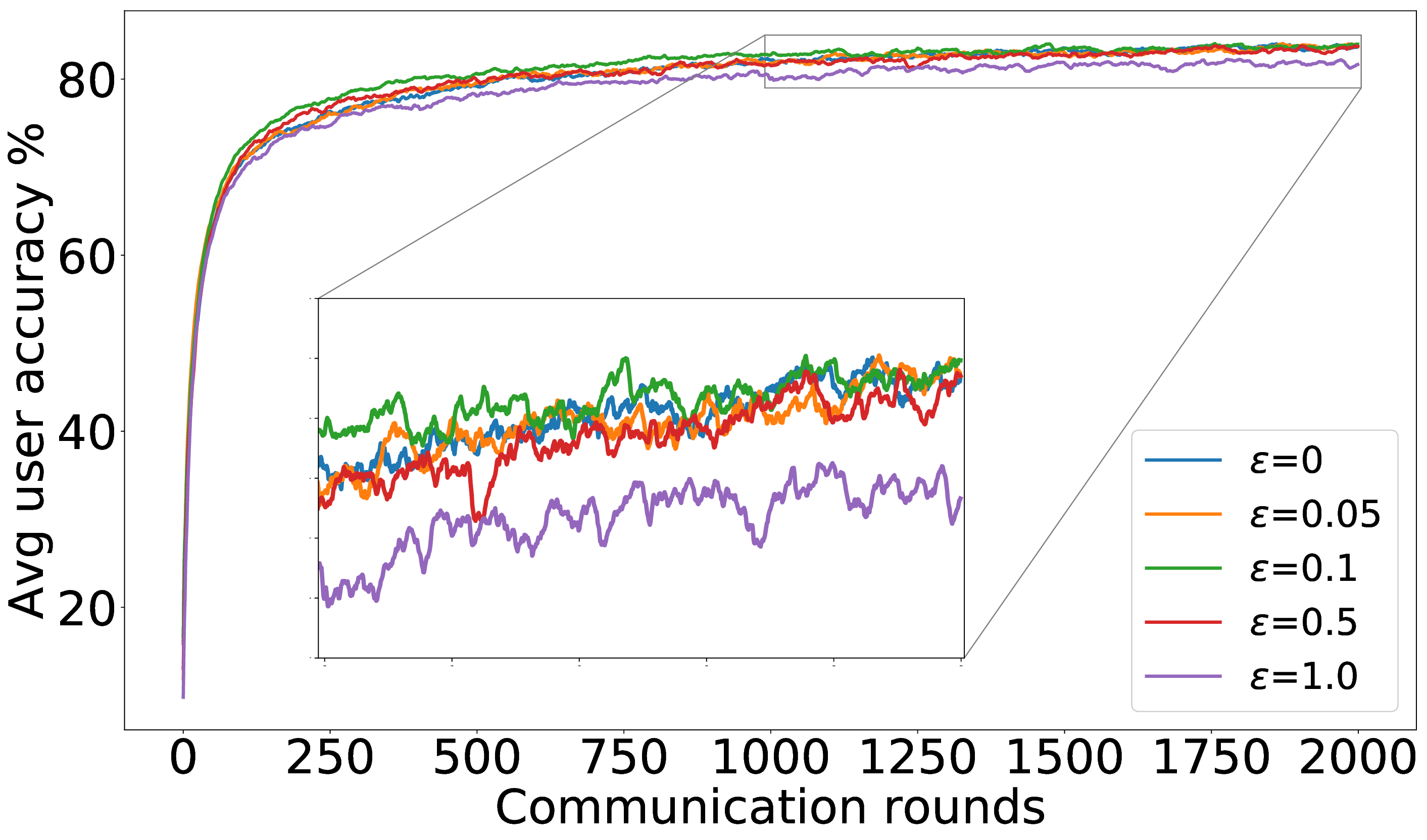}
\includegraphics[width=0.48\columnwidth, valign=t]{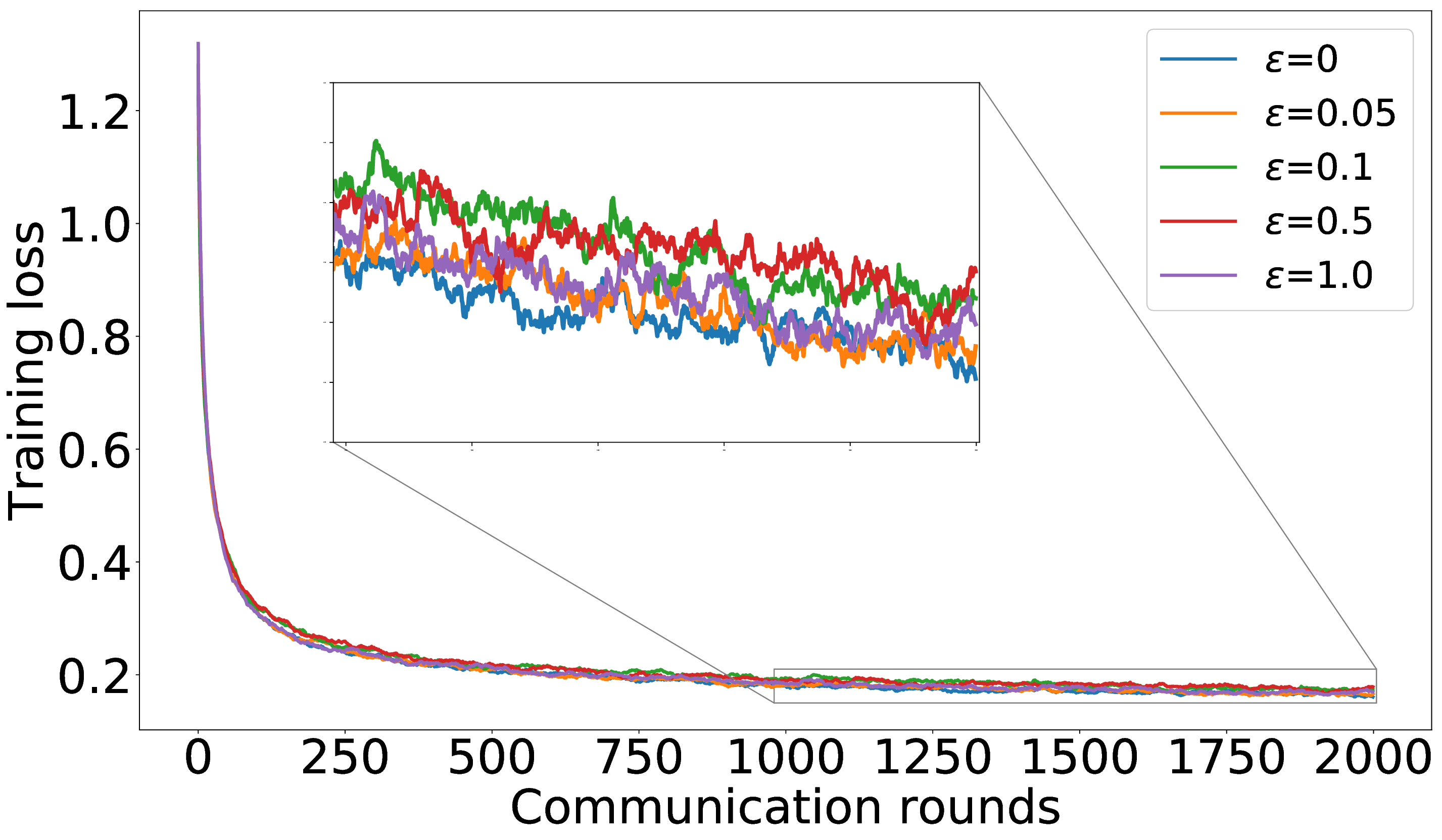}
\caption{Interpolation between \FA and \fedMGDA{} (F-MNIST, non-iid setting). (\textbf{Left}) Average user accuracy. (\textbf{Right}) Uniformly averaged training loss. Results are averaged over $5$ runs with different random seeds.}
\label{fig:fmnist_noniid}
\end{figure}
\begin{figure}[ht]
\centering
\includegraphics[width=0.48\columnwidth,valign=t]{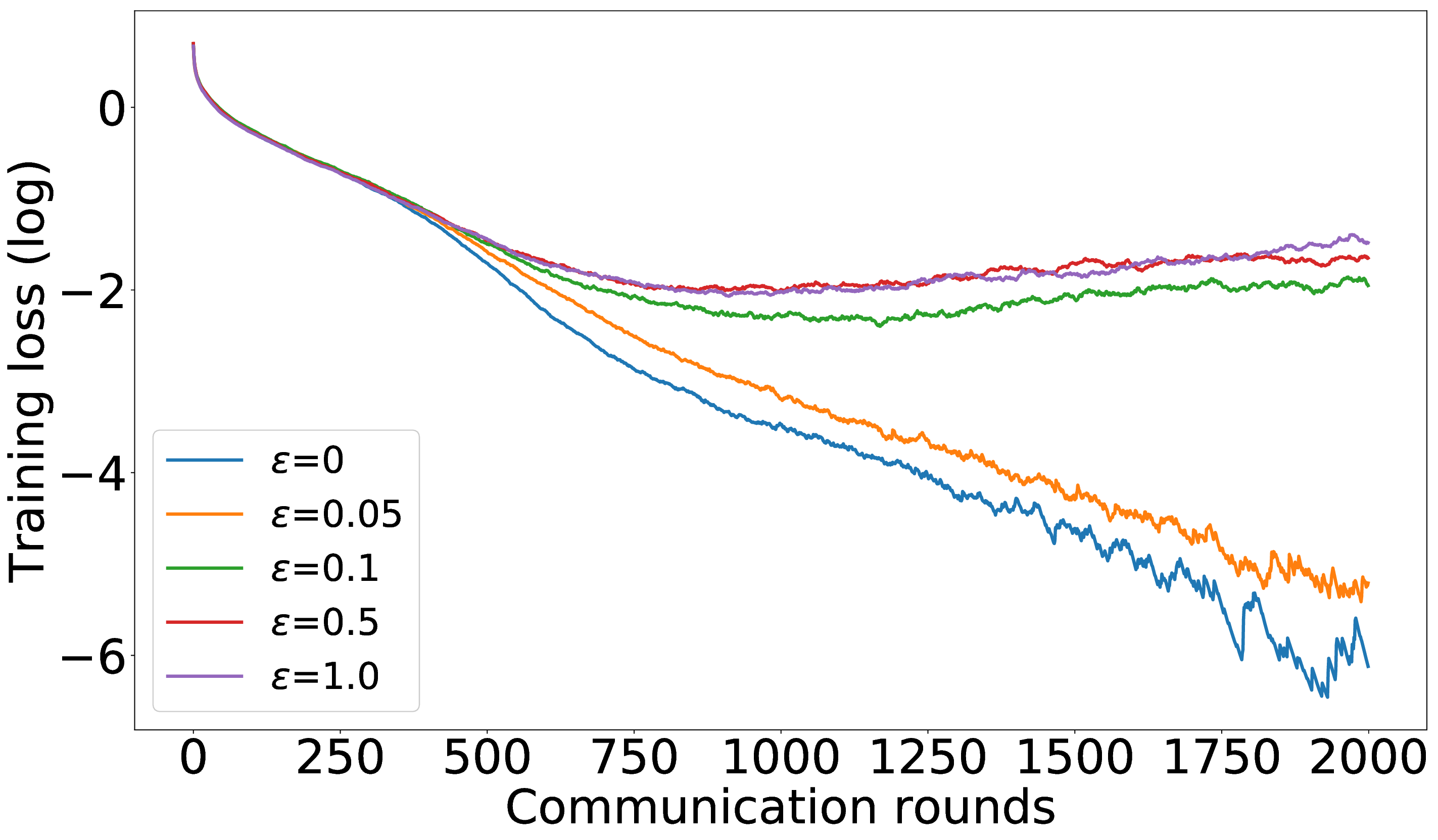}
\includegraphics[width=0.48\columnwidth,valign=t]{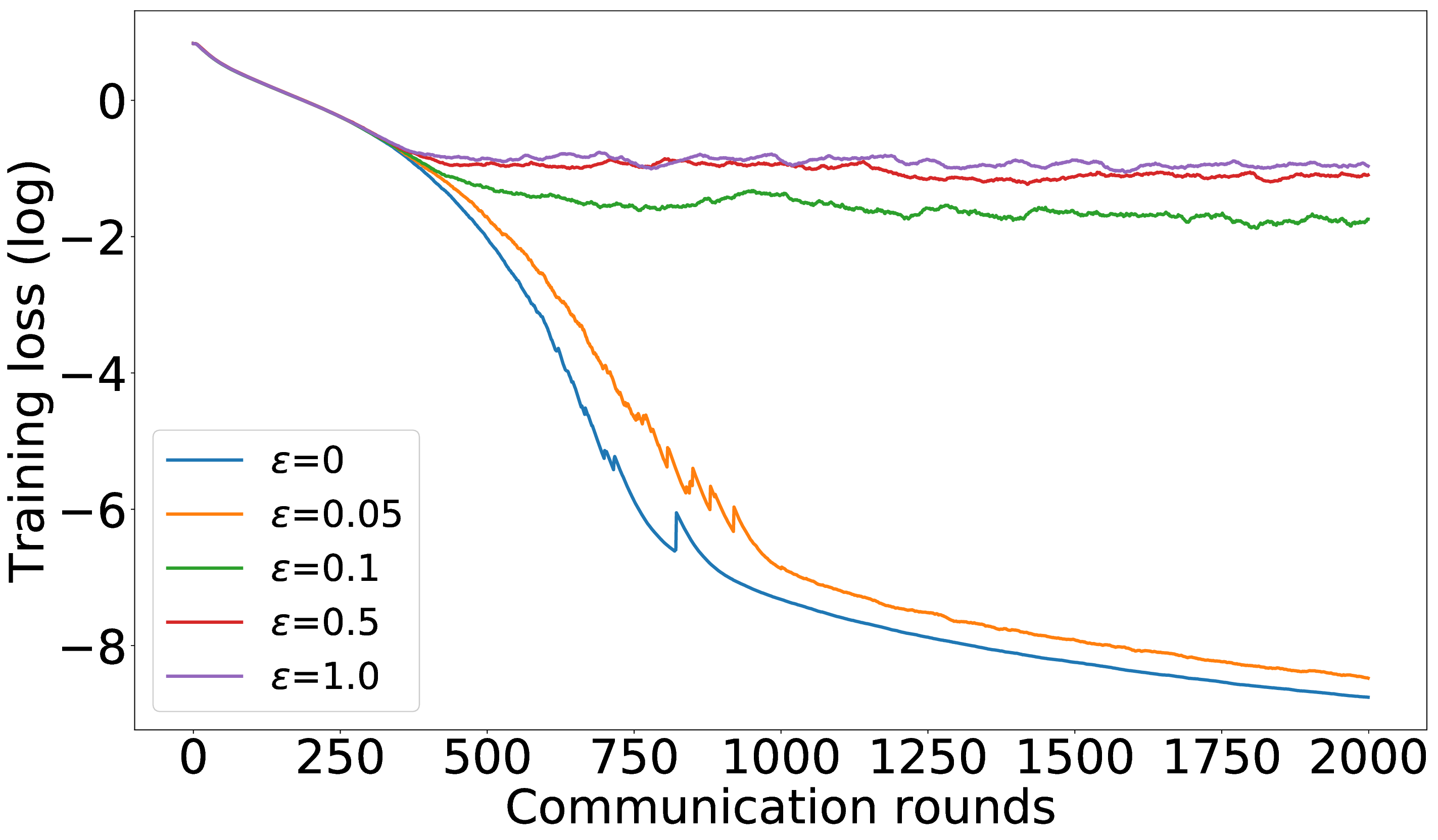}
\caption{Interpolation between \FA and \fedMGDA{} (CIFAR-10). Both figures plot the uniformly averaged training loss in log-scale. (\textbf{Left}) non-iid setting. (\textbf{Right}) iid setting. Results are averaged over $5$ runs with different random seeds.}
\label{fig:cifar_log_interpolate}
\end{figure}

\newpage

\subsection{Robustness full results: bias attack on Adult dataset}
\label{sec:bias_attack}
\Cref{table:adult_bias} shows the full results of the experiment presented in \Cref{figure:attack}~(Left). 
\begin{table}[ht]
\footnotesize
\centering
\caption{ Test accuracy of SOTA algorithms on Adult dataset with various scales of adversarial bias added to the domain loss of \texttt{PhD}; and compared to the baseline of training only on the \texttt{PhD} domain. 
The scale of bias for \afl{} is different from \qffl{} since \afl uses averaged loss while \qffl{} uses (non-averaged) total loss. The algorithms are run for $500$ rounds, and the reported results are averaged across $5$ runs with different random seeds.\label{table:adult_bias}}
\begin{tabular}{ll|lll} \toprule
    Name             &  Bias     & Uniform    & PhD        & Non-PhD    \\ \midrule
\afl{}               &  $0$      & $83.26 \pm 0.01$ & $77.90 \pm 0.00$ & $83.32 \pm 0.01$ \\ 
\afl{}               &  $0.01$   & $83.28 \pm 0.03$ & $76.58 \pm 0.27$ & $83.36 \pm 0.03$ \\ 
\afl{}               &  $0.1$    & $82.30 \pm 0.04$ & $74.59 \pm 0.00$ & $82.39 \pm 0.04$ \\ 
\afl{}               &  $1$      & $81.86 \pm 0.05$ & $74.25 \pm 0.57$ & $81.94 \pm 0.05$ \\ \midrule
\qffl, $q=5$         &  $0$      & $83.26 \pm 0.18$ & $76.80 \pm 0.61$ & $83.33 \pm 0.19$\\ 
\qffl, $q=5$         &  $1000$   & $83.34 \pm 0.04$ & $76.57 \pm 0.44$ & $83.41 \pm 0.04$\\ 
\qffl, $q=5$         &  $5000$   & $81.19 \pm 0.03$ & $74.14 \pm 0.41$ & $81.27 \pm 0.03$\\ 
\qffl, $q=5$         &  $10000$  & $81.07 \pm 0.03$ & $73.48 \pm 0.78$ & $81.16 \pm 0.02$\\ \midrule
\qffl, $q=2$         &  $0$      & $83.30 \pm 0.09$ & $76.46 \pm 0.56$ & $83.38 \pm 0.09$\\ 
\qffl, $q=2$         &  $1000$   & $83.33 \pm 0.04$ & $76.24 \pm 0.00$ & $83.41 \pm 0.04$\\ 
\qffl, $q=2$         &  $5000$   & $83.11 \pm 0.03$ & $75.69 \pm 0.00$ & $83.20 \pm 0.03$\\ 
\qffl, $q=2$         &  $10000$  & $82.50 \pm 0.07$ & $75.69 \pm 0.00$ & $82.58 \pm 0.07$\\ \midrule
\qffl, $q=0.1$       &  $0$      & $83.44 \pm 0.06$ & $76.46 \pm 0.56$ & $83.52 \pm 0.07$\\ 
\qffl, $q=0.1$       &  $1000$   & $83.34 \pm 0.03$ & $76.35 \pm 0.41$ & $83.42 \pm 0.02$\\ 
\qffl, $q=0.1$       &  $5000$   & $83.35 \pm 0.03$ & $76.57 \pm 0.66$ & $83.42 \pm 0.03$\\ 
\qffl, $q=0.1$       &  $10000$  & $83.36 \pm 0.05$ & $76.80 \pm 0.49$ & $83.43 \pm 0.05$\\ \midrule
\fedMGDAn{}          & Arbitrary & $83.24 \pm 0.02$ &  $76.58 \pm 0.27$ &  $83.32 \pm 0.02$ \\ \midrule
Baseline\_PhD        &          &    $81.05 \pm 0.05$ & $72.82 \pm 0.95$ & $81.14 \pm 0.05$       \\ \bottomrule
\end{tabular}
\vspace{5pt}
\end{table}

\textbf{The hyper-parameter setting for \Cref{figure:attack}.}
\textbf{Left}: step sizes $\gamma_{\lambda}=0.5$, $\gamma_{w}=0.01$, and bias$=1$ for \afl{}; $q = 5$, and bias$=10000$ for \qffl{}; $\upeta=1$, decay=$1/3$, and arbitrary bias for \fedMGDAn{}. 
\textbf{Right}: $\mu=0.01$ for \fedprox{}; $q_1=0.5$ and $q_2=2.0$ for \qffl{}; $\upeta=1$ and decay=$1/10$ for \fedMGDAn{} and \fedavgn{}. The simulations are run  with $20$\% user participation ($p=0.2$) in each round which reduces the effectiveness of the adversary since it needs to participate in a bigger pool of users in comparison to our default setting $p=0.1$.

\newpage

\subsection{Fairness full results: first experiment on CIFAR-10}
\label{sec:fair_fisrt}
\Cref{table:cifar-user-1,,table:cifar-user-2,,table:cifar-user-3,,table:cifar-user-4} report the full results of the experiment presented in \Cref{figure:cifar-user} for different batch sizes and fractions of user participation. 
\begin{table}[th]
\footnotesize	
\centering
\caption{Test accuracy of users on CIFAR-10 with local batch size $b=10$, fraction of users $p=0.1$, local learning rate $\eta=0.01$, total communication rounds $2000$.  The reported statistics are averaged across $4$ runs with different random seeds. \label{table:cifar-user-1}}
\begin{tabular}{lll|llllll} \toprule
        \multicolumn{3}{c}{Algorithm}  &  Average (\%) &  Std. (\%)  & Worst $5$\% (\%) & Best $5$\% (\%) \\\midrule
                        Name & $\upeta$ & decay     &   \multicolumn{4}{c}{}\\ \midrule
                        \fedMGDA{}  &       &  & $67.59 \pm 0.65$ & $21.03 \pm 2.40$ & $22.95 \pm 7.27$ & \B $90.50 \pm 0.87$\\
                            \fedMGDAn{}& $1.0$ & $0$ & $69.06\pm 1.08$ & $14.10\pm 1.61$ & $44.38\pm 5.90$ & $87.55\pm 0.84$  \\
                            \fedMGDAn{}& $1.0$ & $1/10$& $69.87\pm 0.87$ & $14.33\pm 0.61$ & $42.42\pm 3.61$ & $87.05\pm 0.95$  \\
                            \fedMGDAn{}& $1.5$ & $1/10$& \B $71.15\pm 0.62$ & $13.74\pm 0.49$ & $44.48\pm 1.64$ & $88.53\pm 0.85$  \\
                            \fedMGDAn{}& $1.0$ & $1/40$& $68.68 \pm 1.25$ & $17.23 \pm 1.60$ & $34.40 \pm 6.23$ & $88.07 \pm 0.04$  \\
                            \fedMGDAn{}& $1.5$ & $1/40$& $71.05 \pm 0.82$ & $13.53 \pm 0.77$ & \B $46.50 \pm 2.96$ & $88.53 \pm 0.85$ \\\midrule
                            Name   & $\upeta$ & decay     &   \multicolumn{4}{c}{}\\ \midrule
                            \fedMGDAproxn{} & $1.0$ & $0$  & $66.98 \pm 1.52$ & $15.46 \pm 3.15$ & $39.42 \pm 10.35$ & $87.60 \pm 2.18$ \\
                            \fedMGDAproxn{} & $1.0$ & $1/10$ & $70.39 \pm 0.96$ & $13.70 \pm 1.08$ & $46.43 \pm 2.17$ & $87.50 \pm 0.87$ \\
                            \fedMGDAproxn{} & $1.5$ & $1/10$ & $69.45 \pm 0.77$ & $14.98 \pm 1.61$ & $40.42 \pm 5.88$ & $87.05 \pm 1.00$ \\ 
                            \fedMGDAproxn{} & $1.0$ & $1/40$ & $69.01 \pm 0.51$ & $16.24 \pm 0.74$ & $36.92 \pm 4.12$ & $88.53 \pm 0.85$ \\
                            \fedMGDAproxn{} & $1.5$ & $1/40$ & $69.53 \pm 0.70$ & $15.90 \pm 1.79$ & $36.43 \pm 7.42$ & $87.53 \pm 2.14$ \\  \midrule
                            Name   & $\upeta$ & decay     &   \multicolumn{4}{c}{}\\ \midrule
                            \fedavg &  & & $70.11 \pm 1.27$ & $13.63 \pm 0.81$ & $45.45 \pm 2.21$ & $88.00 \pm 0.00$\\
                            \fedavgn{}& $1.0$& $0$    &  $67.69 \pm 1.15$ & $16.97 \pm 2.33$ & $37.98 \pm 6.61$ & $89.55 \pm 2.61$\\
                            \fedavgn{}& $1.0$& $1/10$ &$69.66 \pm 1.22$ & $15.11 \pm 1.14$ & $40.42 \pm 1.71$ & $88.55 \pm 0.84$\\
                            \fedavgn{}& $1.5$& $1/10$ & $70.62 \pm 0.82$ & $14.19 \pm 0.49$ & $43.48 \pm 2.17$ & $89.03 \pm 1.03$\\ 
                            \fedavgn{}& $1.0$& $1/40$ & $70.31 \pm 0.29$ & $14.97 \pm 0.96$ & $42.48 \pm 2.56$ & $88.55 \pm 2.15$\\
                            \fedavgn{}& $1.5$& $1/40$ & $70.47 \pm 0.70$ & $13.88 \pm 0.96$ & $44.95 \pm 4.07$ & $88.03 \pm 0.04$\\ \midrule
                            Name   & $\mu$ &      &   \multicolumn{4}{c}{}\\ \midrule
                            \fedprox{} & $0.01$ &    & $70.77 \pm 0.70$ & $13.12 \pm 0.47$ & $46.43 \pm 2.95$ & $88.50 \pm 0.87$ \\
                            \fedprox{} & $0.1$ &    & $70.69 \pm 0.58$ & $13.42 \pm 0.43$ & $45.42 \pm 2.14$ & $87.55 \pm 1.64$\\
                            \fedprox{} & $0.5$ &    & $68.89 \pm 0.83$ & $14.10 \pm 1.08$ & $43.95 \pm 4.52$ & $88.00 \pm 0.00$ \\ \midrule
                            Name   & $q$ & $L$     &   \multicolumn{4}{c}{}\\ \midrule
                            \qffl{} & $0.1$ & $0.1$ & $70.40 \pm 0.41$ & \B $12.43 \pm 0.24$ & $46.48 \pm 2.14$ & $87.50 \pm 0.87$\\
                            \qffl{} & $0.5$ & $0.1$ & $70.58 \pm 0.73$ & $13.60 \pm 0.47$ & \B $46.50 \pm 2.96$ & $88.05 \pm 1.38$ \\
                            \qffl{} & $1.0$ & $0.1$ & $70.27 \pm 0.61$ & $13.31 \pm 0.46$ & $45.95 \pm 1.38$ & $87.55 \pm 0.90$\\
                            \qffl{} & $0.1$ & $1.0$ & $70.95 \pm 0.83$ & $12.70 \pm 0.74$ & $46.45 \pm 4.07$ & $87.00 \pm 1.00$ \\
                            \qffl{} & $0.5$ & $1.0$ & $70.98 \pm 0.52$ & $12.96 \pm 0.63$ & $45.95 \pm 1.45$ & $88.00 \pm 0.00$ \\
                            \qffl{} & $1.0$ & $1.0$ & $69.98 \pm 0.67$ & $13.15 \pm 1.12$ & $45.95 \pm 2.49$ & $87.53 \pm 0.82$\\ \bottomrule
\end{tabular}
\vspace{5pt}
\end{table}
\newpage
\begin{table}[th]
\footnotesize
\centering
\caption{Test accuracy of users on CIFAR-10 with local batch size $b=10$, fraction of users $p=0.2$, local learning rate $\eta=0.01$, total communication rounds $2000$.  The reported statistics are averaged across $4$ runs with different random seeds.\label{table:cifar-user-2}}
\begin{tabular}{lll|llllll} \toprule
        \multicolumn{3}{c}{Algorithm}  &  Average (\%) &  Std. (\%)  & Worst $5$\% (\%) & Best $5$\% (\%) \\\midrule
                        Name & $\upeta$ & decay     &   \multicolumn{4}{c}{}\\ \midrule
                        \fedMGDA{} &  &  & $66.50 \pm 1.77$ & $23.22 \pm 1.20$ & $19.48 \pm 4.54$ & \B $91.53 \pm 2.14$ \\
                        \fedMGDAn{}& $1.0$& $0$ & $66.91 \pm 1.15$ & $16.28 \pm 1.12$ & $36.00 \pm 6.16$ & $88.00 \pm 2.00$  \\
                        \fedMGDAn{}& $1.0$& $1/10$& \B $70.64 \pm 0.35$ & $16.23 \pm 0.84$ & $35.95 \pm 4.66$ & $87.55 \pm 0.90$ \\
                        \fedMGDAn{}& $1.5$& $1/10$&  $69.29 \pm 0.88$ & $13.52 \pm 0.69$ & $44.45 \pm 1.64$ & $86.55 \pm 0.84$ \\
                        \fedMGDAn{}& $1.0$ & $1/40$& $68.47 \pm 0.65$ & $18.07 \pm 2.84$ & $32.95 \pm 9.48$ & $88.55 \pm 2.15$  \\
                        \fedMGDAn{}& $1.5$ & $1/40$& $68.76 \pm 0.54$ & $17.21 \pm 1.28$ & $34.43 \pm 6.26$ & $87.53 \pm 0.88$  \\\midrule
                        Name   & $\upeta$ & decay     &   \multicolumn{4}{c}{}\\\midrule
                        \fedMGDAprox    &       &        & $70.06 \pm 0.67$ & $13.69 \pm 0.46$ & $42.43 \pm 2.99$ & $87.03 \pm 0.98$\\
                        \fedMGDAproxn{} & $1.0$ & $0$  & $68.41 \pm 0.88$ & $16.30 \pm 1.84$ & $37.98 \pm 3.15$ & $89.50 \pm 1.66$\\
                        \fedMGDAproxn{} & $1.0$ & $1/10$ & $68.19 \pm 0.96$ & $19.25 \pm 2.57$ & $28.90 \pm 7.02$ & $88.55 \pm 0.84$ \\
                        \fedMGDAproxn{} & $1.5$ & $1/10$ & $68.92 \pm 0.78$ & $14.64 \pm 0.59$ & $41.42 \pm 2.98$ & $87.55 \pm 1.61$ \\ 
                        \fedMGDAproxn{} & $1.0$ & $1/40$ & $68.87 \pm 0.60$ & $17.47 \pm 1.96$ & $31.48 \pm 7.49$ & $89.00 \pm 2.24$\\
                        \fedMGDAproxn{} & $1.5$ & $1/40$ & $69.29 \pm 0.66$ & $16.67 \pm 1.49$ & $35.48 \pm 10.31$ & $88.03 \pm 1.41$ \\ \midrule
                        Name   & $\upeta$ & decay     &   \multicolumn{4}{c}{}\\ \midrule
                        \fedavg   &      &     & $69.83 \pm 0.69$ & $13.17 \pm 0.60$ & $46.95 \pm 1.70$ & $86.57 \pm 1.64$ \\
                        \fedavgn{}& $1.0$& $0$ &  $69.05 \pm 0.94$ & $14.14 \pm 1.53$ & $39.50 \pm 6.22$ & $87.50 \pm 0.87$ \\
                        \fedavgn{}& $1.0$& $1/10$ & $70.52 \pm 1.18$ & $15.22 \pm 1.74$ & $39.48 \pm 7.90$ & $88.03 \pm 0.04$ \\
                        \fedavgn{}& $1.5$& $1/10$ & $69.27 \pm 0.97$ & $14.42 \pm 0.85$ & $43.95 \pm 3.76$ & $89.00 \pm 1.00$ \\ 
                        \fedavgn{}& $1.0$& $1/40$ & $69.34 \pm 1.75$ & $15.64 \pm 2.90$ & $38.45 \pm 8.50$ & $86.53 \pm 0.85$ \\
                        \fedavgn{}& $1.5$& $1/40$ & $69.87 \pm 0.59$ & $14.13 \pm 0.14$ & $43.95 \pm 1.42$ & $86.57 \pm 1.64$ \\ \midrule
                        Name   & $\mu$ &      &   \multicolumn{4}{c}{}\\ \midrule
                        \fedprox{} & $0.01$ &    & $69.74 \pm 0.84$ & $13.26 \pm 0.44$ & $47.90 \pm 1.45$ & $87.50 \pm 0.87$ \\
                        \fedprox{} & $0.1$ &    & $70.06 \pm 0.67$ & $13.69 \pm 0.46$ & $42.43 \pm 2.99$ & $87.03 \pm 0.98$ \\
                        \fedprox{} & $0.5$ &    & $69.64 \pm 0.74$ & $13.55 \pm 0.50$ & $44.90 \pm 1.67$ & $88.00 \pm 0.00$ \\ \midrule
                        Name   & $q$ & $L$     &   \multicolumn{4}{c}{}\\ \midrule
                        \qffl{} & $0.1$ & $0.1$ & $70.21 \pm 0.71$ & $13.23 \pm 0.42$ & $46.98 \pm 0.98$ & $87.03 \pm 0.98$\\
                        \qffl{} & $0.5$ & $0.1$ & $70.34 \pm 0.71$ & $13.05 \pm 0.27$ & $47.43 \pm 1.64$ & $88.00 \pm 0.00$  \\
                        \qffl{} & $1.0$ & $0.1$ & $70.19 \pm 0.79$ & \B $12.79 \pm 0.23$ & \B $48.42 \pm 0.91$ & $88.00 \pm 0.00$ \\
                        \qffl{} & $0.1$ & $1.0$ & $70.18 \pm 0.53$ & $13.04 \pm 0.38$ & $47.45 \pm 0.84$ & $88.05 \pm 1.38$ \\
                        \qffl{} & $0.5$ & $1.0$ & $70.30 \pm 0.70$ & $13.28 \pm 1.03$ & $45.45 \pm 1.71$ & $87.55 \pm 0.84$ \\
                        \qffl{} & $1.0$ & $1.0$ & $69.39 \pm 0.35$ & $13.75 \pm 0.34$ & $43.98 \pm 1.41$ & $87.08 \pm 0.98$\\ \bottomrule
\end{tabular}
\vspace{5pt}
\end{table}
\newpage
\begin{table}[th]
\footnotesize	
\centering
\caption{Test accuracy of users on CIFAR-10 with local batch size $b=400$, fraction of users $p=0.1$, local learning rate $\eta=0.1$, total communication rounds $3000$.  The reported statistics are averaged across $4$ runs with different random seeds.\label{table:cifar-user-3}}
\begin{tabular}{lll|llllll} \toprule
        \multicolumn{3}{c|}{Algorithm}  &  Average (\%) &  Std. (\%)  & Worst $5$\% (\%) & Best $5$\% (\%) \\\midrule
                        Name & $\upeta$ & decay     &   \multicolumn{4}{c}{}\\ \midrule
                        \fedMGDA{} &  &          & $68.58 \pm 1.99$ & $15.56 \pm 1.63$ & $38.40 \pm 6.34$ & $87.55 \pm 1.61$ \\
                        \fedMGDAn{}& $1.0$& $0$  & $37.62 \pm 30.63$ & $19.77 \pm 5.80$ & $21.48 \pm 21.58$ &\B $93.03 \pm 6.98$ \\
                        \fedMGDAn{}& $1.0$& $1/10$ & $68.82 \pm 1.56$ & $14.66 \pm 1.03$ & $41.92 \pm 3.07$ & $87.50 \pm 0.87$ \\
                        \fedMGDAn{}& $1.5$& $1/10$ & $67.21 \pm 0.89$ & $13.76 \pm 0.89$ & $43.88 \pm 2.49$ & $85.50 \pm 1.66$ \\
                        \fedMGDAn{}& $1.0$ & $1/40$& \B $70.78 \pm 0.63$ & \B $12.27 \pm 0.35$ & \B $47.00 \pm 2.24$ & $87.00 \pm 1.00$  \\
                        \fedMGDAn{}& $1.5$ & $1/40$&  $67.02 \pm 1.04$ & $13.44 \pm 0.88$ & $43.48 \pm 1.65$ & $85.50 \pm 0.87$ \\\midrule
                        Name   & $\upeta$ & decay     &   \multicolumn{4}{c}{}\\ \midrule
                        \fedMGDAproxn{} & $1.0$ &    $0$    & $52.98 \pm 27.14$ & $17.18 \pm 3.90$ & $31.85 \pm 18.46$ & $91.53 \pm 4.96$ \\
                        \fedMGDAproxn{} & $1.0$ & $1/10$ & $69.10 \pm 1.58$ & $14.14 \pm 0.73$ & $43.42 \pm 0.83$ & $87.50 \pm 0.87$ \\
                        \fedMGDAproxn{} & $1.5$ & $1/10$ & $66.66 \pm 0.17$ & $14.51 \pm 0.37$ & $38.00 \pm 1.41$ & $84.53 \pm 0.85$ \\
                        \fedMGDAproxn{} & $1.0$ & $1/40$ & $69.55 \pm 0.52$ & $13.40 \pm 1.04$ & $45.50 \pm 3.84$ & $86.55 \pm 1.61$ \\
                        \fedMGDAproxn{} & $1.5$ & $1/40$ & $67.77 \pm 0.83$ & $14.00 \pm 0.99$ & $41.93 \pm 3.78$ & $85.53 \pm 0.82$ \\ \midrule
                        Name   & $\upeta$ & decay     &   \multicolumn{4}{c}{}\\\midrule
                        \fedavg &  &                & $66.28 \pm 2.04$ & $16.92 \pm 3.78$ & $34.48 \pm 11.77$ & $87.53 \pm 0.82$\\
                        \fedavgn{}& $1.0$&    $0$    & $66.11 \pm 0.91$ & $14.95 \pm 0.67$ & $36.95 \pm 4.32$ & $86.03 \pm 1.38$\\
                        \fedavgn{}& $1.0$& $1/10$   & $67.76 \pm 0.74$ & $14.34 \pm 0.96$ & $40.50 \pm 2.18$ & $86.50 \pm 0.87$\\
                        \fedavgn{}& $1.5$& $1/10$   & $64.04 \pm 0.99$ & $14.87 \pm 1.97$ & $35.95 \pm 6.12$ & $81.55 \pm 1.61$ \\ 
                        \fedavgn{}& $1.0$& $1/40$   & $69.50 \pm 0.45$ & $13.27 \pm 0.77$ & $44.45 \pm 2.92$ & $87.03 \pm 1.69$\\
                        \fedavgn{}& $1.5$& $1/40$   & $66.54 \pm 0.97$ & $13.20 \pm 0.92$ & $42.95 \pm 2.21$ & $84.07 \pm 1.41$ \\ \midrule
                        Name   & $\mu$ &      &   \multicolumn{4}{c}{}\\ \midrule
                        \fedprox{} & $0.01$ &    & $67.43 \pm 3.57$ & $14.75 \pm 1.36$ & $40.93 \pm 7.87$ & $87.05 \pm 1.00$ \\
                        \fedprox{} & $0.1$ &    & $68.35 \pm 1.65$ & $16.49 \pm 2.83$ & $36.93 \pm 9.97$ & $87.05 \pm 1.00$\\
                        \fedprox{} & $0.5$ &    & $68.89 \pm 1.17$ & $17.46 \pm 3.63$ & $29.90 \pm 12.64$ & $87.05 \pm 1.00$ \\ \midrule
                        Name   & $q$ & $L$      &   \multicolumn{4}{c}{}\\ \midrule
                        \qffl{} & $0.1$ & $0.1$  & $70.53 \pm 0.73$ & $13.34 \pm 0.39$ & $44.95 \pm 4.15$ & $89.00 \pm 1.73$ \\
                        \qffl{} & $0.5$ & $0.1$  & $67.78 \pm 1.81$ & $17.56 \pm 1.99$ & $31.48 \pm 10.12$ & $90.57 \pm 2.21$ \\
                        \qffl{} & $1.0$ & $0.1$  & $66.86 \pm 3.02$ & $18.56 \pm 3.80$ & $28.93 \pm 11.67$ & $87.07 \pm 1.69$ \\
                        \qffl{} & $0.1$ & $1.0$  & $64.73 \pm 7.39$ & $16.01 \pm 2.83$ & $33.45 \pm 10.46$ & $84.05 \pm 4.71$ \\
                        \qffl{} & $0.5$ & $1.0$  & $68.47 \pm 1.74$ & $15.33 \pm 1.24$ & $37.92 \pm 4.51$ & $87.05 \pm 1.00$ \\
                        \qffl{} & $1.0$ & $1.0$  & $69.60 \pm 0.98$ & $14.19 \pm 0.15$ & $41.95 \pm 5.14$ & $88.55 \pm 2.95$ \\ \bottomrule
\end{tabular}
\vspace{5pt}
\end{table}
\newpage
\begin{table}[ht]
\footnotesize
\centering
\caption{Test accuracy of users on CIFAR-10 with local batch size $b=400$, fraction of users $p=0.2$, local learning rate $\eta=0.1$, total communication rounds $3000$.  The reported statistics are averaged across $4$ runs with different random seeds.\label{table:cifar-user-4}}
\begin{tabular}{lll|llllll} \toprule
        \multicolumn{3}{c|}{Algorithm}  &  Average (\%) &  Std. (\%)  & Worst $5$\% (\%) & Best $5$\% (\%) \\\midrule
         Name            & $\upeta$  & decay     &   \multicolumn{4}{c}{}\\ \midrule
         \fedMGDA{}      &          &           & $65.18 \pm 5.41$ & $16.52 \pm 4.70$ & $33.95 \pm 17.40$ & $84.55 \pm 1.67$ \\
        \fedMGDAn{}      & $1.0$    &  $0$      & $6.50 \pm 1.66$ & $24.40 \pm 3.10$ & $0.00 \pm 0.00$ & $75.00 \pm 43.30$ \\
        \fedMGDAn{}      & $1.0$    & $1/10$    & $70.14 \pm 0.72$ & $13.53 \pm 1.28$ & $44.93 \pm 2.23$ & $87.53 \pm 0.88$ \\
        \fedMGDAn{}      & $1.5$    & $1/10$    & $69.57 \pm 0.74$ & $13.39 \pm 1.09$ & $46.42 \pm 2.95$ & $87.03 \pm 1.03$ \\
        \fedMGDAn{}     & $1.0$     & $1/40$    & $68.09 \pm 0.43$ & $14.59 \pm 0.64$ & $41.98 \pm 2.47$ & $87.00 \pm 1.00$ \\
        \fedMGDAn{}     & $1.5$     & $1/40$    & $69.29 \pm 1.00$ & $12.95 \pm 0.44$ & $46.48 \pm 2.14$ & $86.53 \pm 0.85$ \\ \midrule
        Name            & $\upeta$   & decay     &   \multicolumn{4}{c}{}\\ \midrule
        \fedMGDAproxn{} & $1.0$     & $0$       & $6.75 \pm 1.09$ & $24.99 \pm 1.96$ & $0.00 \pm 0.00$ & $76.25 \pm 41.14$ \\
        \fedMGDAproxn{} & $1.0$     & $1/10$    & \B $70.73 \pm 0.51$ &  $12.62 \pm 0.73$ &  $48.00 \pm 2.45$ & $88.53 \pm 0.85$\\
        \fedMGDAproxn{} & $1.5$     & $1/10$    & $68.79 \pm 0.41$ & $13.99 \pm 1.03$ & $42.95 \pm 3.03$ & $87.00 \pm 1.00$ \\
        \fedMGDAproxn{} & $1.0$     & $1/40$    & $68.38 \pm 0.86$ & $13.53 \pm 0.69$ & $43.98 \pm 0.04$ & $86.05 \pm 1.42$ \\
        \fedMGDAproxn{} & $1.5$     & $1/40$    & $70.14 \pm 1.30$ &\B $11.94 \pm 0.79$ &\B $48.98 \pm 2.25$ & $87.53 \pm 0.88$ \\ \midrule
        Name            & $\upeta$   & decay     &   \multicolumn{4}{c}{}\\\midrule
        \fedavg         &           &           & $67.84 \pm 3.24$ & $13.83 \pm 0.79$ & $42.98 \pm 4.62$ & $86.00 \pm 2.45$\\
        \fedavgn{}      & $1.0$     &    $0$    & $66.11 \pm 0.72$ & $14.04 \pm 0.70$ & $40.93 \pm 2.27$ & $84.53 \pm 1.62$\\
        \fedavgn{}      & $1.0$     & $1/10$    & $69.03 \pm 1.02$ & $14.17 \pm 1.13$ & $42.45 \pm 5.14$ & $87.03 \pm 1.03$\\
        \fedavgn{}      & $1.5$     & $1/10$    & $64.08 \pm 1.24$ & $13.74 \pm 0.43$ & $40.38 \pm 2.09$ & $82.05 \pm 1.34$ \\ 
        \fedavgn{}      & $1.0$     & $1/40$    & $68.89 \pm 0.81$ & $13.43 \pm 0.41$ & $43.98 \pm 1.38$ & $85.57 \pm 0.85$\\
        \fedavgn{}      & $1.5$     & $1/40$    & $65.66 \pm 0.98$ & $14.18 \pm 0.62$ & $40.90 \pm 2.21$ & $84.00 \pm 0.00$\\ \midrule
        Name            & $\mu$     &           &   \multicolumn{4}{c}{}\\ \midrule
        \fedprox{}      & $0.01$    &           & $69.14 \pm 1.25$ & $14.25 \pm 1.13$ & $41.95 \pm 4.47$ & $87.00 \pm 1.00$ \\
        \fedprox{}      & $0.1$     &           & $70.15 \pm 0.73$ & $12.51 \pm 1.44$ & $47.90 \pm 4.69$ & $87.03 \pm 1.75$ \\
        \fedprox{}      & $0.5$     &           & $69.44 \pm 0.82$ & $13.83 \pm 1.84$ & $42.38 \pm 5.33$ & $86.53 \pm 0.85$ \\ \midrule
        Name            & $q$       & $L$       &   \multicolumn{4}{c}{}\\ \midrule
        \qffl{}         & $0.1$     & $0.1$     & $69.42 \pm 1.10$ & $14.34 \pm 1.36$ & $44.98 \pm 4.35$ & $87.55 \pm 0.90$ \\
        \qffl{}         & $0.5$     & $0.1$     & $69.73 \pm 1.69$ & $15.53 \pm 4.18$ & $36.45 \pm 16.43$ & $87.53 \pm 0.88$ \\
        \qffl{}         & $1.0$     & $0.1$     & $65.97 \pm 3.01$ & $17.29 \pm 5.55$ & $31.50 \pm 16.02$ &\B $90.00 \pm 2.45$ \\
        \qffl{}         & $0.1$     & $1.0$     & $68.20 \pm 1.64$ & $14.55 \pm 3.59$ & $40.45 \pm 12.56$ & $86.03 \pm 1.38$ \\
        \qffl{}         & $0.5$     & $1.0$     & $68.50 \pm 2.62$ & $16.98 \pm 4.20$ & $30.43 \pm 16.90$ & $88.05 \pm 1.38$ \\
        \qffl{}         & $1.0$     & $1.0$     & $65.89 \pm 3.50$ & $20.33 \pm 4.07$ & $24.92 \pm 15.00$ & $89.00 \pm 1.73$ \\ \bottomrule
\end{tabular}
\vspace{5pt}
\end{table}

\newpage

\subsection{Fairness full results: first experiment on Federated  EMNIST }
\label{sec:fair_first_emnist}
\Cref{table:emnist-user-1,,table:emnist-user-2} report the full results of the experiment presented in \Cref{table:emnist-summary} for different batch sizes.
\begin{table}[ht]
\footnotesize
\centering
\caption{Test accuracy of users on federated EMNIST with local batch size $b=20$, $10$ users per rounds, local learning rate $\eta=0.1$, total communication rounds $1500$.  The reported statistics are averaged across $4$ runs with different random seeds.\label{table:emnist-user-1}}
\begin{tabular}{lll|llllll} \toprule
        \multicolumn{3}{c|}{Algorithm}  &  Average (\%) &  Std. (\%)  & Worst $5$\% (\%) & Best $5$\% (\%) \\\midrule
         Name            & $\upeta$  & decay     &   \multicolumn{4}{c}{}\\ \midrule
         \fedMGDA{}      &          &          & $85.86 \pm 1.12$ & $14.06 \pm 0.40$ & $59.07 \pm 0.49$ & $100.00 \pm 0.00$\\
        \fedMGDAn{}      & $1.0$    & $0$      & $86.66 \pm 0.53$ & $13.59 \pm 0.18$ & $60.18 \pm 0.56$ & $100.00 \pm 0.00$\\
        \fedMGDAn{}      & $1.0$    & $1/2$    & $85.87 \pm 0.50$ & $14.04 \pm 0.25$ & $58.80 \pm 0.16$ & $100.00 \pm 0.00$ \\
        \fedMGDAn{}      & $2.0$    & $1/2$    & $87.22 \pm 0.67$ & $13.31 \pm 0.17$ & $61.12 \pm 0.12$ & $100.00 \pm 0.00$ \\
        \fedMGDAn{}      & $1.0$    & $1/2$    & $84.95 \pm 0.25$ & $14.64 \pm 0.15$ & $57.41 \pm 0.84$ & $100.00 \pm 0.00$ \\
        \fedMGDAn{}      & $2.0$    & $1/5$    & $86.39 \pm 0.52$ & $13.70 \pm 0.23$ & $60.21 \pm 0.88$ & $100.00 \pm 0.00$  \\\midrule
        Name            & $\upeta$   & decay     &   \multicolumn{4}{c}{}\\ \midrule
        \fedMGDAproxn{} & $1.0$     & $0$      & $86.67 \pm 0.73$ & $13.59 \pm 0.22$ & $60.34 \pm 0.75$ & $100.00 \pm 0.00$  \\
        \fedMGDAproxn{} & $1.0$     & $1/2$    & $85.72 \pm 0.63$ & $14.07 \pm 0.29$ & $58.66 \pm 0.65$ & $100.00 \pm 0.00$\\
        \fedMGDAproxn{} & $2.0$     & $1/2$    & $87.13 \pm 0.79$ & $13.29 \pm 0.21$ & $60.84 \pm 0.74$ & $100.00 \pm 0.00$ \\
        \fedMGDAproxn{} & $1.0$     & $1/5$    & $84.83 \pm 0.40$ & $14.63 \pm 0.22$ & $57.52 \pm 1.21$ & $100.00 \pm 0.00$\\
        \fedMGDAproxn{} & $2.0$     & $1/5$    & $86.31 \pm 0.59$ & $13.68 \pm 0.26$ & $59.99 \pm 0.90$ & $100.00 \pm 0.00$ \\ \midrule
        Name            & $\upeta$   & decay     &   \multicolumn{4}{c}{}\\\midrule
        \fedavg         &           &          & $87.50 \pm 0.30$ & $13.47 \pm 0.37$ & $60.46 \pm 1.23$ & $100.00 \pm 0.00$\\
        \fedavgn{}      & $1.0$     & $0$      & $86.70 \pm 0.63$ & $13.65 \pm 0.18$ & $60.51 \pm 0.98$ & $100.00 \pm 0.00$\\
        \fedavgn{}      & $1.0$     & $1/2$    & $85.95 \pm 0.59$ & $14.02 \pm 0.25$ & $58.79 \pm 0.44$ & $100.00 \pm 0.00$\\
        \fedavgn{}      & $2.0$     & $1/2$    & $87.32 \pm 0.59$ &\B $13.29 \pm 0.12$ & \B $61.13 \pm 0.31$ & $100.00 \pm 0.00$ \\ 
        \fedavgn{}      & $1.0$     & $1/5$    & $84.95 \pm 0.28$ & $14.64 \pm 0.13$ & $57.19 \pm 0.47$ & $100.00 \pm 0.00$\\
        \fedavgn{}      & $2.0$     & $1/5$    & $86.53 \pm 0.46$ & $13.63 \pm 0.19$ & $60.18 \pm 0.56$ & $100.00 \pm 0.00$ \\ \midrule
        Name            & $\mu$     &           &   \multicolumn{4}{c}{}\\ \midrule
        \fedprox{}      & $0.01$    &           &  $87.55 \pm 0.28$ & $13.46 \pm 0.41$ & $60.44 \pm 1.48$ & $100.00 \pm 0.00$ \\
        \fedprox{}      & $0.1$     &           &  $87.37 \pm 0.48$ & $13.44 \pm 0.27$ & $60.55 \pm 1.00$ & $100.00 \pm 0.00$ \\
        \fedprox{}      & $0.5$     &           &  $86.85 \pm 0.51$ & $13.64 \pm 0.19$ & $60.03 \pm 0.99$ & $100.00 \pm 0.00$ \\ \midrule
        Name            & $q$       & $L$       &   \multicolumn{4}{c}{}\\ \midrule
        \qffl{}         & $0.1$     & $0.1$     & $87.65 \pm 0.43$ & $13.45 \pm 0.41$ & $60.37 \pm 1.43$ & $100.00 \pm 0.00$ \\
        \qffl{}         & $0.5$     & $0.1$     & $87.65 \pm 0.15$ & $13.54 \pm 0.38$ & $60.25 \pm 1.50$ & $100.00 \pm 0.00$ \\
        \qffl{}         & $1.0$     & $0.1$     & $87.71 \pm 0.22$ & $13.65 \pm 0.31$ & $59.69 \pm 1.63$ & $100.00 \pm 0.00$ \\
        \qffl{}         & $0.1$     & $1.0$     & $87.67 \pm 0.26$ & $13.41 \pm 0.40$ & $60.49 \pm 1.51$ & $100.00 \pm 0.00$ \\
        \qffl{}         & $0.5$     & $1.0$     & \B $87.72 \pm 0.38$ & $13.50 \pm 0.27$ & $60.58 \pm 1.02$ & $100.00 \pm 0.00$ \\
        \qffl{}         & $1.0$     & $1.0$     & $87.70 \pm 0.42$ & $13.59 \pm 0.07$ & $60.01 \pm 0.61$ & $100.00 \pm 0.00$ \\ \bottomrule
\end{tabular}
\vspace{5pt}
\end{table}
\newpage
\begin{table}[ht]
\footnotesize
\centering
\caption{Test accuracy of users on federated EMNIST with full batch, $10$ users per rounds, local learning rate $\eta=0.1$, total communication rounds $1500$.  The reported statistics are averaged across $4$ runs with different random seeds.\label{table:emnist-user-2}}
\begin{tabular}{lll|llllll} \toprule
        \multicolumn{3}{c|}{Algorithm}  &  Average (\%) &  Std. (\%)  & Worst $5$\% (\%) & Best $5$\% (\%) \\\midrule
         Name            & $\upeta$  & decay     &   \multicolumn{4}{c}{}\\ \midrule
        \fedMGDA{}       &          &            & $85.73 \pm 0.05$ & $14.79 \pm 0.12$ & $55.64 \pm 0.15$ & $100.00 \pm 0.00$ \\
        \fedMGDAn{}      & $1.0$    & $0$        & $86.61 \pm 0.49$ & $14.05 \pm 0.20$ & $58.30 \pm 1.66$ & $100.00 \pm 0.00$ \\
        \fedMGDAn{}      & $1.0$    & $1/2$      & $86.93 \pm 0.07$ & $14.07 \pm 0.20$ & $58.21 \pm 0.84$ & $100.00 \pm 0.00$ \\ 
        \fedMGDAn{}      & $2.0$    & $1/2$      & $87.42 \pm 0.10$ & \B $13.67 \pm 0.18$ & \B $59.94 \pm 0.73$ & $100.00 \pm 0.00$ \\ 
        \fedMGDAn{}      & $1.0$    & $1/5$      & $86.59 \pm 0.24$ & $14.32 \pm 0.22$ & $57.70 \pm 0.61$ & $100.00 \pm 0.00$ \\ 
        \fedMGDAn{}      & $2.0$    & $1/5$      &  \B $87.60 \pm 0.20$ & $13.68 \pm 0.19$ & $59.88 \pm 0.83$ & $100.00 \pm 0.00$ \\ \midrule
        Name            & $\upeta$   & decay     &   \multicolumn{4}{c}{}\\ \midrule
        \fedMGDAproxn{} & $1.0$     & $0$      & $86.63 \pm 0.45$ & $14.04 \pm 0.20$ & $58.26 \pm 1.65$ & $100.00 \pm 0.00$ \\
        \fedMGDAproxn{} & $1.0$     & $1/2$    & $86.94 \pm 0.10$ & $14.03 \pm 0.13$ & $58.40 \pm 0.73$ & $100.00 \pm 0.00$\\
        \fedMGDAproxn{} & $2.0$     & $1/2$    & $87.47 \pm 0.13$ & $13.69 \pm 0.22$ & $59.12 \pm 0.51$ & $100.00 \pm 0.00$ \\
        \fedMGDAproxn{} & $1.0$     & $1/5$    & $86.63 \pm 0.21$ & $14.31 \pm 0.16$ & $57.78 \pm 0.12$ & $100.00 \pm 0.00$\\
        \fedMGDAproxn{} & $2.0$     & $1/5$    & $87.59 \pm 0.19$ & $13.75 \pm 0.18$ & $59.65 \pm 0.82$ & $100.00 \pm 0.00$ \\ \midrule
        Name            & $\upeta$   & decay     &   \multicolumn{4}{c}{}\\\midrule
        \fedavg         &           &            & $84.97 \pm 0.44$ & $15.25 \pm 0.36$ & $54.74 \pm 1.05$ & $100.00 \pm 0.00$ \\
        \fedavgn{}      & $1.0$     & $0$        & $86.81 \pm 0.42$ & $14.05 \pm 0.34$ & $58.21 \pm 1.56$ & $100.00 \pm 0.00$\\
        \fedavgn{}      & $1.0$     & $1/2$      & $86.86 \pm 0.09$ & $14.01 \pm 0.20$ & $58.64 \pm 0.57$ & $100.00 \pm 0.00$ \\
        \fedavgn{}      & $2.0$     & $1/2$      & $87.54 \pm 0.20$ & $13.69 \pm 0.18$ & $59.41 \pm 1.05$ & $100.00 \pm 0.00$ \\
        \fedavgn{}      & $1.0$     & $1/5$      & $86.59 \pm 0.18$ & $14.31 \pm 0.15$ & $57.89 \pm 0.00$ & $100.00 \pm 0.00$ \\
        \fedavgn{}      & $2.0$     & $1/5$      &  $87.57 \pm 0.09$ & $13.74 \pm 0.11$ & $59.18 \pm 0.48$ & $100.00 \pm 0.00$ \\\midrule
        Name            & $\mu$     &           &   \multicolumn{4}{c}{}\\ \midrule
        \fedprox{}      & $0.01$    &           & $84.95 \pm 0.44$ & $15.27 \pm 0.36$ & $54.90 \pm 1.13$ & $100.00 \pm 0.00$ \\
        \fedprox{}      & $0.1$     &           & $84.95 \pm 0.42$ & $15.30 \pm 0.34$ & $54.34 \pm 0.92$ & $100.00 \pm 0.00$ \\
        \fedprox{}      & $0.5$     &           & $84.97 \pm 0.45$ & $15.26 \pm 0.35$ & $54.90 \pm 1.13$ & $100.00 \pm 0.00$ \\ \midrule
        Name            & $q$       & $L$       &   \multicolumn{4}{c}{}\\ \midrule
        \qffl{}         & $0.1$     & $0.1$     & $84.97 \pm 0.44$ & $15.25 \pm 0.37$ & $54.90 \pm 1.13$ & $100.00 \pm 0.00$\\
        \qffl{}         & $0.5$     & $0.1$     &  $84.92 \pm 0.46$ & $15.35 \pm 0.38$ & $54.15 \pm 0.91$ & $100.00 \pm 0.00$\\
        \qffl{}         & $1.0$     & $0.1$     & $84.81 \pm 0.55$ & $15.48 \pm 0.45$ & $53.72 \pm 0.74$ & $100.00 \pm 0.00$ \\
        \qffl{}         & $0.1$     & $1.0$     & $84.92 \pm 0.45$ & $15.27 \pm 0.37$ & $54.77 \pm 1.08$ & $100.00 \pm 0.00$ \\
        \qffl{}         & $0.5$     & $1.0$     & $84.76 \pm 0.55$ & $15.44 \pm 0.41$ & $54.47 \pm 0.92$ & $100.00 \pm 0.00$ \\
        \qffl{}         & $1.0$     & $1.0$     & $84.51 \pm 0.55$ & $15.65 \pm 0.45$ & $53.32 \pm 0.43$ & $100.00 \pm 0.00$ \\ \bottomrule

\end{tabular}
\vspace{5pt}
\end{table}
\newpage

\subsection{Fairness full results: second experiment }
\label{sec:fair_second}
\Cref{fig:cifar-percentage-1,,fig:cifar-percentage-2} show the full results of the experiment presented in \Cref{fig:improved_user_cifar} for different batch sizes.
\begin{figure}[hbt!]
\centering
\includegraphics[width=0.45\columnwidth]{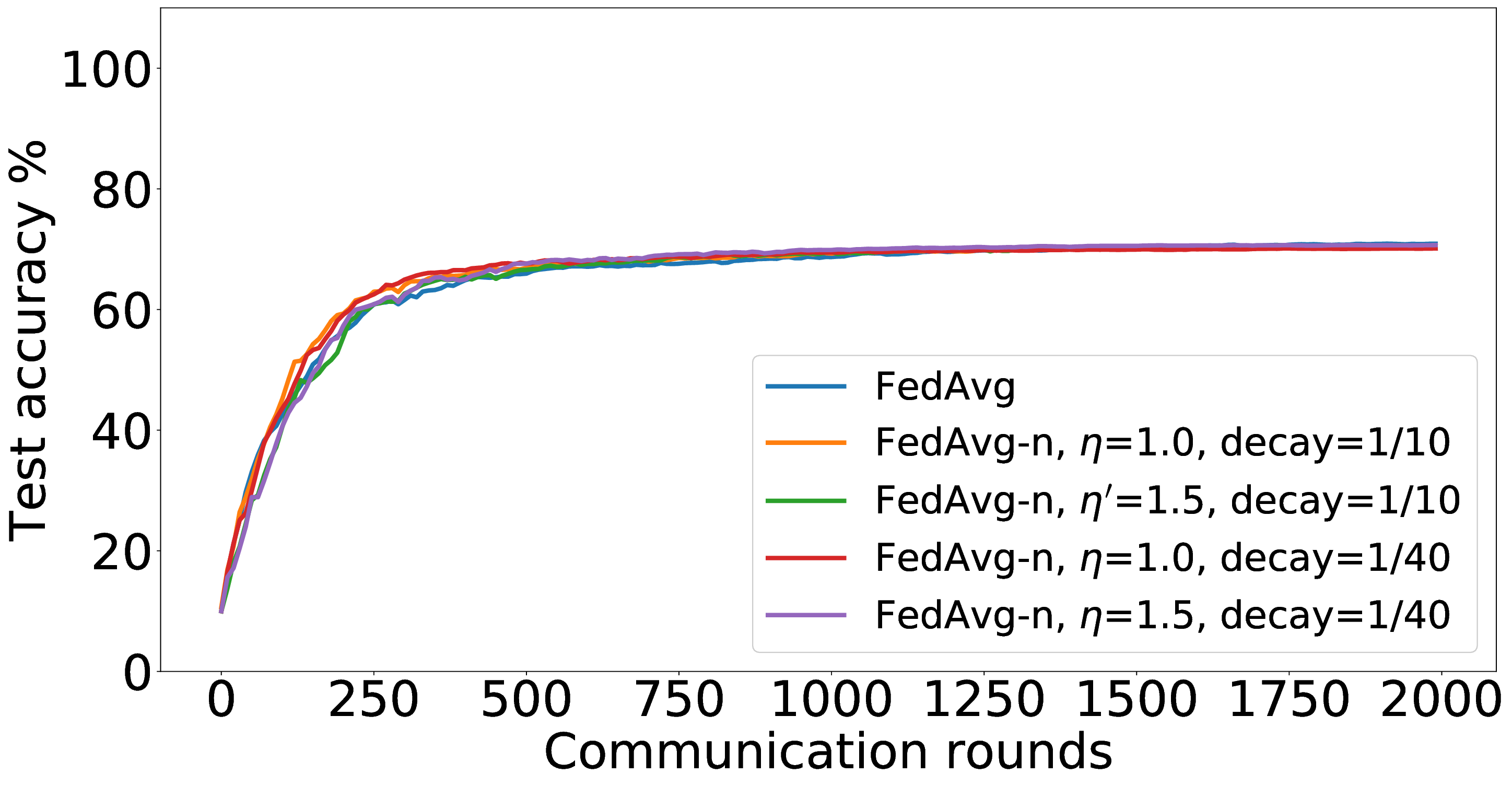}
\includegraphics[width=0.45\columnwidth]{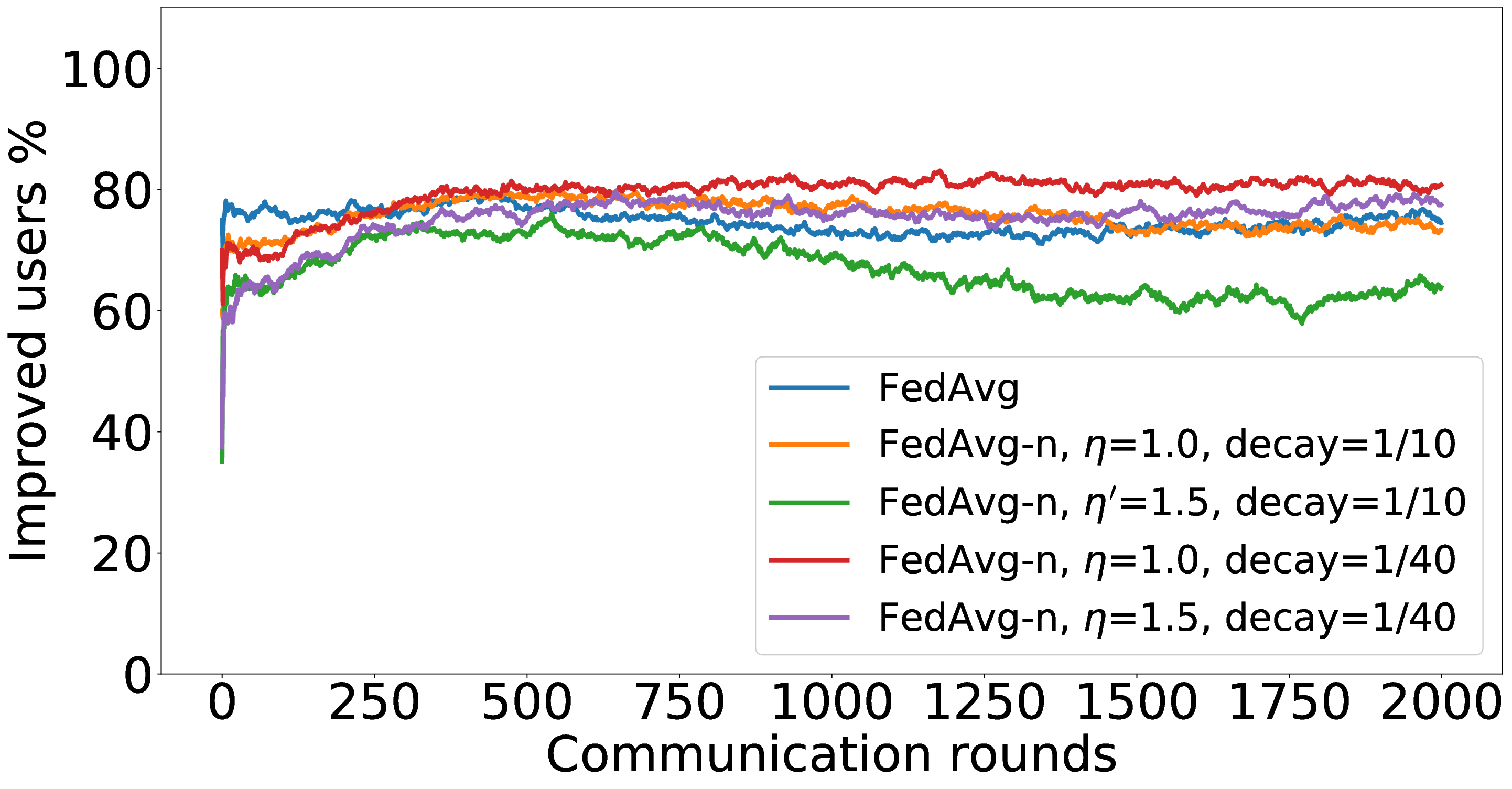}
\includegraphics[width=0.45\columnwidth]{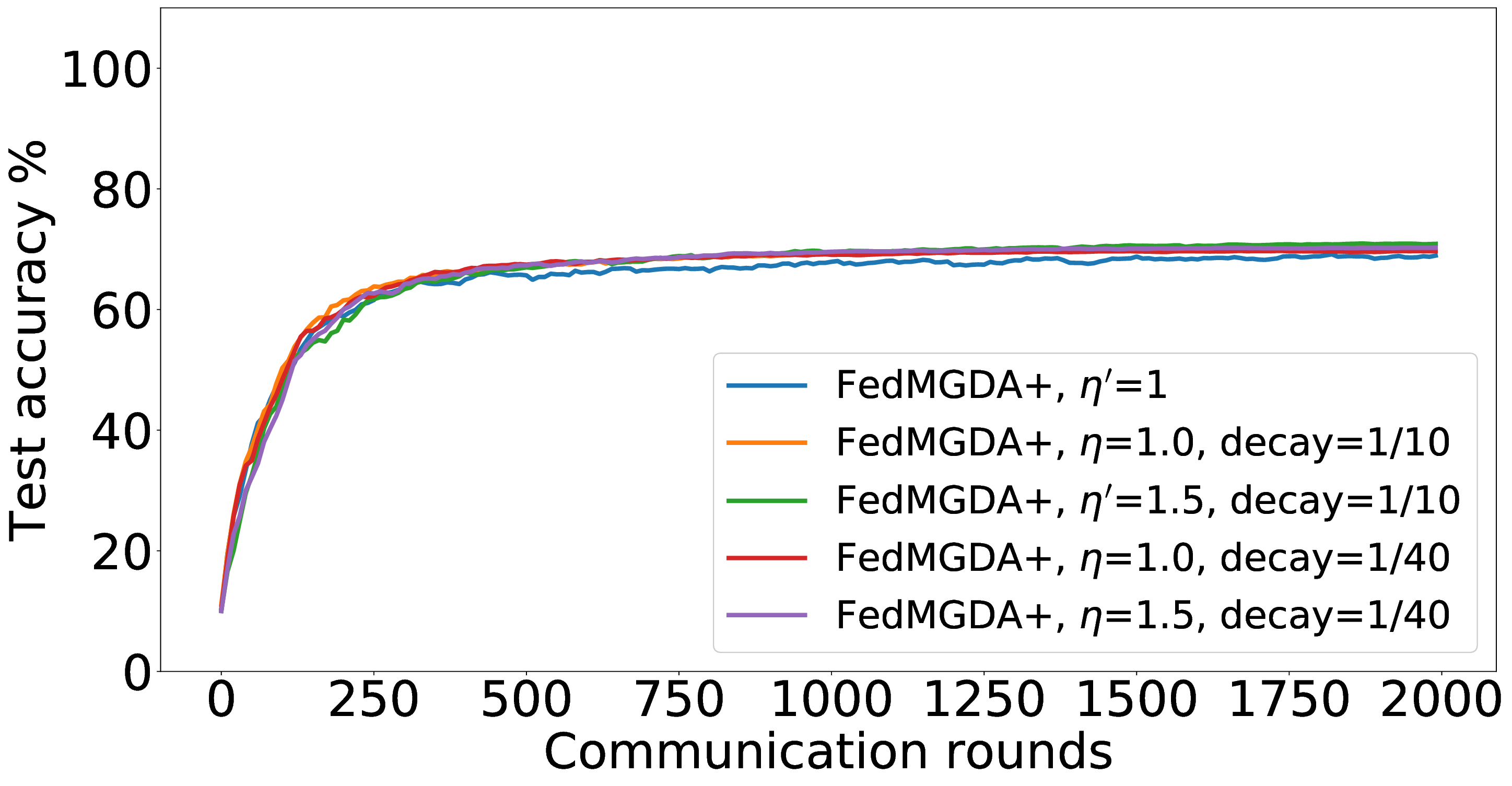}
\includegraphics[width=0.45\columnwidth]{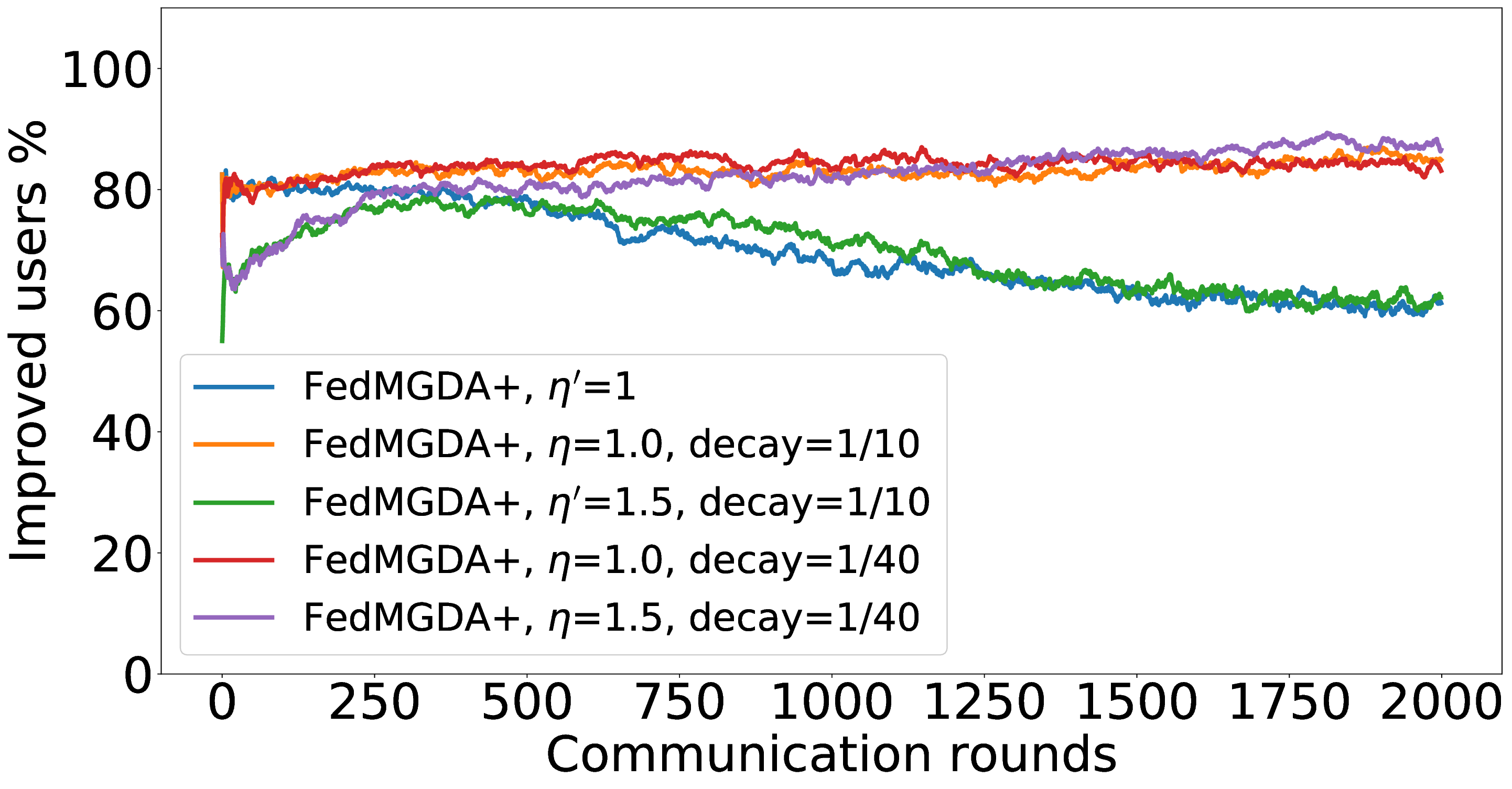}
\includegraphics[width=0.45\columnwidth]{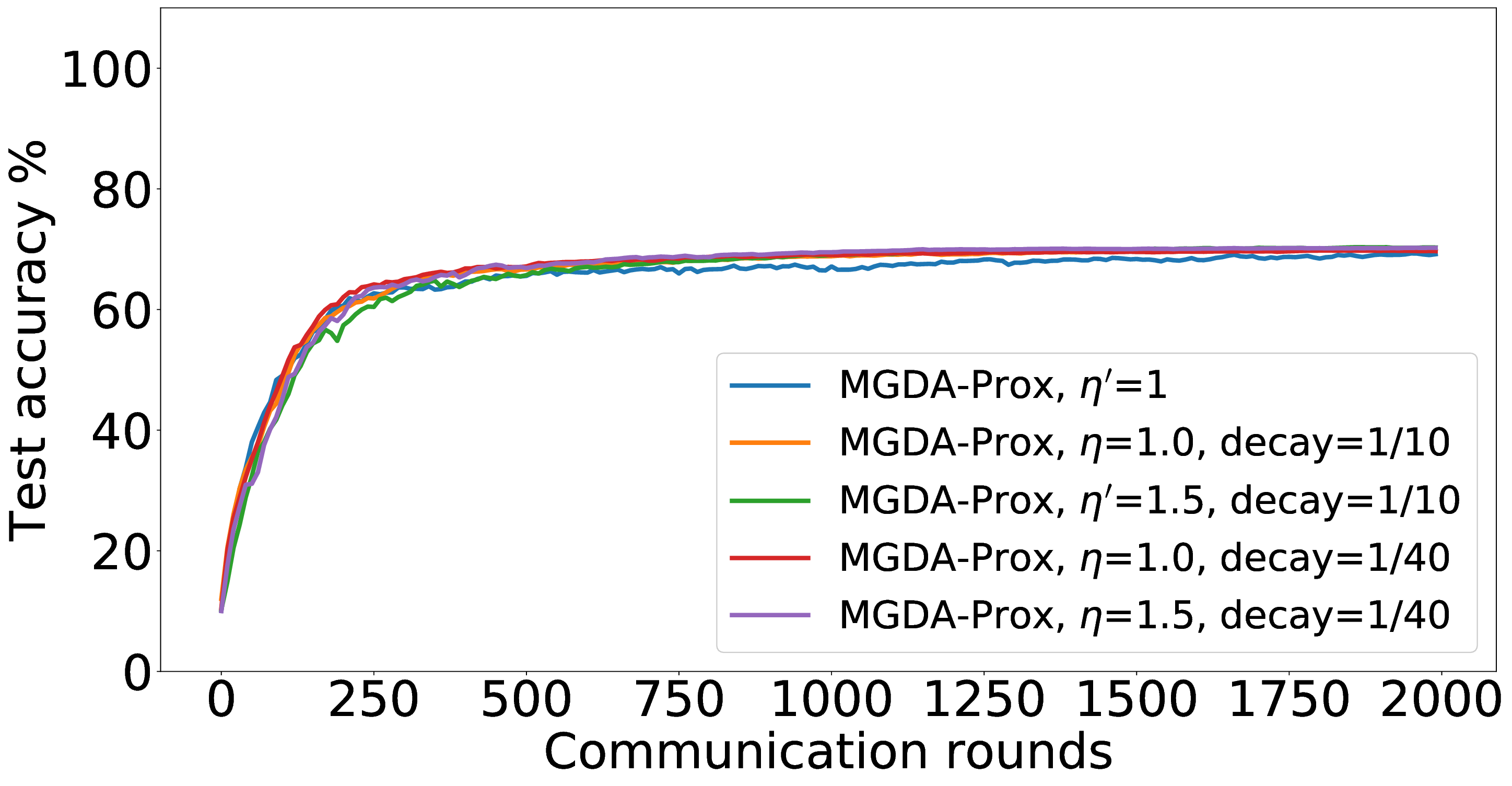}
\includegraphics[width=0.45\columnwidth]{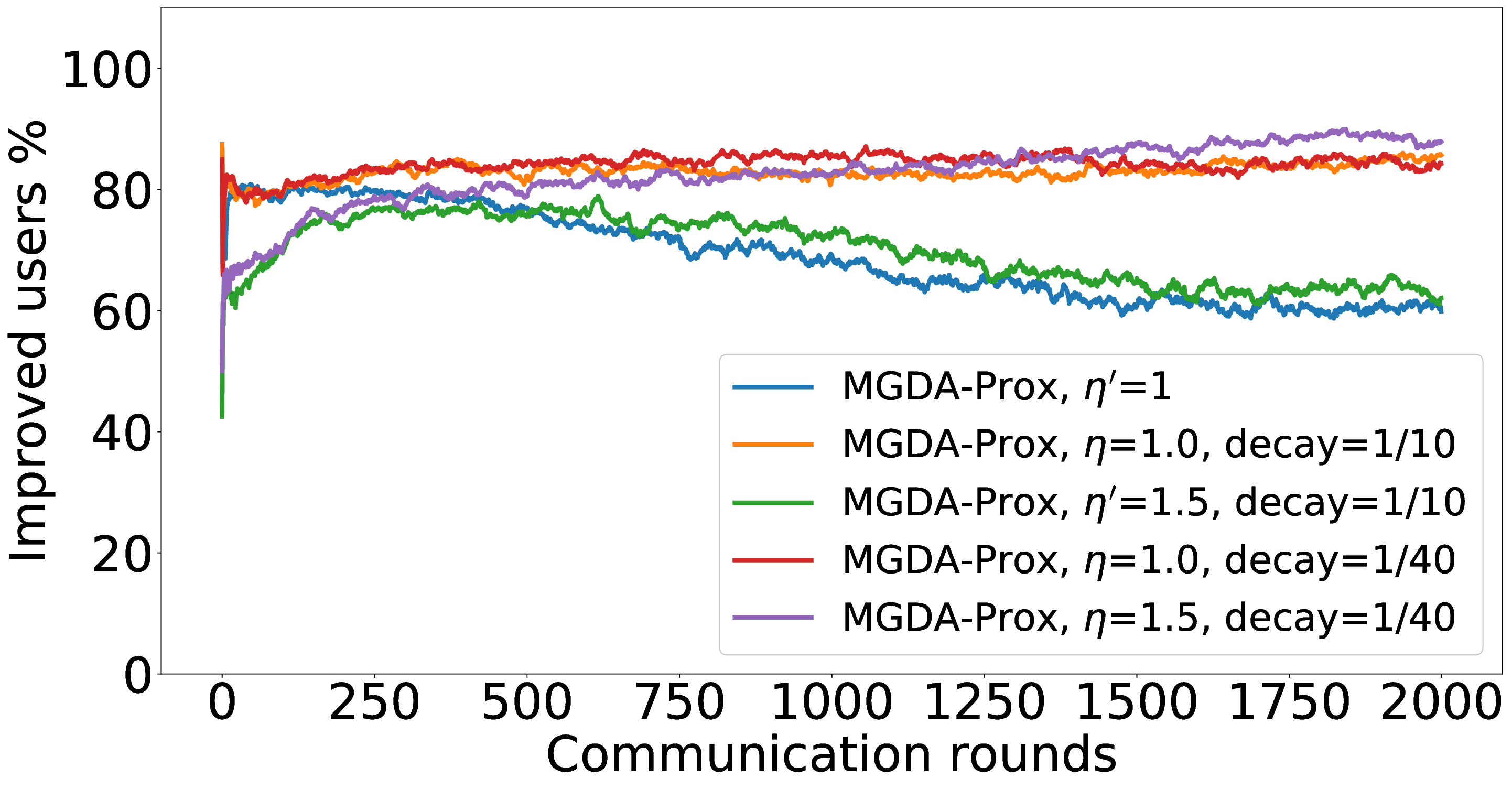}
\includegraphics[width=0.45\columnwidth]{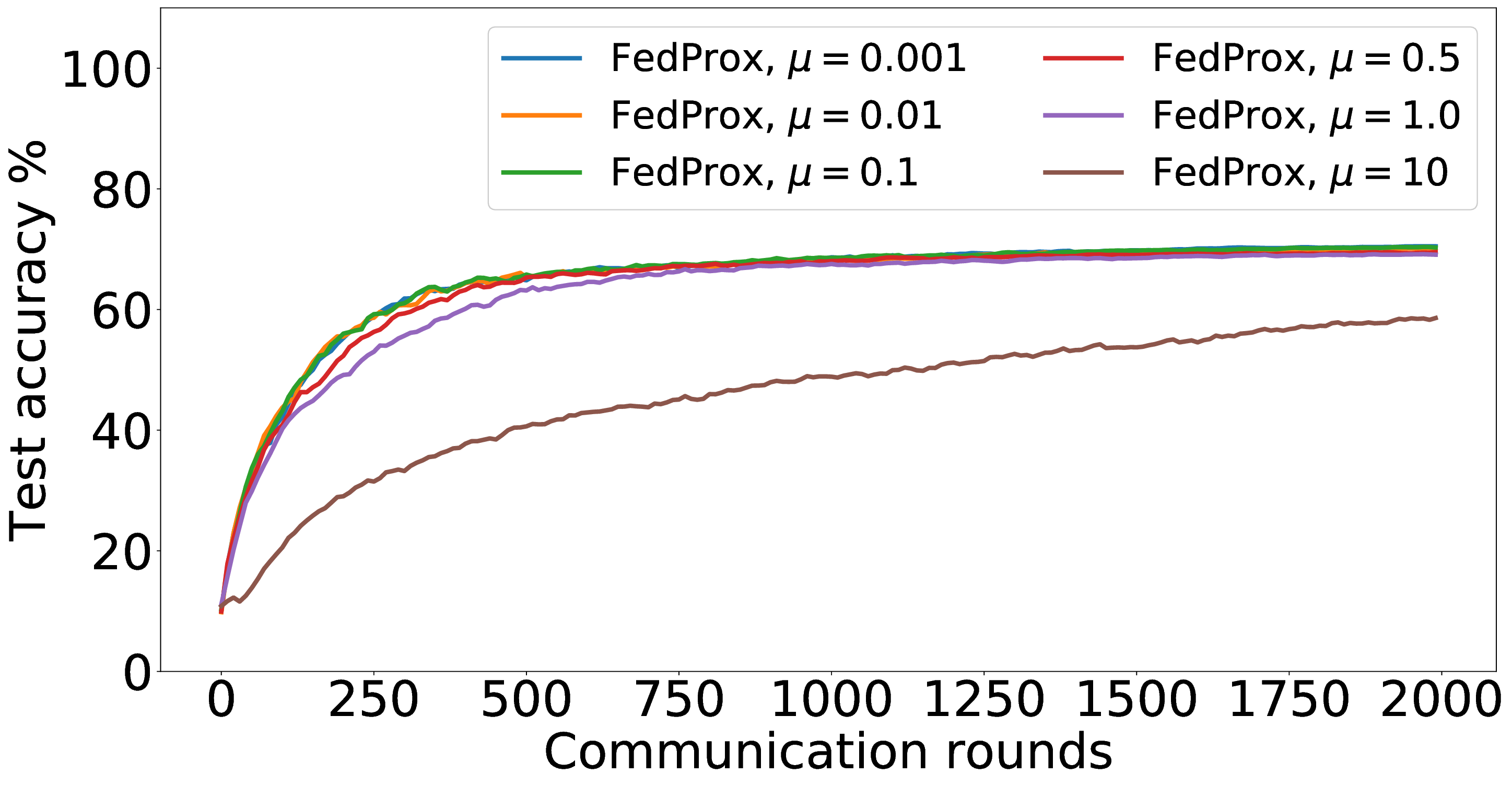}
\includegraphics[width=0.45\columnwidth]{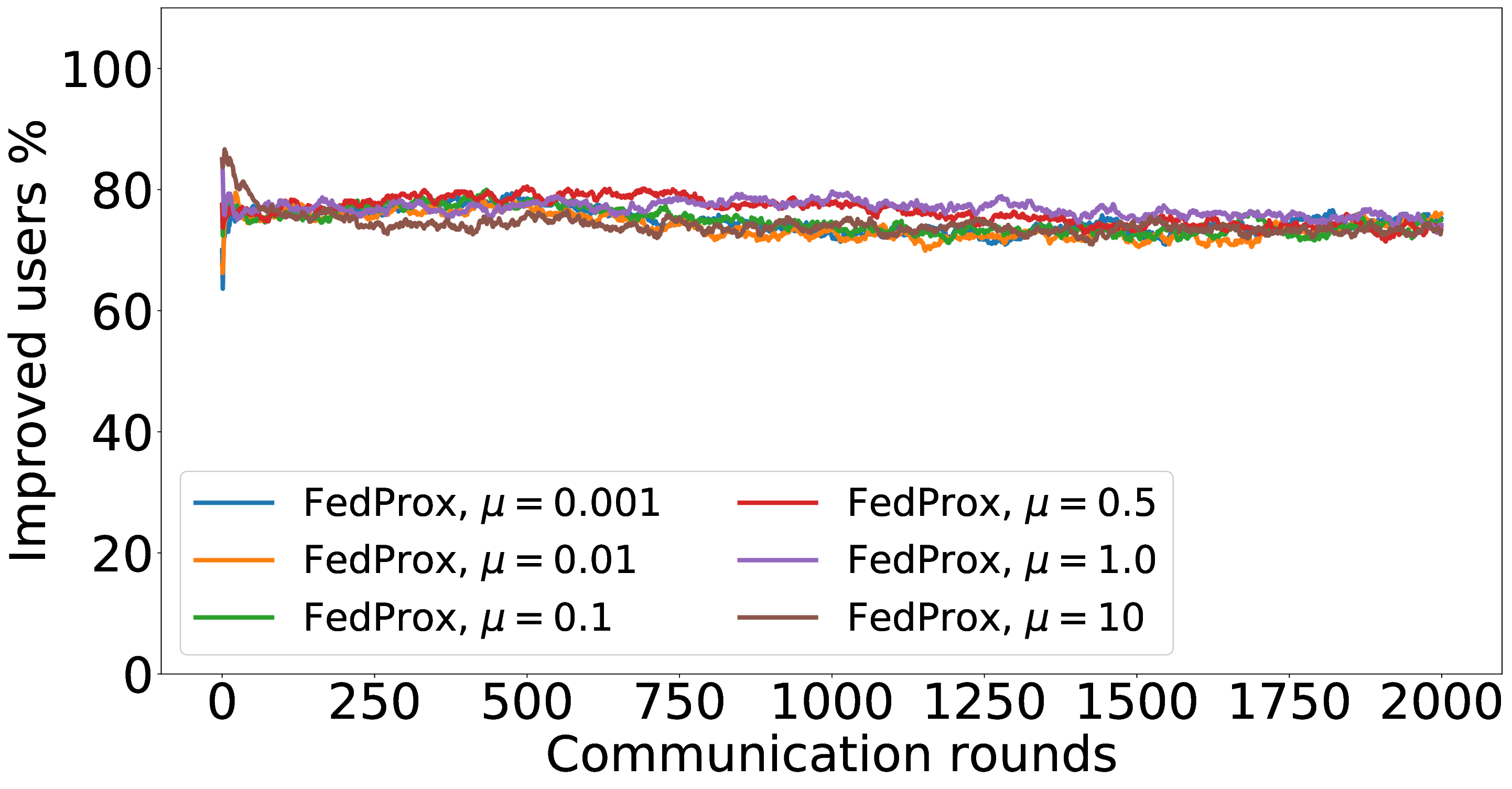}
\includegraphics[width=0.45\columnwidth]{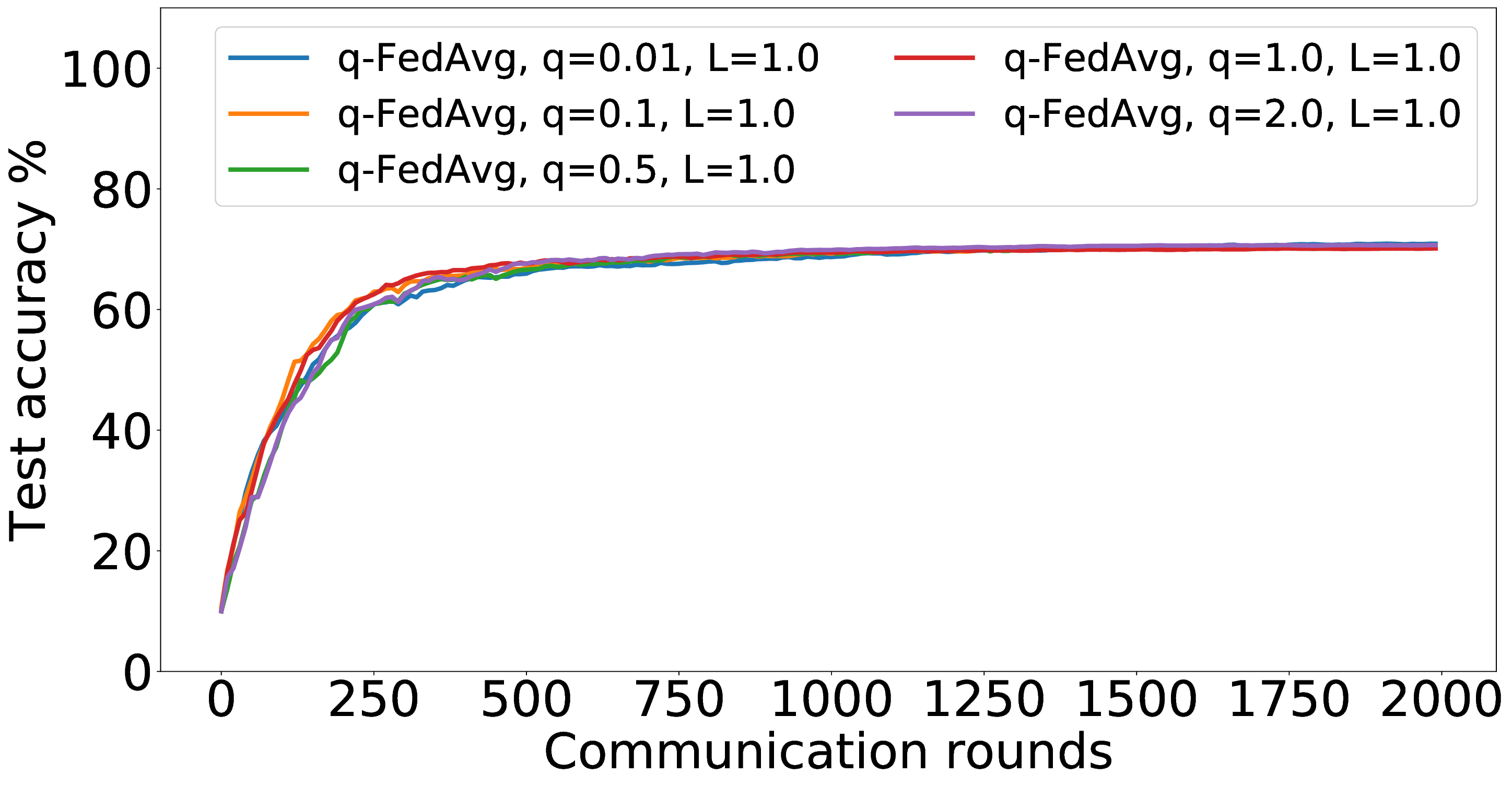}
\includegraphics[width=0.45\columnwidth]{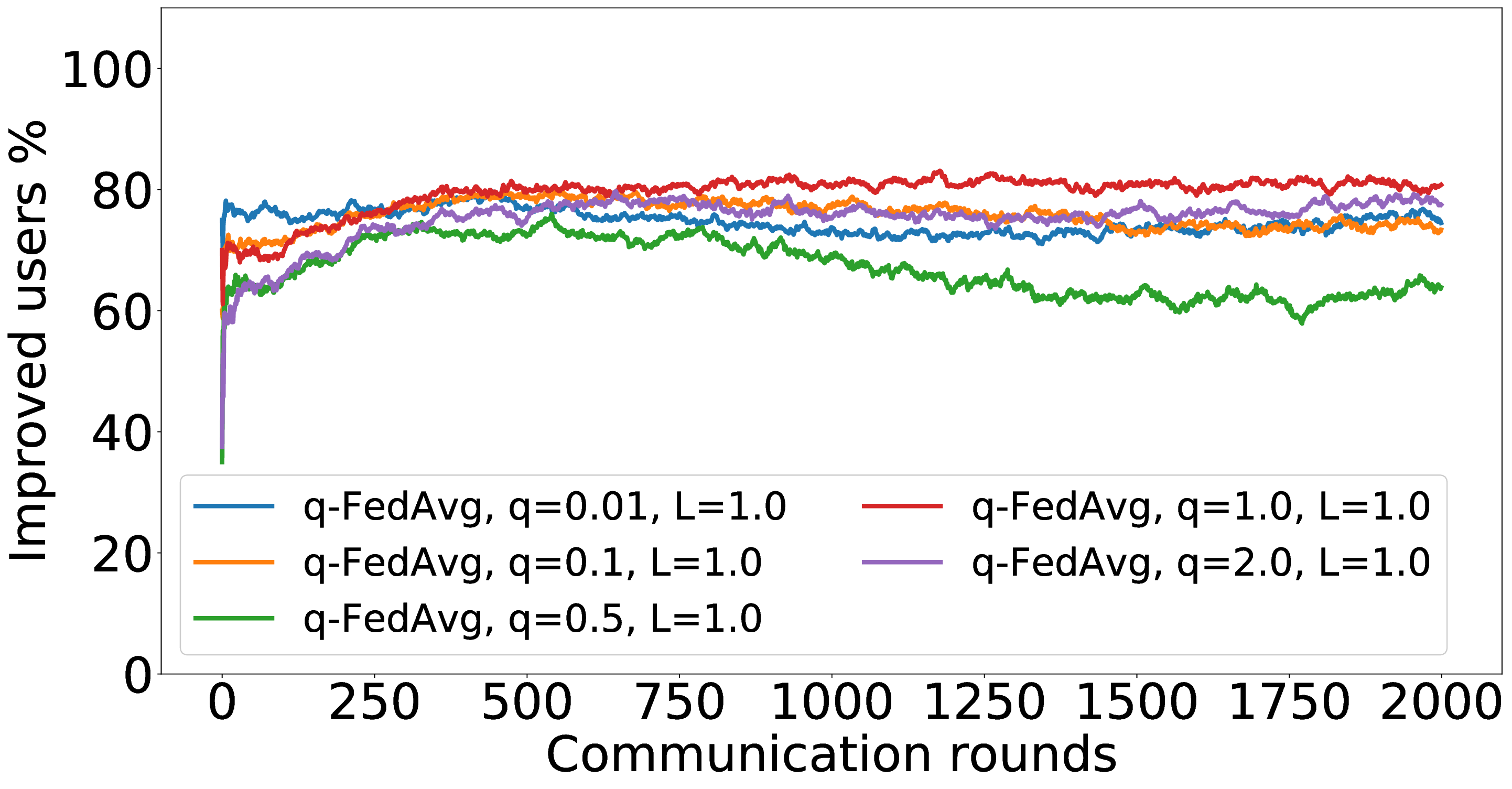}
\caption{\small The percentage of improved users in terms of training loss and the global test accuracy vs communication rounds on CIFAR-10 dataset with $p=0.1$ and $b=10$. Results are averaged across $4$ runs with different random seeds.} 
\label{fig:cifar-percentage-1}
\end{figure}
\newpage
\begin{figure}[hbt!]
\centering
\includegraphics[width=0.46\columnwidth]{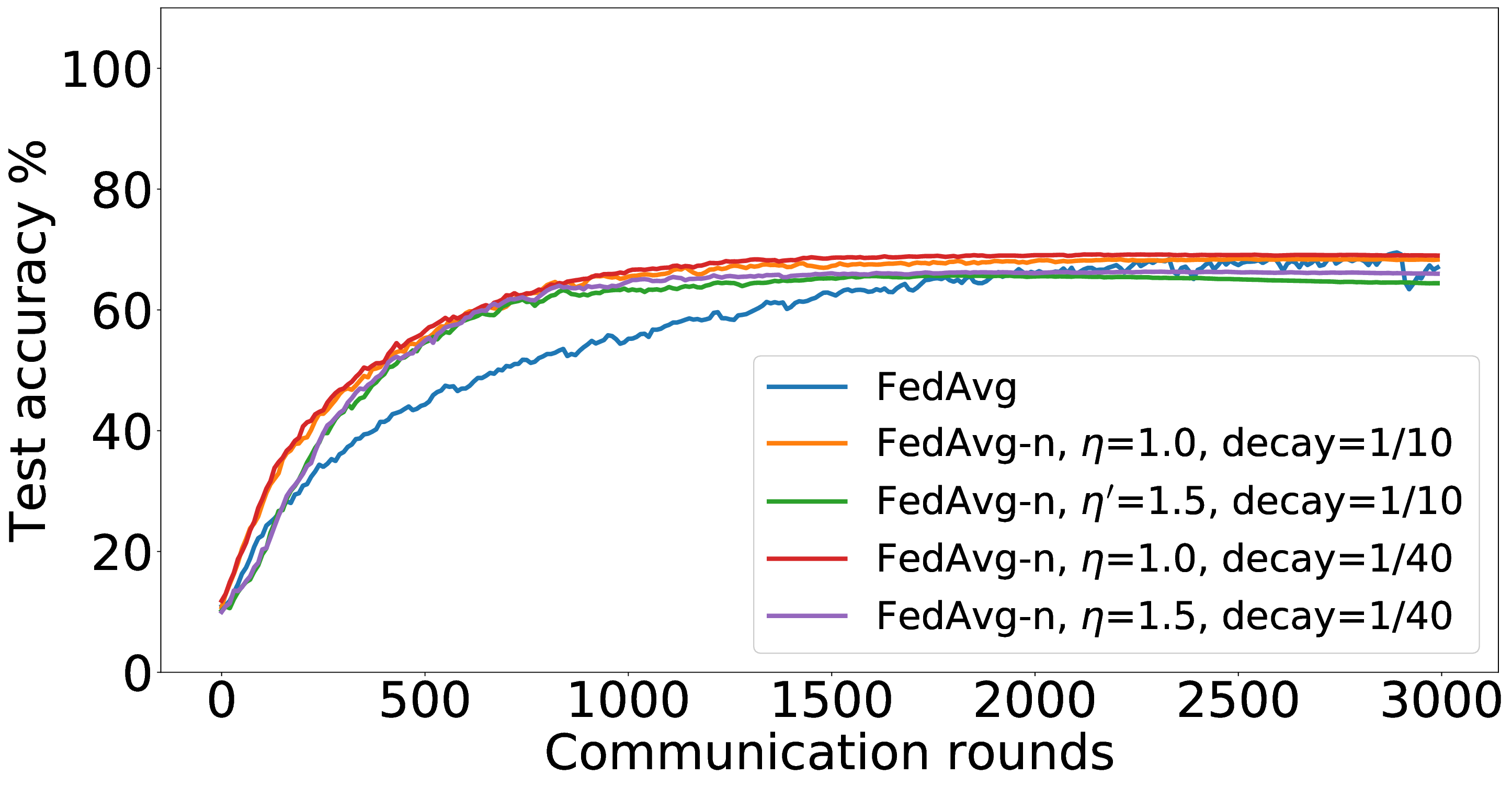}
\includegraphics[width=0.46\columnwidth]{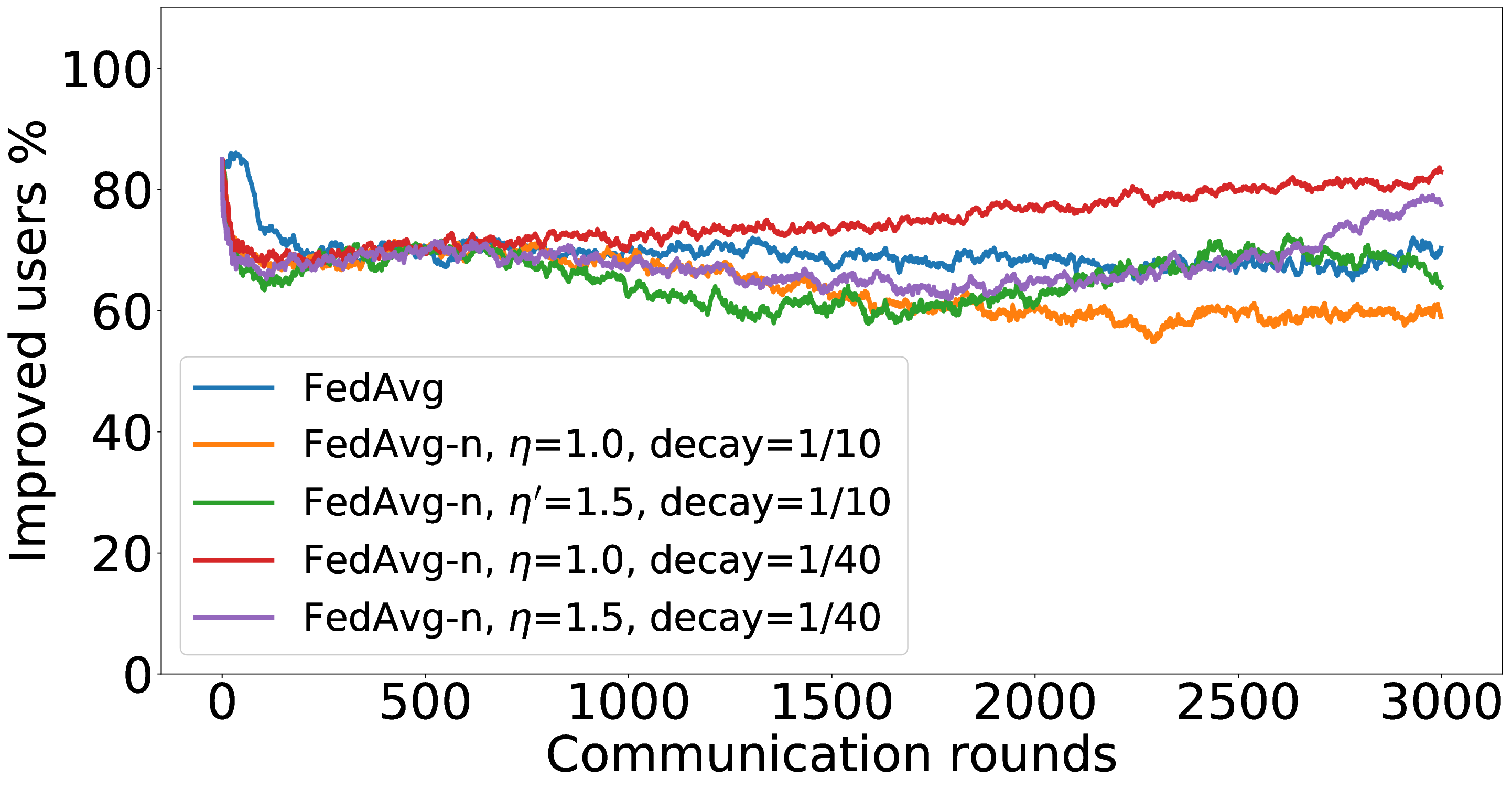}
\includegraphics[width=0.46\columnwidth]{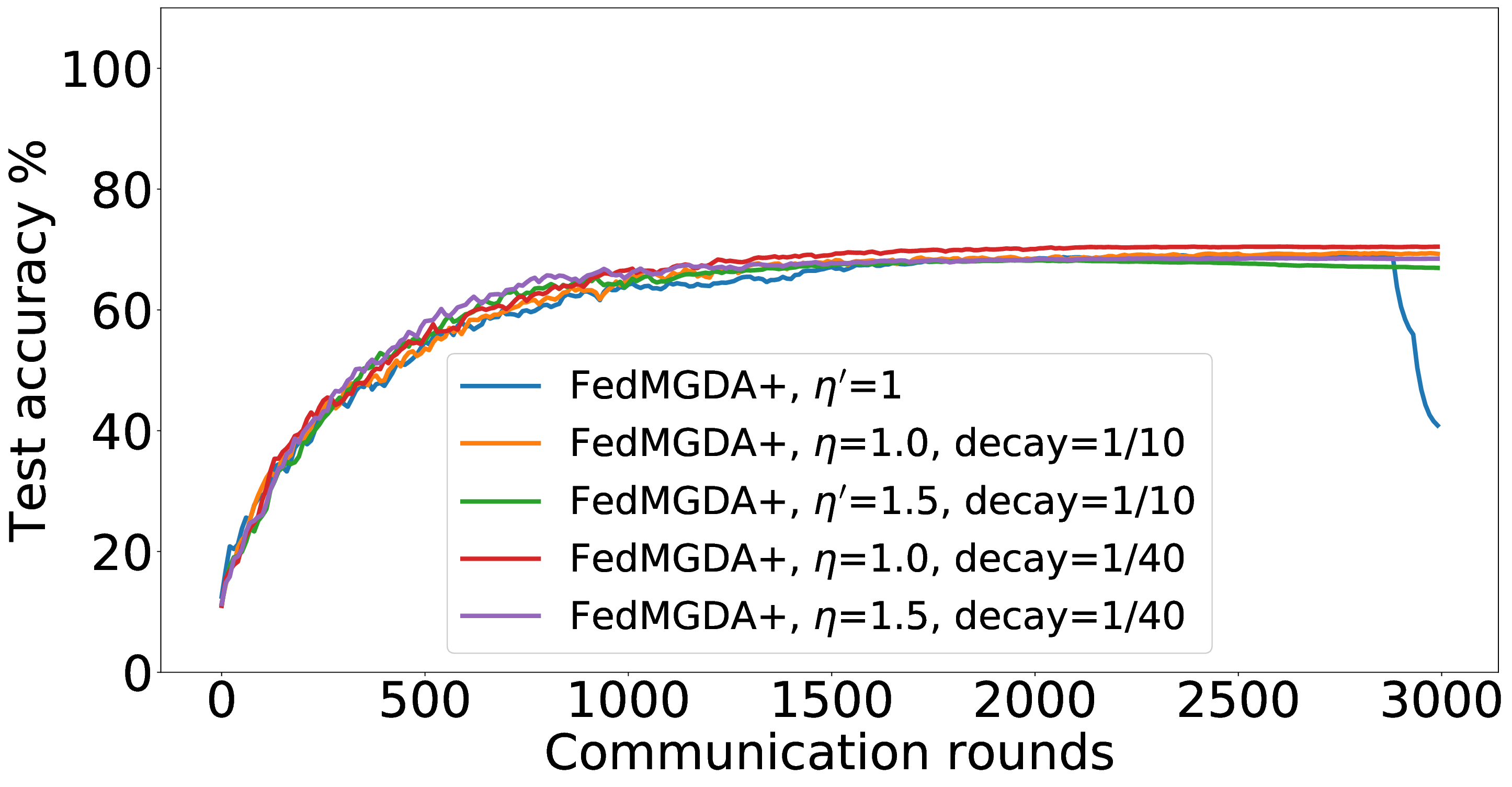}
\includegraphics[width=0.46\columnwidth]{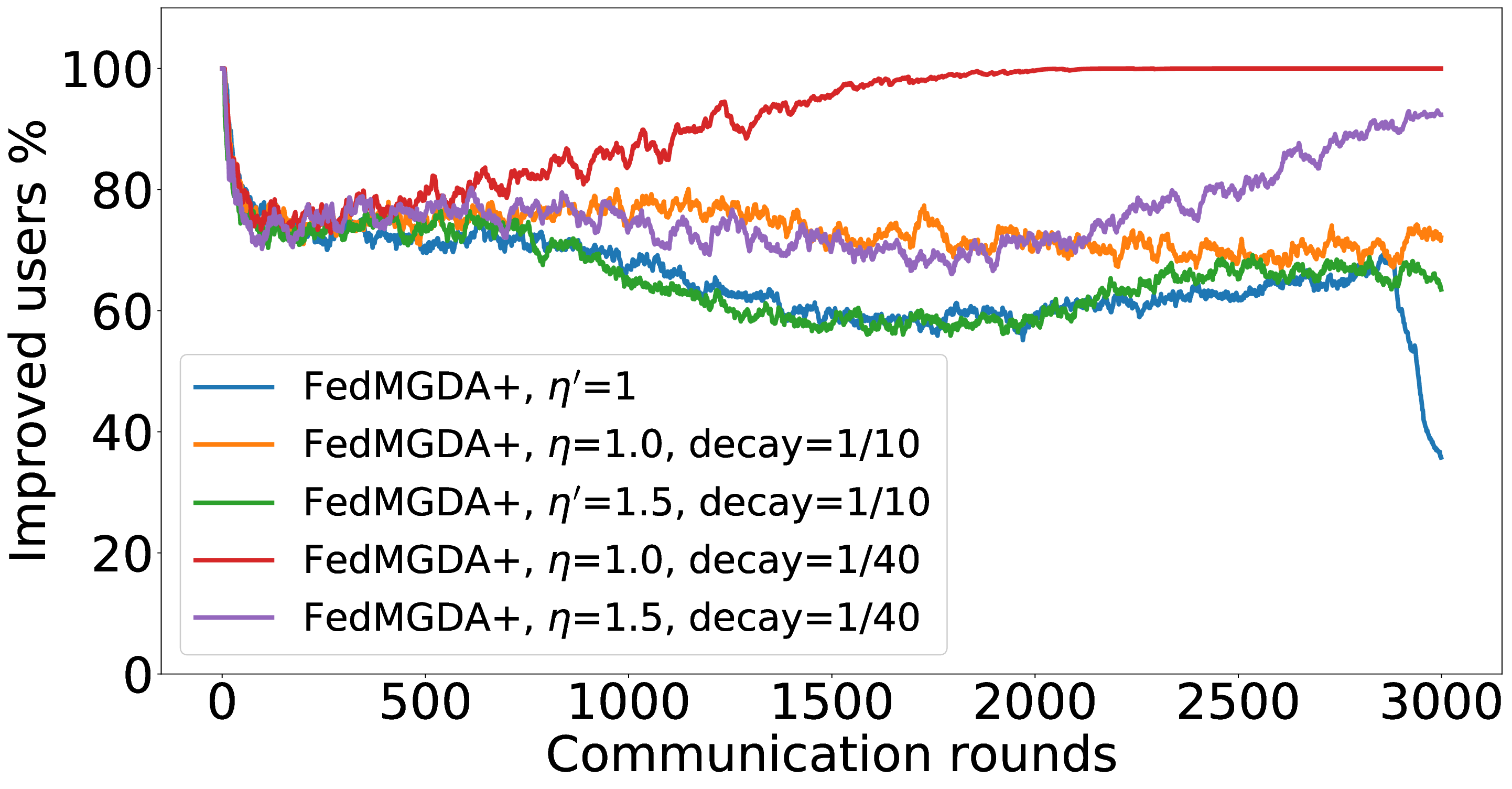}
\includegraphics[width=0.46\columnwidth]{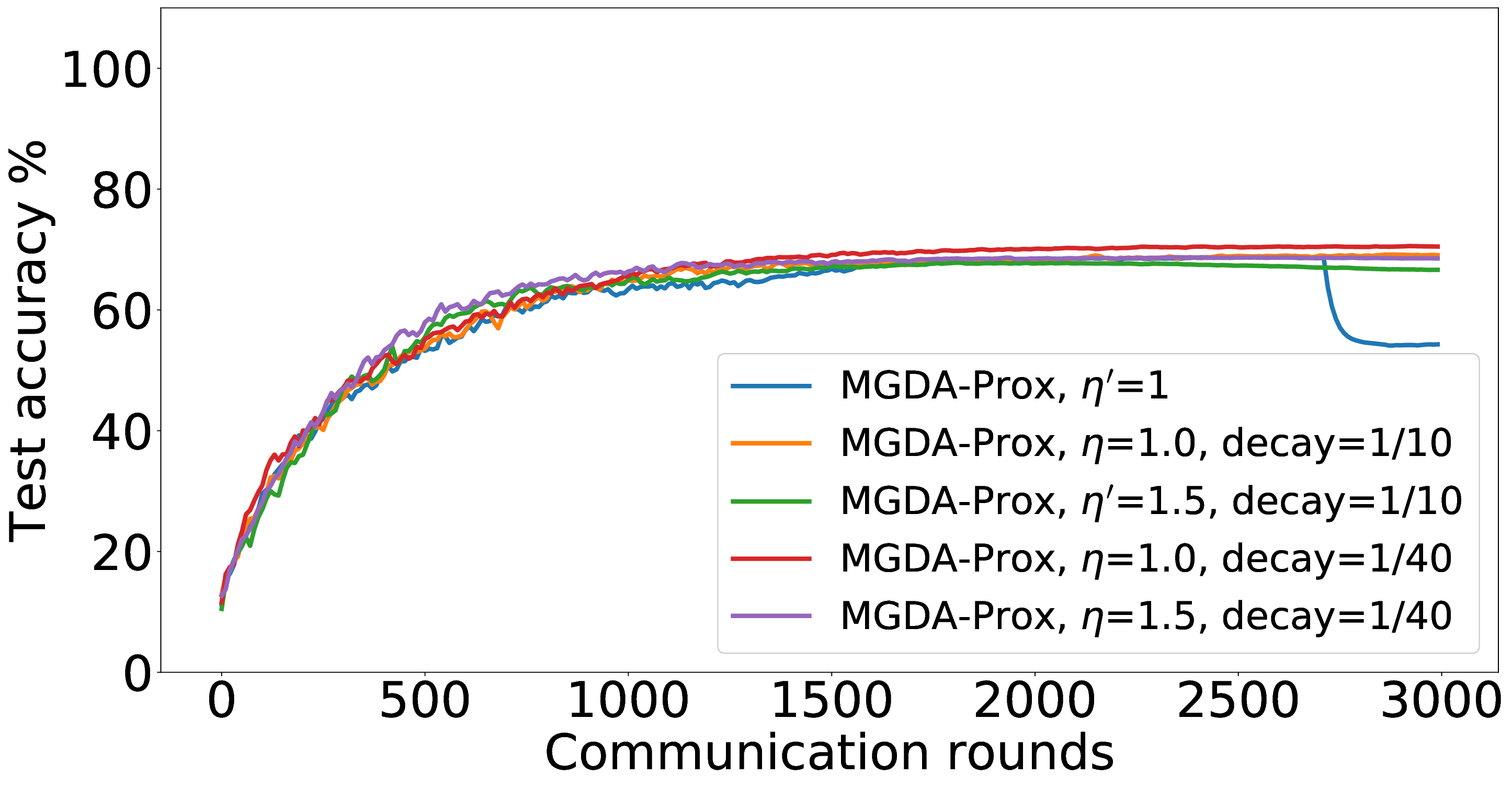}
\includegraphics[width=0.46\columnwidth]{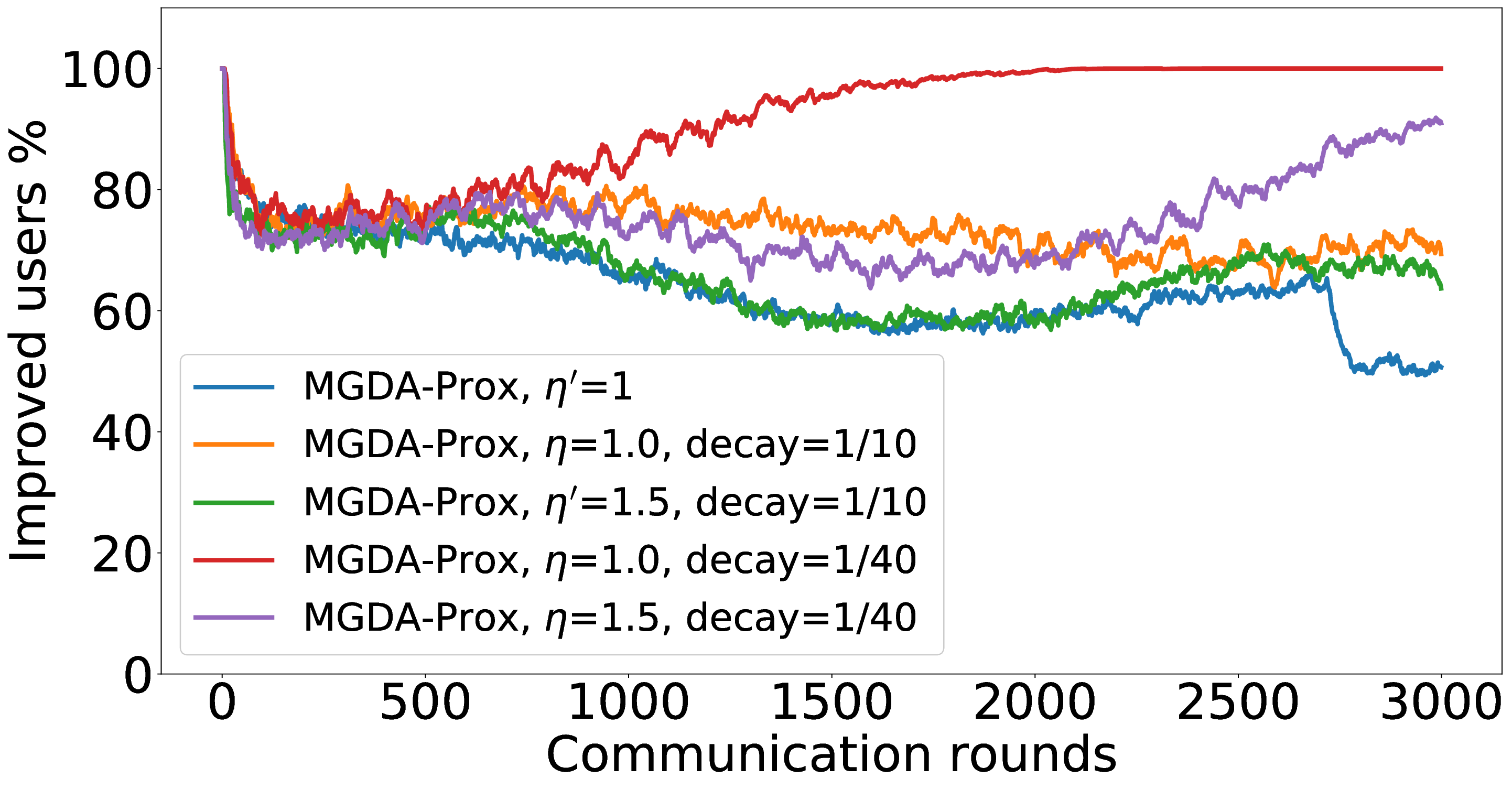}
\includegraphics[width=0.46\columnwidth]{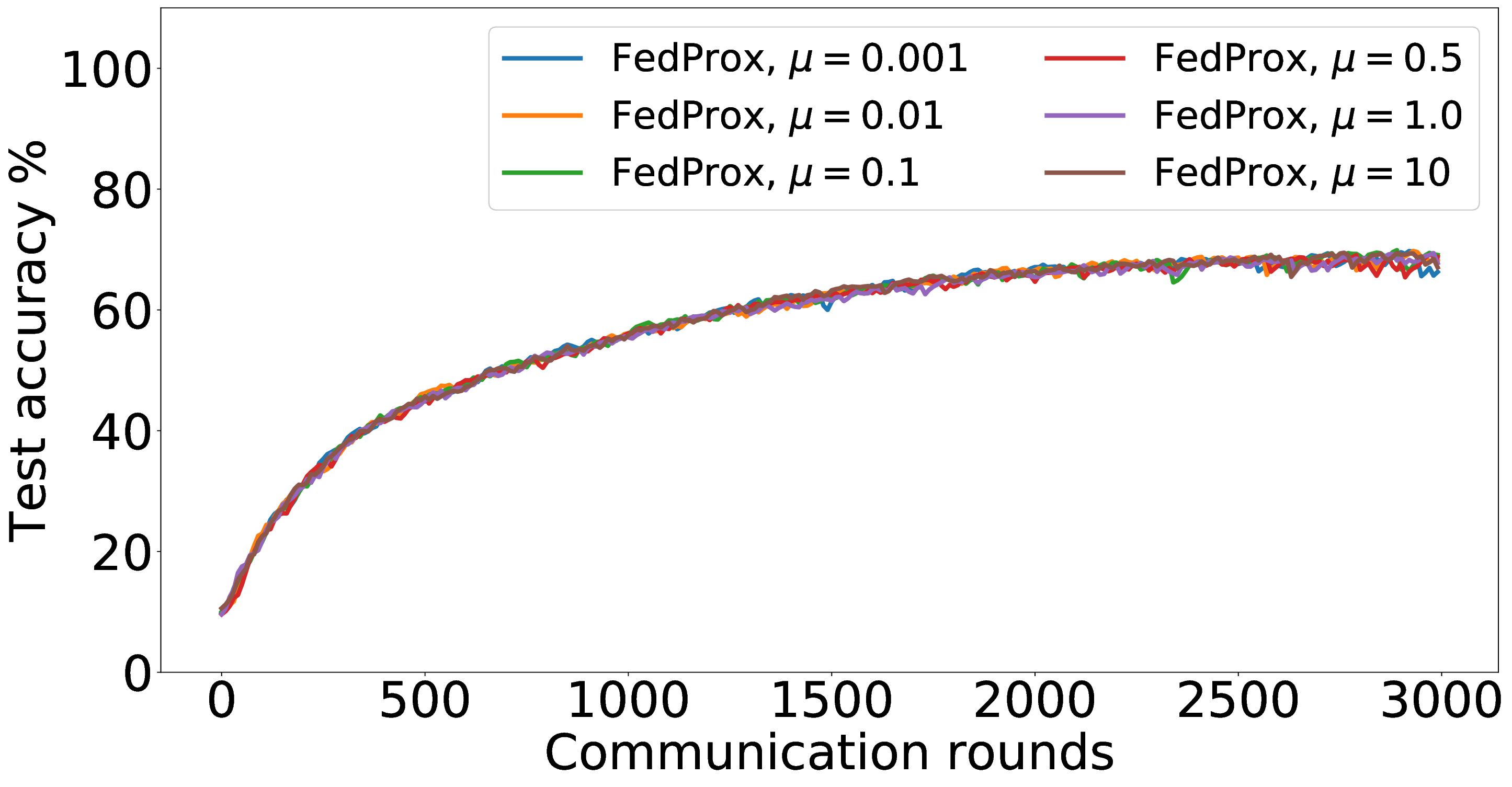}
\includegraphics[width=0.46\columnwidth]{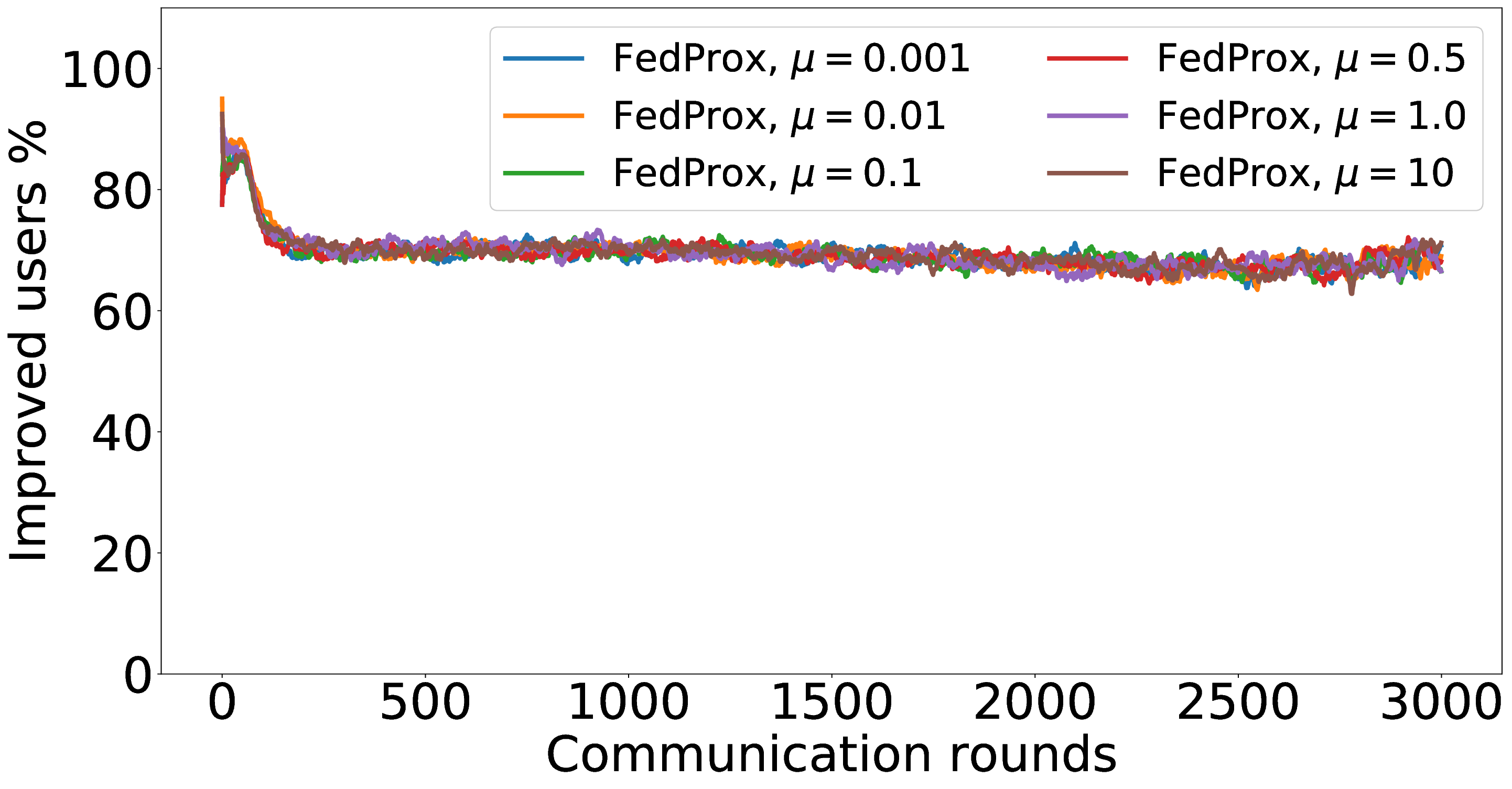}
\includegraphics[width=0.46\columnwidth]{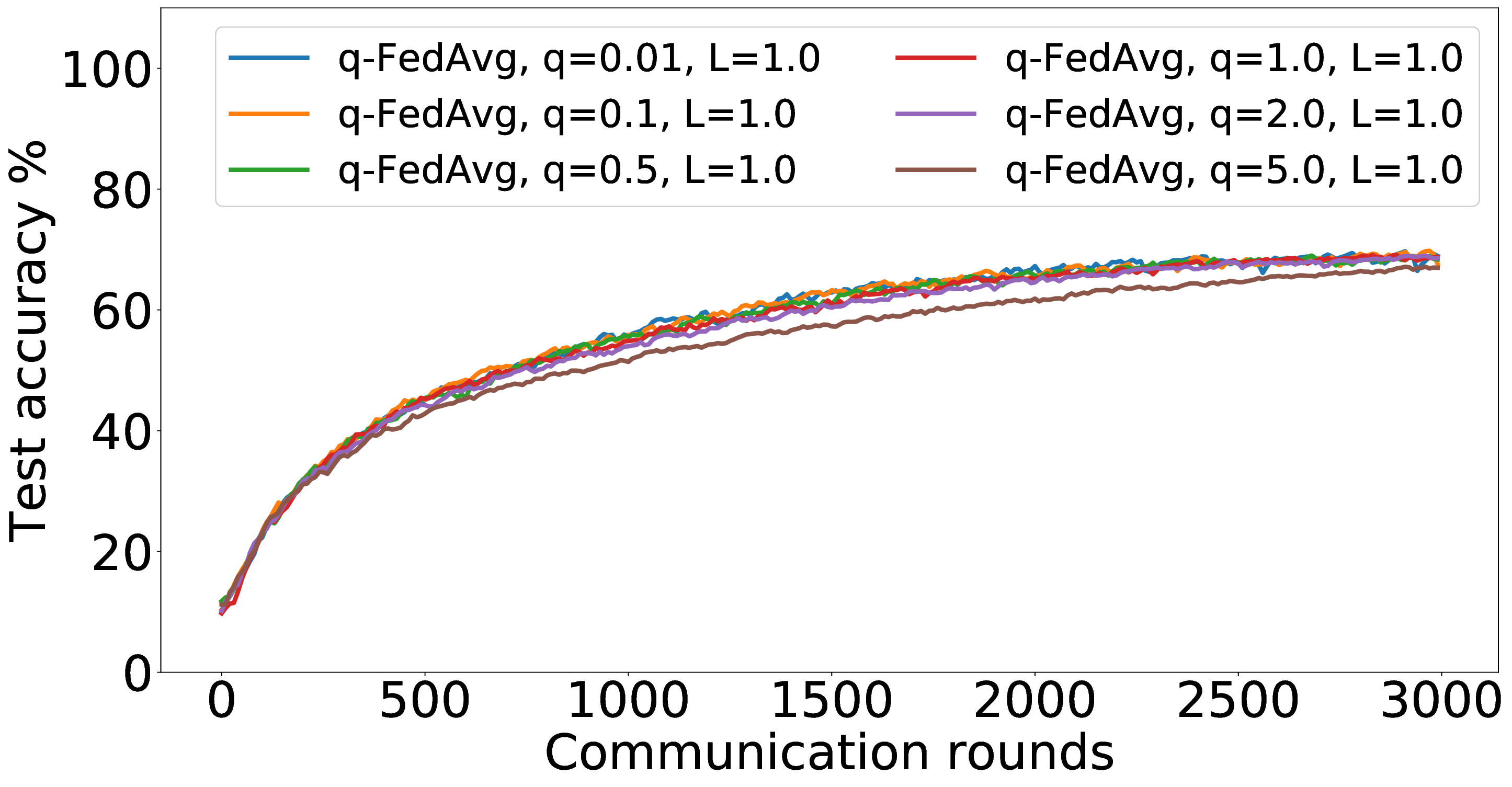}
\includegraphics[width=0.46\columnwidth]{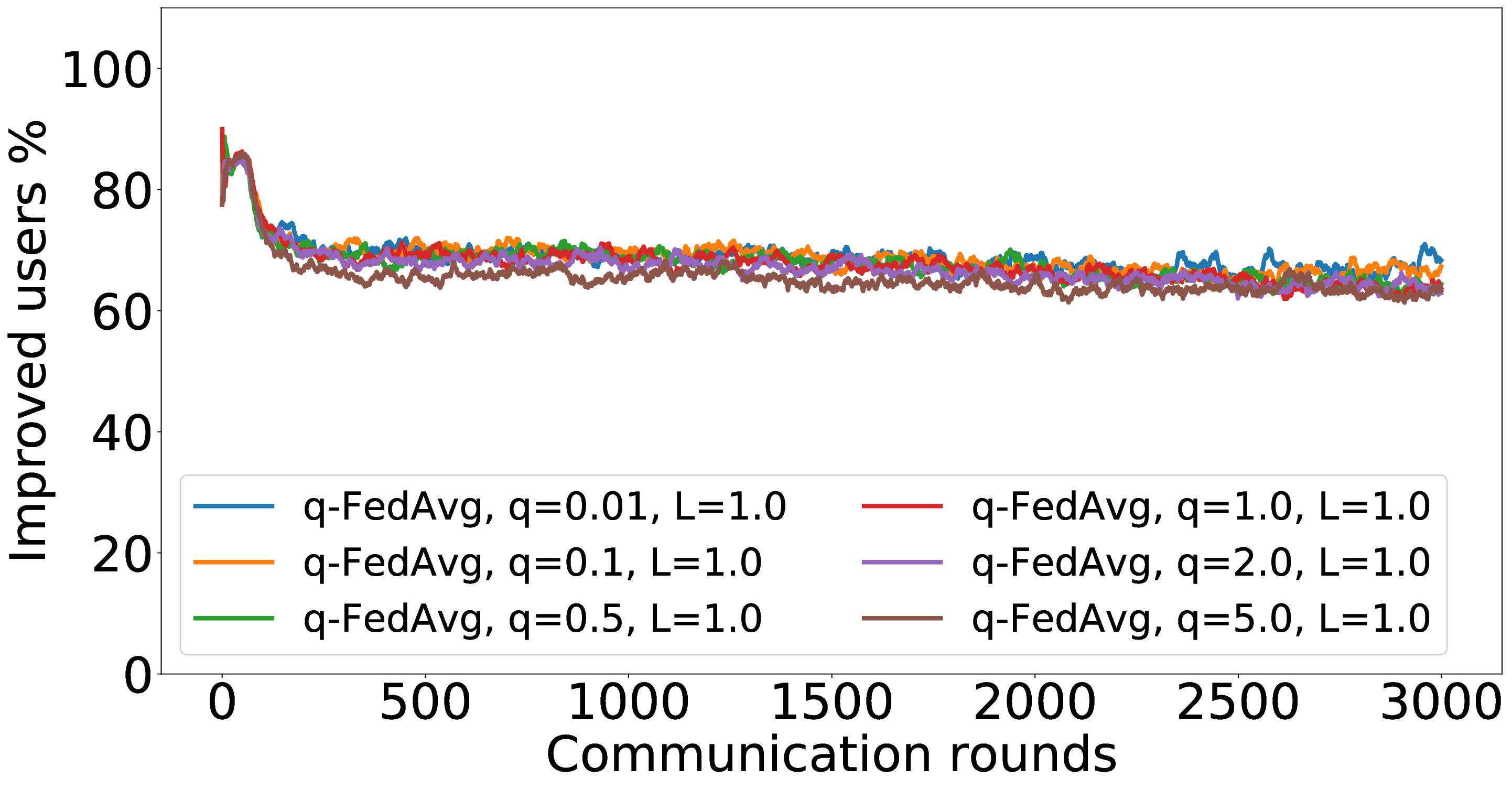}
\caption{The percentage of improved users in terms of training loss and the global test accuracy vs communication rounds on the CIFAR-10 dataset with $p=0.1$ and $b=400$. The results are averaged across $4$ runs with different random seeds.} 
\label{fig:cifar-percentage-2}
\end{figure}

\newpage
\subsection{Results: Shakespeare dataset}
\label{sec:shakespeare_results}
\begin{table}[hbt!]
\footnotesize
\centering
\caption{Test accuracies of users on Shakespeare, where Avg (client) represents average client accuracy while Avg (data) represents average accuracy w.r.t data points. Batch size $10$, full user participation, local learning rate $\eta=\{0.1.0.25,0.5,0.8,1\}$ ($\eta$ represents local learning rate; $\upeta$ represents global learning rate and is only relevant for algorithms with gradient normalization), total communication rounds $200$. \fedprox{}, $\mu=0.001$. The reported statistics are averaged across $4$ runs with different random seeds.
\label{table:shakespeare}}
\begin{tabular}{lll|llllll} \toprule
        \multicolumn{3}{c|}{Algorithm} &  Avg (client) (\%) &  Avg (data) (\%) &  Std. (\%)  & Worst $5$ (\%) & Best $5$ (\%) \\\midrule
        Name            & $\eta$   & momentum     &   \multicolumn{4}{c}{}\\\midrule
        \fedavg{}      & $0.25$     & $0.5$ & $45.63 \pm 0.52$       & $48.55 \pm 0.15$ & $5.67 \pm 0.91$ & $39.51 \pm 0.68$ & $50.37 \pm 0.29$\\
        \fedavg{}      & $0.5$     & $0.5$ & $47.17 \pm 0.50$     & $50.11 \pm 0.21$ & $6.22 \pm 0.43$ & $40.37 \pm 1.67$ & $51.89 \pm 0.37$ \\
        \fedavg{}      & $0.8$     & $0$  & $47.16 \pm 0.32$    & $49.95 \pm 0.21$ & $6.90 \pm 1.34$ & $41.04 \pm 0.70$ & $52.15 \pm 0.71$ \\
        \fedavg{}      & $0.8$     & $0.5$  & $46.73 \pm 0.21$    & $49.30 \pm 0.11$ & $6.15 \pm 0.57$ & $40.70 \pm 0.76$ & $51.51 \pm 1.08$ \\
        \fedavg{}      & $1.0$     & $0.5$  & $46.04 \pm 0.27$    &  $48.40 \pm 0.16$ & $6.12 \pm 0.52$ & $39.98 \pm 0.59$ & $50.87 \pm 0.60$ \\\midrule
        Name            & $\eta$     &  momentum         &   \multicolumn{4}{c}{}\\ \midrule
        \fedprox{}      & $0.25$     & $0.5$  & $45.72 \pm 0.50$      & $47.28 \pm 0.28$ & $5.75 \pm 0.36$ & $38.95 \pm 1.30$ & $51.00 \pm 1.01$\\
        \fedprox{}      & $0.5$     & $0.5$  & $44.89 \pm 0.48$    & $46.87 \pm 0.10$ & $6.78 \pm 0.54$ & $37.18 \pm 1.58$ & $50.36 \pm 1.29$ \\
        \fedprox{}      & $0.8$     & $0$  & $45.02 \pm 0.25$    & $47.38 \pm 0.20$ & $6.66 \pm 0.89$ & $38.73 \pm 1.93$ & $49.53 \pm 0.87$ \\
        \fedprox{}      & $0.8$     & $0.5$ & $44.52 \pm 0.32$     & $46.25 \pm 0.03$ & $6.08 \pm 0.23$ & $38.30 \pm 1.04$ & $49.02 \pm 1.02$ \\
        \fedprox{}      & $1.0$     & $0.5$ & $43.59 \pm 0.29$     &  $45.98 \pm 0.42$ & $5.99 \pm 0.84$ & $37.75 \pm 1.41$ & $48.41 \pm 0.33$ \\\midrule
        Name            & $\upeta$       & decay       &   \multicolumn{4}{c}{}\\ \midrule
        \fedMGDAn{}         & $10$     & $1$ & $44.29 \pm 0.64$    & $46.11 \pm 0.28$ & $6.88 \pm 0.65$ & $38.42 \pm 1.05$ & $49.82 \pm 0.79$\\
        \fedMGDAn{}         & $12.5$     & $1$ & $44.68 \pm 0.50$    &  $46.71 \pm 0.19$ & $6.53 \pm 0.99$ & $37.93 \pm 1.14$ & $50.36 \pm 1.12$\\ 
        \fedMGDAn{}         & $20$     & $0.8$ & $44.93 \pm 0.42$    &  $47.06 \pm 0.20$ & $8.36 \pm 0.44$ & $37.34 \pm 0.66$ & $52.01 \pm 0.79$\\ 
        \fedMGDAn{}         & $20$     & $1$ & $44.50 \pm 0.61$    &  $47.34 \pm 0.14$ & $8.26 \pm 0.26$ & $37.28 \pm 1.66$ & $51.41 \pm 1.71$\\ \midrule
        Name            & $\upeta$       & decay       &   \multicolumn{4}{c}{}\\ \midrule
        \fedavgn{}         & $2$     & $1$  & $44.32 \pm 0.19$   & $46.06 \pm 0.04$ & $5.73 \pm 0.40$ & $38.76 \pm 0.53$ & $48.79 \pm 0.75$\\
        \fedavgn{}         & $2.5$     & $1$ & $44.33 \pm 0.41$    & $46.53 \pm 0.24$ & $5.78 \pm 0.45$ & $39.09 \pm 0.79$ & $48.58 \pm 0.22$\\
        \fedavgn{}         & $5$     & $0.8$ & $46.07 \pm 0.25$    & $48.01 \pm 0.06$ & $5.39 \pm 0.41$ & $40.26 \pm 0.21$ & $50.68 \pm 0.22$\\
        \fedavgn{}         & $10$     & $0.8$ & $46.93 \pm 0.35$    & $49.01 \pm 0.10$ & $5.85 \pm 0.04$ & $41.00 \pm 0.82$ & $51.37 \pm 0.86$\\
        \bottomrule

\end{tabular}
\vspace{5pt}
\end{table}

\end{document}